\DeclareMathOperator{\Tr}{Tr}
\newcommand{\be}{\boldsymbol{e}}
\newcommand{\bg}{\boldsymbol{g}}
\newcommand{\bh}{\boldsymbol{h}}
\newcommand{\bx}{\boldsymbol{x}}
\newcommand{\bu}{\boldsymbol{u}}
\newcommand{\bA}{\boldsymbol{A}}
\newcommand{\bB}{\boldsymbol{B}}
\newcommand{\bC}{\boldsymbol{C}}
\newcommand{\bH}{\boldsymbol{H}}
\newcommand{\bI}{\boldsymbol{I}}
\newcommand{\bM}{\boldsymbol{M}}
\newcommand{\bS}{\boldsymbol{S}}
\newcommand{\bY}{\boldsymbol{Y}}
\newcommand{\bD}{\boldsymbol{D}}
\newcommand{\bw}{\boldsymbol{w}}
\newcommand{\bv}{\boldsymbol{v}}
\newcommand{\bSigma}{\boldsymbol{\Sigma}}
\newcommand{\btheta}{\boldsymbol{\theta}}
\newcommand{\argmin}{\mathop{\mathrm{argmin}}}
\newcommand{\field}[1]{\mathbb{#1}}
\newcommand{\R}{\field{R}}
\newcommand{\Var}{\mathrm{Var}}
\newcommand{\norm}[1]{\left\|{#1}\right\|}
\newcommand{\scO}{\mathcal{O}}
\DeclareMathOperator*{\Exp}{\mathbf{E}}
\DeclareMathOperator{\Wealth}{Wealth}
\newcommand{\indicator}{\mathbf{1}}
\renewcommand{\tilde}{\widetilde}
\newcommand{\inner}[1]{\left\langle#1\right\rangle}
\newcommand{\tmm}{{t-1}}
\newcommand{\tpp}{{t+1}}
\newcommand{\half}{\frac{1}{2}}
\newcommand{\zeros}{\mathbf{0}}
\newcommand{\cmp}{\bu}
\newcommand{\inv}{{-1}}
\newcommand{\sbrac}[1]{\left[#1\right]}
\newcommand{\brac}[1]{\left(#1\right)}
\newcommand{\grad}{\nabla}
\newcommand{\sumtT}{\sum_{t=1}^{T}}
\newcommand{\sumtTmm}{\sum_{t=1}^{T-1}}
\newcommand{\Log}[1]{\log\left(#1\right)}
\newcommand{\w}{\bw}
\newcommand{\wt}{\w_{t}}
\renewcommand{\hat}{\widehat}
\newcommand{\maxOp}{\vee}
\newcommand{\abs}[1]{\left|#1\right|}
\newcommand{\Set}[1]{\left\{#1\right\}}
\newcommand{\Max}[1]{\max\Set{#1}}
\newcommand{\minOp}{\wedge}
\newcommand{\eg}{\textit{e.g.}}
\newcommand{\ie}{\textit{i.e.}}
\newcommand{\cA}{\mathcal{A}}
\newcommand{\cH}{\mathbf{H}}
\newcommand{\cI}{\mathcal{I}}
\newcommand{\g}{\bg}
\newcommand{\gt}{\g_{t}}
\newcommand{\bDelta}{\boldsymbol{\Delta}}
\newcommand{\G}{\tilde{\g}}
\newcommand{\W}{\tilde{\w}}
\newcommand{\Cmp}{\tilde{\cmp}}
\newcommand{\Seq}{\mathrm{Seq}}
\newcommand{\bLambda}{\boldsymbol{\Lambda}}
\newcommand{\Gt}{\G_{t}}
\newcommand{\Wt}{\W_{t}}
\newcommand{\Ceil}[1]{\left\lceil#1\right\rceil}
\renewcommand{\Exp}[1]{\exp\brac{#1}}
\newcommand{\wrt}{\textit{w.r.t}}
\newcommand{\cM}{\mathbf{S}}
\newcommand{\cW}{\mathcal{W}}
\newcommand{\EE}[1]{\mathbb{E}\sbrac{#1}}
\newcommand{\e}{\mathbf{e}}
\newcommand{\Prob}[1]{\mathbb{P}\Set{#1}}
\newcommand{\ones}{\boldsymbol{1}}
\newcommand{\et}{\e_{t}}
\newcommand{\cE}{\mathcal{E}}
\newcommand{\sg}{g}
\newcommand{\sgt}{\sg_{t}}
\newcommand{\sw}{w}
\newcommand{\swt}{\sw_{t}}
\newcommand{\pmat}[1]{\begin{pmatrix}#1\end{pmatrix}}
\newcommand{\vecOp}{\text{vec}}
\newcommand{\Diag}[1]{\text{Diag}\brac{#1}}
\newcommand{\diffOp}{\boldsymbol{\Sigma}}
\newcommand{\diffOpScalar}{\Sigma}
\newcommand{\localP}{\bar{P}}
\newcommand{\cmpbar}{\bar{\cmp}}
\newcommand{\Inf}{\mathrm{Inf}}
\newcommand\SmallMatrix[1]{{%
  \tiny\arraycolsep=0.3\arraycolsep\ensuremath{\begin{pmatrix}#1\end{pmatrix}}}}
\newcommand{\SecLB}{Lower bounds for unconstrained dynamic regret}
\newcommand{\SecApplications}{Dynamic regret for unconstrained OLO via weighted norms}
\newcommand{\SecCoupling}{Recovering Variance-Variability Coupling Guarantees}
\newcommand{\SecSuperlinear}{Adapting to Squared Path-length Requires Superlinear Regret}
\newtheorem{theorem}{Theorem}
\newtheorem{remark}{Remark}
\newtheorem{definition}{Definition}
\definecolor{mydarkblue}{rgb}{0,0.08,0.45}
\title{An Equivalence Between Static and Dynamic Regret Minimization}
\author{%
  Andrew Jacobsen\thanks{Work done while visiting Optimal Lab at KAUST.}\\
  Universit\`{a} degli Studi di Milano\\
  Politecnico di Milano\\
  \texttt{contact@andrew-jacobsen.com} \\
  \And
  Francesco Orabona\\
  KAUST\\
  \texttt{francesco@orabona.com}
}
\date{}
\begin{document}

\maketitle

\begin{abstract}

We study the problem of dynamic regret minimization in online convex
optimization, in which the objective is to minimize the difference between the
cumulative loss of an algorithm and that of an arbitrary sequence of
comparators. While the literature on this topic is very rich, a unifying
framework for the analysis and design of these algorithms is still missing. In
this paper we show that \emph{for linear losses, dynamic regret minimization is
  equivalent to static regret minimization in an extended decision space}. Using
this simple observation, we show that there is a frontier of lower bounds
trading off penalties due to the variance of the losses and penalties due to
variability of the comparator sequence, and provide a framework for achieving
any of the guarantees along this frontier. As a result, we also prove for the first
time that adapting to the squared path-length of an arbitrary sequence of
comparators to achieve regret
$R_{T}(\cmp_{1},\dots,\cmp_{T})\le \scO(\sqrt{T\sum_{t} \|\cmp_{t}-\cmp_{t+1}\|^{2}})$
is impossible. However,
using our framework we introduce an alternative notion of
variability based on a locally-smoothed comparator
sequence $\bar \cmp_{1}, \dots, \bar \cmp_{T}$, and provide an algorithm guaranteeing dynamic regret of the form
$R_{T}(\cmp_{1},\dots,\cmp_{T})\le \tilde \scO(\sqrt{T\sum_{i}\|\bar \cmp_{i}-\bar \cmp_{i+1}\|^{2}})$, while still matching in the worst case the usual path-length dependencies up to polylogarithmic terms.
\end{abstract}


\section{Introduction}%
\label{sec:intro}

This paper introduces new techniques for \emph{Online Convex Optimization} (OCO),
a framework for designing and analyzing algorithms which
learn on-the-fly from a stream of data \citep{Gordon99b,zinkevich2003online,Cesa-BianchiL06,Orabona19, Cesa-BianchiO21}.
Formally, consider $T$ rounds of interaction between the learner and their
environment. In each round, the learner chooses $\wt\in \cW$ from
a convex feasible set $\cW\subseteq\R^{d}$, the environment reveals
a $G$-Lipschitz convex loss function $\ell_{t}:\cW\to\R$, and the learner
incurs a loss of $\ell_{t}(\wt)$.
The classic objective in this setting is to minimize the
learner's \emph{regret} relative to any fixed benchmark $\cmp\in \cW$:
\begin{equation*}
  R_{T}(\cmp) := \sumtT (\ell_{t}(\wt)-\ell_{t}(\cmp))~.
\end{equation*}
In this paper, we study the more general problem of minimizing the learner's
regret relative to any \emph{sequence}
of benchmarks $\cmp_{1},\dots,\cmp_{T}\in \cW$~\citep{HerbsterW98b,HerbsterW01}:
\begin{equation*}
  R_{T}(\cmp_{1},\dots,\cmp_{T})
  :=\sumtT (\ell_{t}(\wt)-\ell_{t}(\cmp_{t}))~.
\end{equation*}
This objective is typically referred to as \emph{dynamic}
regret, to distinguish it from the special case where
the comparator sequence is fixed $\cmp_{1}=\dots=\cmp_{T}$ (referred to as
\emph{static} regret).
We focus in particular on the special case of \emph{Online Linear Optimization} (OLO),
in which $\ell_{t}(\w)=\inner{\gt,\w}$ for some $\gt \in \R^d$. Note
that OCO problems can always be reduced to OLO via
the well-known inequality $\ell_{t}(\wt)-\ell_{t}(\cmp)\le \inner{\gt,\wt-\cmp}$
for $\gt\in\partial\ell_{t}(\wt)$, where $\partial \ell_{t}(\wt)$ is the
subdifferential set of $\ell_t$ at $\wt$  \citep[see,
\textit{e.g.},][]{shalev2011online}, so throughout this paper we
will focus on the OLO setting.

Intuitively, if the sequence of comparators $\cmp_1, \dots, \cmp_T$ varies
too much, it should be impossible to achieve low dynamic regret. %
On the other hand, we know it is possible to achieve sublinear regret if the
sequence of comparators is constant, \emph{i.e.}, $\cmp_1=\dots=\cmp_T$, because
this is simply the static case.
Hence, we need a way to quantify the complexity, or \emph{variability}, of the
comparator sequence. The most commonly used notion of complexity in this regard is the
\emph{path-length} of the comparator sequence \citep{HerbsterW98b,HerbsterW01}, defined as
\[
  P^{\|\cdot\|}_T 
  := \sum_{t=2}^T \|\cmp_t-\cmp_{t-1}\|~.
\]
It is possible to show that Online Gradient Descent has a dynamic regret of $\scO((D+P_T^{\|\cdot\|}) G \sqrt{T})$ in \emph{bounded} domains, where $D$ is an upper bound on the diameter of the feasible set and $G$ is the Lipschitz constant of the losses~\citep{zinkevich2003online}.
This bound was improved to $\scO(\sqrt{D P_T^{\|\cdot\|}} G \sqrt{T})$ and shown to be minimax optimal by \citet{zhang2018adaptive}.

Notice that the path-length bounds scale with a rather pessimistic constant of $D=\sup_{w,w'\in\cW}\norm{\w-\w'}$.
A better bound would instead scale with the \emph{squared} path-length:
\[
P^{\|\cdot\|^2}_T 
:= \sum_{t=1}^{T-1} \|\cmp_t-\cmp_{t-1}\|^2,\label{eq:sqr-pl}
\]
which can be significantly smaller\footnote{Note that the
  bound of
  \citet{zhang2018adaptive} trivially implies a squared path-length dependence
  since $P_{T}^{\norm{\cdot}}\le \sqrt{TP_{T}^{\norm{\cdot}^{2}}}$, so
  one can obtain a bound of
  $\scO(\sqrt{DP_{T}^{\norm{\cdot}}}G\sqrt{T})\le \scO(D^{1/2}(P_{T}^{\norm{\cdot}^{2}})^{1/4}GT^{3/4})$.
  However, this
  bound is not interesting because it does not remove the dependence on $D$ and
  it is never better than the existing
  $\sqrt{DP_{T}^{\norm{\cdot}}}G\sqrt{T}$ bound.} than the penalty in the bound above:
$P_{T}^{\|\cdot\|^{2}}\le  D P_{T}^{\|\cdot\|}$.
However, guarantees scaling with $P_{T}^{\norm{\cdot}^{2}}$ are not well
understood in general compared with the more common $P_{T}^{\norm{\cdot}}$
bounds, and have only been obtained by restricting
the comparator sequence to $\cmp_{t}=\argmin_{\w\in\cW} \ \ell_{t}(\w)$
or under additional assumptions such as strong-convexity
\citep{yang2016tracking,zhang2017improved,chang2021online}.

In this paper, we focus on the challenging case that the domain is \emph{unbounded}, where recent works have achieved the dynamic regret 
$\tilde\scO\big(\sqrt{\max_{t,t'}\norm{\cmp_{t}-\cmp_{t'}}P_{T}^{\|\cdot\|}T}\big)$ in the
worst case
\citep{jacobsen2022parameter,luo2022corralling,jacobsen2023unconstrained,zhang2023unconstrained}.
Of particular interest, \citet{jacobsen2022parameter,zhang2023unconstrained}
achieve bounds of the form
\begin{align}
  R_{T}(\cmp_{1},\dots,\cmp_{T})\le \tilde \scO\brac{\sqrt{P_{T}^{\norm{\cdot}}{\textstyle\sumtT} \norm{\gt}^{2}\norm{\cmp_{t}-\bar{\cmp}}}},\label{eq:intro:coupled-regret}
\end{align}
which avoids the pessimistic multiplicative penalty of
$\max_{t,t'}\norm{\cmp_{t}-\cmp_{t'}}$, but results in a coupling
between the gradient and variability
penalties. It is unclear if it is possible to obtain
a guarantee which cleanly separates the variability and
variance penalties, to achieve dynamic regret scaling as
$R_{T}(\cmp_{1},\ldots,\cmp_{T})\le\scO\brac{\sqrt{P_{T}^{\norm{\cdot}^{2}}\sumtT\norm{\gt}^{2}}}$.
In fact, it is not clear
in general how to reason about potential trade-offs that may result
from adapting to different measures of variability.

\paragraph{Contributions.}
In this paper, we show how to reformulate the dynamic regret miniminization
problem as an \emph{equivalent} static regret problem (\Cref{sec:reduction}). This equivalence allows us to use results for the static regret setting to prove both upper and lower bounds for dynamic regret.

In our first application of this equivalence, we show that the ideal
guarantee scaling the with squared path-length
$\scO\big(\sqrt{P_{T}^{\|\cdot\|^{2}}\sumtT \norm{\gt}^{2}}\big)$ is \textbf{not}
possible in general (\Cref{sec:lb}).
We do this by proving a novel lower bound showing that there is a
fundamental trade-off between the penalties incurred due to comparator
variability and penalties incurred due to loss variance, leading to a
new frontier of dynamic regret lower bounds.

Our second application is to provide a
framework for achieving any of the variance/variability trade-offs along the lower bound frontier, up to
polylogarithmic terms (\Cref{sec:applications}).
Our framework allows us to develop dynamic regret algorithms by simply choosing
suitable dual-norm pairs $(\norm{\cdot},\norm{\cdot}_{*})$ in the static regret problem.
Along with our matching lower bound, this framework provides a concrete way to
reason about different measures of comparator variability and the trade-offs
they entail, and to design algorithms achieving those trade-offs.

While our lower bound demonstrates that the ideal squared path-length guarantee cannot be achieved,
using our framework we show that it is possible to
achieve an alternative guarantee that scales with
\begin{align*}
  \bar{P}^{\norm{\cdot}^{2}}(\cmp_{1},\ldots,\cmp_{T})\approx \sum_{i}\norm{\cmpbar_{i}^{(\tau)}-\cmpbar_{i+1}^{(\tau)}}^{2}_{2},
\end{align*}
where $\bar \cmp_{i}^{(\tau)}$ is a \emph{local average} of the comparator sequence at
a timescale of $\tau$ (see \Cref{sec:haar}). Similar to $P_{T}^{\|\cdot\|^{2}}$, this variability measure
maintains the property that it matches the worst-case guarantees based on path-length
up to polylogarithmic terms, \ie{}, $\bar P_{T}^{\|\cdot\|^{2}}\le \tilde \scO(\max_{t,t'}\norm{\cmp_{t}-\cmp_{t'}}P_{T}^{\|\cdot\|})$.
These are the first guarantees for general OCO that fully decouple the variance and variability
penalties for dynamic regret without explicitly incurring
pessimistic $\max_{t,t'}\norm{\cmp_{t}-\cmp_{t'}}$ penalties.

\paragraph{Related Work.}

Our approach is inspired by the Haar OLR algorithm of
\citet{zhang2023unconstrained}. In that work,
dynamic regret is approached by interpreting
the comparator sequence as a high-dimensional ``signal''
which is decomposed into a frequency domain
representation using a dictionary of features.
Then, for each feature vector in the dictionary,
a 1-dimensional parameter-free~\citep{orabona2016coin, MhammediK20} algorithm is used
to learn how well that feature correlates with the losses.
This allows one to compete with an arbitrary comparator sequence,
so long as it can be represented in terms of the chosen dictionary of features.
We take a similar but slightly more general approach. Our framework
also represents the comparator sequence as a high-dimensional signal,
but we instead use this signal to define an \emph{equivalent static regret problem},
a perspective that allows us design algorithms for dynamic regret
by simply choosing suitable dual-norm pairs.

Other prior works in the general OCO setting have also
studied various alternative forms of variability
such as the temporal variability $\sumtTmm\sup_{\bw\in\cW}\abs{\ell_{t}(\bw)-\ell_{\tpp}(\bw)}$
\citep{besbes2015nonstationary,jadbabaie2015online,campolongo2021closer} or
deviation of the comparator from a given dynamical model
$\sumtTmm\norm{\cmp_{t}-\Phi_{t}(\cmp_{\tmm})}$ \citep{hall2016online}.
Alternative variance penalties have also been studied in the dynamic setting, such as the small-loss
penalties $\sumtT \ell_{t}(\cmp_{t})$ or gradient variation penalties
$\sumtT \sup_{\bw\in\cW}\norm{\grad\ell_{t}(\bw)-\grad\ell_{\tpp}(\bw)}$ \citep{gyorgy2016shifting,zhao2022efficient,jacobsen2023unconstrained,zhao2024adaptivity}. It is also possible to achieve a smaller regret with stronger assumptions on the losses~\citep{baby2021optimal}.
It is important to note however that almost all prior works, with the exception of \citet{jacobsen2022parameter,luo2022corralling} and \citet{zhang2023unconstrained}, study dynamic regret only in the easier bounded domain setting.

There is also an often ignored connection between measures of comparator
variability and the function classes studied in non-parametric regression. In
particular, considering the case that the losses are $\ell_t(x) = (x-u_t)^2$, then
the sequence of comparators $u_1, \dots, u_T$ with bounded path length $C_T$ and
bounded squared path length $(C'_T)^{2}$ corresponds to the sequence with discrete
total variation bounded by $C_T$ and the discrete Sobolev class with bound $C'_T$, respectively. In this setting, the minimax rates are known \cite{KoolenMBAY15,SadhanalaWT16}
and \citet{KoolenMBAY15} obtain the minimax regret for the Sobolev classes, while \citet{BabyW19} for both classes with slightly stronger assumptions. However, these results are not directly related to this paper because we consider linear losses.


\paragraph{Notations.}
We will use the following definitions and notations. The elements of a matrix $\bA \in \R^{n \times m}$ are denoted by $A_{ij}$ for $i=1,\dots,n$ and $j=1, \dots,m$. Similarly, the elements of a vector $\bu \in \R^d$ are $u_i$ for $i=1, \dots,d$.
The Kronecker product of
matrices $\bA\in\R^{m\times n}$ and $\bB\in\R^{p\times q}$ is the block matrix
defined by
\begin{align*}
  \bA\otimes \bB 
  := \SmallMatrix{A_{1,1}\bB& \dots &A_{1,n}\bB\\
  \vdots&\ddots&\vdots\\
  A_{m,1}\bB&\dots&A_{m,n}\bB
  }~.
\end{align*}
We let $\e_{t}$ denote the $t^{\text{th}}$ standard basis vector of $\R^{T}$ and
$\bI_{d}$ is the $d\times d$ identity matrix. For a square matrix $\bA$, $\Diag{\bA}$ is the diagonal matrix that contains the elements of the diagonal of $\bA$.
For a positive definite matrix $\bM$, we define the weighted norm $\norm{\bx}_{\bM}:=\sqrt{\inner{\bx,\bM \bx}}$. For a matrix $\bA \in \R^{m \times n}$, we denote its Frobenius norm by $\|\bA\|_F:= \sqrt{\sum_{i=1}^m \sum_{j=1}^n A^2_{i,j}}$. The $\vecOp$ operator is the mapping defined by stacking the columns of a matrix $\bA$ in a vector.
We will denote by $\|\bA\|_{p,p}$ the entry-wise $p$-norm of $\bA$, \textit{i.e.}, $\|\bA\|_{p,p}:=\|\vecOp(\bA)\|_p$.


\section{A dynamic-to-static reduction}
\label{sec:reduction}

In this section, we present a general reduction from
dynamic regret to static regret.
The key idea is
to embed the comparator
sequence in a high dimensional space $\cW^{T}$, where $T$ is the number of rounds, so that
competing with a \emph{fixed} comparator $\Cmp\in \cW^{T}$ in this
high-dimensional space is equivalent to
competing with a \emph{sequence} of comparators in the
original space $\cW$.
In this way, we can reduce the problem of minimizing the dynamic regret to the one of minimizing the static regret. 

\SetAlCapHSkip{0.5em}
\setlength{\algomargin}{0.5em}
\begin{algorithm}[t!]
  \SetAlgoLined
  \KwInput{Domain $\cW\subseteq\R^{d}$, Online Learning Algorithm $\cA$ with
    domain $\cW^{T}$}\\
  \For{$t=1:T$}{
    Get $\Wt=(\wt^{(1)},\ldots,\wt^{(T)})\in\cW^{T}$ from $\cA$\\
    Play $\wt = \wt^{(t)}\in\cW$ and observe $\gt\in\partial\ell_{t}(\wt)$\\
    Pass $\Gt=\et \otimes \gt = (\zeros_d^\top,\dots,\zeros_d^\top,\underbrace{\gt^\top}_{\mathclap{\text{indices }i\in[d(t-1)+1,dt]}},\zeros_d^\top,\dots)^\top\in\cW^{T}$ to $\cA$\\
  }
  \caption{Dynamic-to-Static Reduction}
  \label{alg:redux}
\end{algorithm}

Our reduction is shown in \Cref{alg:redux}.
We simply embed the linear losses $\gt$ in a high-dimensional
space by setting
\begin{align}
  \Gt &=
  \et\otimes \gt = (\zeros_{d}^\top, \dots,\zeros_{d}^\top,\underbrace{\gt^{\top}}_{\mathclap{\text{Indices }\in [d(t-1)+1, dt]}},\zeros_{d}^\top,\dots,\zeros_{d}^\top)^{\top},\label{eq:Gt}
\end{align}
where $\et\in\R^{T}$ is the $t^{\text{th}}$ standard basis vector of $\R^{T}$ and
$\zeros_{d}\in\R^{d}$ denotes the vector of zeros. We pass these losses to the
online learning algorithm $\cA$, which predicts with a vector $\Wt\in\cW^{T}$.
Finally, we set $\bw_t\in\R^{d}$ equal to the $t^{\text{th}}$ ``component'' of
$\Wt$, and play $\wt$.

We show that the dynamic regret of the resulting algorithm will be equal to the static regret of the algorithm $\cA$.
In particular, 
for any sequence $\vec{\cmp}=(\cmp_{1},\dots,\cmp_{T})$ in $\cW$ we will
denote the concatenation of $\vec{\cmp}$ into a single vector in $\cW^{T}$ as
\begin{align}
  \Cmp = {\textstyle\sumtT }\et\otimes\cmp_{t}%
  =(\cmp_{1}^\top, \dots, \cmp_{T}^\top)^\top~.\label{eq:Cmp}
\end{align}
Then, the following proposition shows that the dynamic regret of \Cref{alg:redux} \wrt{} any sequence $\vec{\cmp}=(\cmp_{1},\ldots,\cmp_{T})$ is \emph{equal} to the static regret of $\cA$ \wrt{} $\Cmp$.
\begin{restatable}{proposition}{DynamicToStatic}\label{prop:dynamic-to-static}
  Let $\cW\subseteq\R^{d}$ and let $\cA$ be an online learning algorithm with
  domain $\cW^{T}$.
  Then, for any sequence $\vec{\cmp}=(\cmp_{1},\ldots,\cmp_{T})\in\cW^{T}$,
  \Cref{alg:redux} guarantees
  \begin{align*}
    R_{T}(\vec{\cmp})=\sumtT \inner{\gt,\wt-\cmp_{t}} = \sumtT \inner{\Gt,\Wt-\Cmp} =: R_{T}^{\Seq}(\Cmp)~.
  \end{align*}
\end{restatable}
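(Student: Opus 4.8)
The plan is to verify the two claimed equalities by expanding the inner products in the high-dimensional space $\cW^T$ and using the block structure of the embedding. The identity $R_T(\vec\cmp) = \sumtT \inner{\gt, \wt - \cmp_t}$ is just the definition of dynamic regret after the standard reduction to OLO (it uses $\wt = \wt^{(t)}$ from \Cref{alg:redux}), so there is nothing to prove there. The real content is the middle equality $\sumtT \inner{\gt, \wt - \cmp_t} = \sumtT \inner{\Gt, \Wt - \Cmp}$.

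First I would compute a single term $\inner{\Gt, \Wt}$. Since $\Gt = \et \otimes \gt$, the vector $\Gt \in \R^{dT}$ is zero everywhere except in the block of coordinates indexed by $[d(t-1)+1, dt]$, where it equals $\gt$. Writing $\Wt = (\wt^{(1)\top}, \dots, \wt^{(T)\top})^\top$, the inner product picks out exactly the $t$-th block, so $\inner{\Gt, \Wt} = \inner{\gt, \wt^{(t)}} = \inner{\gt, \wt}$ by the definition of $\wt$ in the algorithm. Identically, using $\Cmp = \sumtT \et \otimes \cmp_t = (\cmp_1^\top, \dots, \cmp_T^\top)^\top$ from \eqref{eq:Cmp}, we get $\inner{\Gt, \Cmp} = \inner{\gt, \cmp_t}$ because again only the $t$-th block of $\Gt$ is nonzero and it meets the $t$-th block $\cmp_t$ of $\Cmp$. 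Subtracting gives $\inner{\Gt, \Wt - \Cmp} = \inner{\gt, \wt - \cmp_t}$ for every $t$, and summing over $t$ yields the claim. The last equality, $\sumtT \inner{\Gt, \Wt - \Cmp} =: R_T^{\Seq}(\Cmp)$, is a definition of the static regret of $\cA$ run on the loss sequence $\Gt$ against the fixed comparator $\Cmp$, so it holds by convention.

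Concretely, the one-line justification I would record is the mixed-product / orthogonality property of the Kronecker embedding: $\inner{\e_s \otimes \ba,\ \e_t \otimes \bb} = \inner{\e_s, \e_t}\inner{\ba, \bb} = \indicator\{s=t\}\inner{\ba,\bb}$. Applying this with $\Cmp$ expanded as $\sum_{s} \e_s \otimes \cmp_s$ makes the cross terms ($s \neq t$) vanish and leaves $\inner{\Gt, \Cmp} = \inner{\gt, \cmp_t}$; the same identity with $\Wt$ (or directly from the sparsity pattern of $\Gt$) handles $\inner{\Gt, \Wt}$.

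There is essentially no obstacle here — the proposition is a bookkeeping lemma and the only thing to be careful about is indexing: confirming that the nonzero block of $\et \otimes \gt$ is coordinates $d(t-1)+1$ through $dt$, that this aligns with the $t$-th sub-vector $\wt^{(t)}$ of $\Wt$ and with $\cmp_t$ in $\Cmp$, and that $\wt$ played by \Cref{alg:redux} is exactly $\wt^{(t)}$. Once the alignment is fixed, the two equalities are immediate from the sparsity of $\Gt$.
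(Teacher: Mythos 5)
Your proposal is correct and follows essentially the same bookkeeping argument as the paper: both rest on the observation that $\inner{\e_s\otimes\ba,\e_t\otimes\bb}=\indicator\{s=t\}\inner{\ba,\bb}$, so the block structure of $\Gt$ makes the high-dimensional inner products collapse to the original ones. If anything, your term-by-term treatment of $\inner{\Gt,\Wt}=\inner{\gt,\wt^{(t)}}$ is slightly more explicit than the paper's, which handles the algorithm's cumulative loss with a ``similar relationship'' remark (note that $\Wt$ varies with $t$, so the sum-first concatenation identity used for the comparator does not literally apply there).
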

\begin{proof}
  The proof is immediate from \Cref{eq:Gt,eq:Cmp}.
  In fact, observe that the cumulative loss of the comparator sequence
  is precisely
  \begin{align*}
    \sumtT \inner{\gt,\cmp_{t}}= \inner{\SmallMatrix{\g_{1}\\\vdots\\\g_{T}},\SmallMatrix{\cmp_{1}\\\vdots\\\cmp_{T}}}
    =\inner{\sumtT \et\otimes\gt, \sumtT \et\otimes\cmp_{t}}=\inner{\sumtT \Gt,\Cmp}~.
  \end{align*}
  We get a similar relationship for the algorithm's cumulative loss.
  Hence,
  we have
  $R_{T}(\vec{\cmp})=\sumtT\inner{\gt,\wt-\cmp_{t}}=\sumtT \inner{\Gt,\Wt-\Cmp}=R_{T}^{\Seq}(\Cmp)$.
\end{proof}
\begin{remark}
  It is important to note that the regret equivalence holds in the
  context of
  \textbf{linear losses}. However, our reduction can still be leveraged
  for arbitrary convex losses by first applying the standard reduction
  to OLO:
  $\sumtT \ell_{t}(\wt)-\ell_{t}(\cmp_{t})\le \sumtT \inner{\gt,\wt-\cmp_{t}}=R_{T}^{\Seq}(\Cmp)$
  for $\gt\in\partial\ell_{t}(\wt)$.
\end{remark}

While our reduction is exceptionally simple, its utility should not be understated.
\Cref{prop:dynamic-to-static} is a regret \emph{equivalence} --- we lose nothing
by taking this perspective, yet it allows us to immediately apply all the usual techniques
and approaches from the static regret setting. 
For instance,
given any dual norm pair $(\norm{\cdot}, \norm{\cdot}_{*})$, it is well-understood how to develop algorithms which adapt
simultaneously
to the comparator norm $\norm{\Cmp}$ and to the gradient variance
$\sumtT \norm{\Gt}^{2}_{*}$
to guarantee
\begin{align*}
  R_{T}(\cmp_{1},\dots,\cmp_{T})=R_{T}^{\Seq}(\Cmp)\le \tilde\scO\brac{\norm{\Cmp}\sqrt{\textstyle{\sumtT} \norm{\Gt}^{2}_{*}}}.
\end{align*}
Such algorithms are commonly referred to as ``parameter-free'', or ``comparator
adaptive'', because they  achieve this adaptation by completely removing
the parameter that depends on the unknown comparator $\Cmp$~\citep[\emph{e.g.},][]{mcmahan2012noregret,mcmahan2014unconstrained,orabona2016coin,cutkosky2018black,foster2018online,jacobsen2022parameter,ChenCO22,jacobsen2023unconstrained,zhang2023improving}.
In this way, we have effectively reduced the problem of minimizing dynamic regret to
the problem of selecting a dual-norm pair
$(\norm{\cdot},\norm{\cdot}_{*})$ that meaningfully measures the
``difficulty'' of the sequence in $\Cmp$ and the losses $\Gt$.
In particular, $(\norm{\cdot},\norm{\cdot}_{*})$
should be chosen with the following
considerations in mind:
\begin{enumerate}
        \item $\norm{\Cmp}$ should produce a meaningful measure of variability
        of the comparator sequence $\cmp_{1},\ldots,\cmp_{T}$.
        For instance,
        we will show in \Cref{prop:sqr-trade-off} that
        the squared path-length
        arises from a particular weighted norm applied to $\Cmp$.
        \item $\norm{\Gt}_{*}$ should not ``blow up''. Ideally
        $\norm{\Gt}_{*}$ should match the magnitude of the true losses
        $\gt$ up to polylog factors.
  \item $(\norm{\cdot},\norm{\cdot}_{*})$ should be chosen with computational
        considerations in mind. For instance,
        to apply an FTRL-based algorithm to the losses $\Gt\in\R^{dT}$,
        efficient implementation will typically require
        $\norm{\cdot}_{*}$ to have sparse subgradients.
        In general, an ideal dual-norm pair should facilitate
        updating only
        $\scO(\log T)$ variables at a time, so as to match the $\scO(d\log T )$
        per-step computation enjoyed by existing dynamic regret algorithms.
        We will see one such example in \Cref{sec:haar}.
\end{enumerate}
In the next section, we show that there is in fact a fundamental
trade-off between the penalties induced by the
dual-norm pair $(\norm{\cdot},\norm{\cdot}_{*})$, creating a tension between
the first two considerations.


\section{\SecLB}
\label{sec:lb}

In the static regret setting, there is a well-known trade-off between the way in which we measure the complexity of the comparator $\cmp$ and the way in which we measure the complexity of the linear losses $\gt$. For example, in Online Mirror Descent~\citep{NemirovskijY83,WarmuthJ97} one can get a regret guarantee that depends on the maximum diameter of the feasible set with respect to a norm $\|\cdot\|$, while the linear losses are measured using the dual norm $\|\cdot\|_*$.
The equivalence in \Cref{prop:dynamic-to-static} suggests that  a similar tension exists for the dynamic regret.

Given the structure of our reduction, it makes sense to focus on the weighted norms $\norm{\cdot}_{\bM}$ and $\norm{\cdot}_{\bM^\inv}$, where $\bM$ is a
symmetric positive definite matrix.
In particular, the next theorem shows that there is a fundamental trade-off between a
\emph{variability penalty} $\norm{\Cmp}_{\bM}$ and a \emph{variance penalty}
$G^{2}\Tr(\bM^{\inv})$ related to the losses.
The proof is provided in \Cref{app:pf-lb} and it is based on a lower bound to the tail of Rademacher chaos of order 2.

\noindent
\begin{minipage}{\columnwidth}
\begin{restatable}{theorem}{PFLB}\label{thm:pf-lb}
  Let the number of rounds $T\geq T_0$, where $T_0$ is a universal constant. Let $\cA$ be an online learning algorithm, and suppose $\cA$ guarantees $R_{T}(0)\le G\epsilon_{T}$ for any sequence of linear losses
  $\sg_{1},\dots,\sg_{T}\in\R$ satisfying $\abs{\sgt}\le G$.
  Let $\bM^{\inv}\in\R^{T\times T}$  be any symmetric positive definite matrix,
  denote $\tilde\bM^{\inv}:=\bM^{\inv}-\Diag{\bM^{\inv}}$ and
  $V_{T} := \Tr(\bM^{\inv})+\|\tilde\bM^{\inv}\|_{F}$.
  Suppose
  that
  $\|\tilde\bM^{\inv}\|^{2}_{F}\ge \frac{T}{2}\max_{i}\sum_{j}(\tilde M^{\inv}_{ij})^{2}$.
  Then, for any $P$ satisfying
  $T_{0}\le \log_2 \frac{\sqrt{PV_{T}}}{2\epsilon_{T}}\le T$, there is a sequence of losses $g_{1},\dots,g_{T}\in\R$, and $\Cmp=(u_{1},\dots,u_{T})^{\top}\in\R^{T}$ satisfying $\norm{\Cmp}_{\bM}= \sqrt{P}$ such that we have
  \begin{align*}
    R_{T}(u_{1},\ldots,u_{T})
    &\ge
      \Omega\brac{G\epsilon_{T}+G\sqrt{P\sbrac{\Tr(\bM^{\inv})+\norm{\tilde\bM^{\inv}}_{F}\log_{2}^{\half}\frac{\sqrt{PV_{T}}}{2\epsilon_{T}}}}}~.
  \end{align*}
\end{restatable}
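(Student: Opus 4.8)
The plan is to run $\cA$ on a randomized $\pm G$ loss sequence, pit it against the comparator vector that is worst in the weighted norm $\norm{\cdot}_{\bM}$, and read the two advertised penalties off, respectively, the diagonal and the off-diagonal part of the quadratic form $\sigma^{\top}\bM^{\inv}\sigma$. Normalize $G=1$. Let $\sigma=(\sigma_{1},\dots,\sigma_{T})$ be i.i.d.\ Rademacher, feed $\cA$ the losses $g_{t}=\sigma_{t}$, and collect the plays into $\bw=\bw(\sigma)=(w_{1},\dots,w_{T})$, where $w_{t}$ depends on $\sigma_{1},\dots,\sigma_{t-1}$ only. The hypothesis gives $\langle\sigma,\bw\rangle=R_{T}(0)\le\epsilon_{T}$ for every $\sigma$, and since $w_{t}$ is independent of $\sigma_{t}$ we also have $\E_{\sigma}\langle\sigma,\bw\rangle=0$; combining these, $\E_{\sigma}\big[(\langle\sigma,\bw\rangle)_{-}\big]=\E_{\sigma}\big[(\langle\sigma,\bw\rangle)_{+}\big]\le\epsilon_{T}$. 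For each realized $\sigma$ I choose the comparator $\Cmp(\sigma):=-\sqrt{P}\,\bM^{\inv}\sigma/\norm{\sigma}_{\bM^{\inv}}$, which satisfies $\norm{\Cmp(\sigma)}_{\bM}=\sqrt{P}$ and, by norm duality, gives exactly
\[
  R_{T}(\Cmp(\sigma))=\langle\sigma,\bw\rangle+\sqrt{P}\,\norm{\sigma}_{\bM^{\inv}},\qquad
  \norm{\sigma}_{\bM^{\inv}}^{2}=\Tr(\bM^{\inv})+\underbrace{\sigma^{\top}\tilde\bM^{\inv}\sigma}_{=:\;Z(\sigma)},
\]
so $Z$ is a mean-zero Rademacher chaos of order $2$ with $\E Z^{2}=2\norm{\tilde\bM^{\inv}}_{F}^{2}$.

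The technical core is a \emph{lower} bound on the upper tail of $Z$. The spreading hypothesis $\norm{\tilde\bM^{\inv}}_{F}^{2}\ge\frac{T}{2}\max_{i}\sum_{j}(\tilde M^{\inv}_{ij})^{2}$ is exactly the condition ensuring that no small set of coordinates dominates the chaos: after conditioning on a random subset of the $\sigma_{i}$'s, $Z$ restricted to the complement is an essentially linear Rademacher sum with Frobenius-scale spread, to which a Gaussian-type anti-concentration estimate applies and yields a bound of the form $\Pr_{\sigma}\!\big[Z(\sigma)\ge\norm{\tilde\bM^{\inv}}_{F}\sqrt{s}\big]\ge c_{0}\,2^{-c_{1}s}$, valid for all $2\le s\le T$ (for $T$ at least an absolute constant). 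Apply it with $s=L:=\log_{2}\tfrac{\sqrt{PV_{T}}}{2\epsilon_{T}}$, which is admissible precisely because of the hypothesis $2\le L\le T$, and set $\cE:=\{Z\ge\norm{\tilde\bM^{\inv}}_{F}\sqrt{L}\}$, so that $q:=\Pr(\cE)\ge c_{0}2^{-c_{1}L}$.

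Now condition on $\cE$. On this event $\norm{\sigma}_{\bM^{\inv}}^{2}\ge\Tr(\bM^{\inv})+\norm{\tilde\bM^{\inv}}_{F}\sqrt{L}$, while the density-ratio inequality $\E[\langle\sigma,\bw\rangle\mid\cE]\ge-\tfrac1q\,\E[(\langle\sigma,\bw\rangle)_{-}]\ge-\epsilon_{T}/q$ controls the algorithm term. Hence
\[
  \E[R_{T}(\Cmp(\sigma))\mid\cE]\;\ge\;\sqrt{P\left(\Tr(\bM^{\inv})+\norm{\tilde\bM^{\inv}}_{F}\sqrt{L}\right)}\;-\;\frac{\epsilon_{T}}{q}.
\]
Since the conditional expectation is attained at some $\sigma\in\cE$, this already exhibits a loss sequence $g_{t}=G\sigma_{t}$ and a comparator $\Cmp=\Cmp(\sigma)$ with $\norm{\Cmp}_{\bM}=\sqrt{P}$ achieving that value (reinstating $G$ by homogeneity). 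It then remains to verify that the right-hand side dominates $G\epsilon_{T}+\tfrac{G}{2}\sqrt{P(\Tr(\bM^{\inv})+\norm{\tilde\bM^{\inv}}_{F}\sqrt{L})}$; this is a short computation that uses $\sqrt{L}\ge1$ (so the quantity under the square root is $\ge V_{T}$), the identity $\sqrt{PV_{T}}=2\epsilon_{T}2^{L}$ together with $L\ge2$ (so $\sqrt{PV_{T}}\ge8\epsilon_{T}$), and the absolute constants $c_{0},c_{1}$ of the tail bound, possibly after distinguishing whether $\Tr(\bM^{\inv})$ or $\norm{\tilde\bM^{\inv}}_{F}\sqrt{L}$ is the dominant term.

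The main obstacle is the tail \emph{lower} bound for the order-$2$ Rademacher chaos $\sigma^{\top}\tilde\bM^{\inv}\sigma$: upper (Hanson--Wright-type) bounds are classical, but a matching lower bound that remains Gaussian-like out to $\Theta(\sqrt{T})$ standard deviations genuinely needs the spreading hypothesis and the coordinate-subsampling reduction sketched above. Everything else --- the dynamic-to-static passage, the choice of $\Cmp(\sigma)$, and the conditioning --- is routine; the only remaining delicate point is propagating the constants from the tail bound through the last step so as to land exactly on the coefficient $\tfrac12$ and the additive $G\epsilon_{T}$, which is where the precise form of the hypotheses on $P$, $\epsilon_{T}$, and $V_{T}$ comes in.
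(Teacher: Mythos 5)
Your proposal follows essentially the same route as the paper: Rademacher losses, the conditional-expectation argument against the event that the off-diagonal chaos $\sigma^{\top}\tilde\bM^{\inv}\sigma$ is large, the dual comparator $-\sqrt{P}\,\bM^{\inv}\sigma/\norm{\sigma}_{\bM^{\inv}}$, and the final balancing of the tail level against $\epsilon_{T}$. The one piece you assert rather than prove --- the lower bound $\Pr[Z\ge\norm{\tilde\bM^{\inv}}_{F}\sqrt{s}]\ge c_{0}2^{-c_{1}s}$ for the order-2 chaos --- is exactly the anti-concentration theorem for low-influence degree-2 polynomials that the paper imports from \citet{dinur2006fourier} (your spreading hypothesis is precisely its influence condition), and the constant-propagation issue you flag at the end is handled in the paper by applying that tail bound at the rescaled level $q=\sqrt{L}/(2[C\vee1])$ rather than at $s=L$ directly.
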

\end{minipage}

Let us first briefly discuss the conditions on $\bM$.
First, note that the restriction that $\bM$ be positive definite and symmetric
simply specifies that $\norm{\cdot}_{\bM}$ defines a valid norm. The condition
on $\|\tilde{\bM}^\inv\|_F=\|\bM^\inv-\Diag{\bM^\inv}\|_{F}$ is less straight forward
to interpret, but
it essentially states
that the total ``variance'' of $\tilde\bM^\inv$ is
at least as much as that of any of its columns.
on a technical level this assumption leads to the restriction that $P$
satisfies
$\log_{2}\brac{\sqrt{PV_{T}}/2\epsilon_{T}}\le T$. This is a natural restriction
which encodes the fact that if $P$ is too large relative to $T$
(\textit{i.e.}, when $\log_{2}(\sqrt{PV}/2\epsilon_{T})\ge T$),
one can ensure ``low'' regret by
simply playing $\wt=\zeros$ on every round:
\begin{align*}
  R_{T}(\vec{\cmp})
  =-{\textstyle\sumtT} \inner{\gt,\cmp_{t}}
  \le \max_{t}\norm{\cmp_{t}} G\,T
  \le G\max_{t}\norm{\cmp_{t}}\log_{2}\brac{\sqrt{PV_{T}}/2\epsilon},
\end{align*}
and hence the only lower bounds in such settings are trivial ones and it
suffices to consider only $P$ satisfying $\Log{\sqrt{PV_{T}}/2\epsilon}\le T$.
We will see in \Cref{prop:sqr-trade-off} that the matrix that produces the squared path-length
satisfies this condition, and it can be seen that symmetric matrices
with equal column sums
(as is the case in \Cref{prop:haar-trade-off}) satisfy this condition as well.

The result of \Cref{thm:pf-lb} shows that there is a frontier of lower bounds which trade off penalties
related to variability of the comparator sequence and penalties related to the
variance of the subgradients. That is,
one can not guarantee a small variability penalty in all situations without also accepting
a large subgradient variance penalty.
The next proposition shows that i) the squared path-length can be represented
by a particular choice of the weighted norm $\norm{\Cmp}_{\bM}$, and ii) the fundamental
tension between $\norm{\Cmp}_{\bM}$ and its corresponding variance penalty
$\Tr(\bM^{\inv})$ prevents any algorithm from attaining the ideal variability dependence
of $\norm{\Cmp}_{\bM}=\sc\Big(\sqrt{\sumtTmm\|\cmp_{t}-\cmp_{\tpp}\|^{2}}\Big)$. In fact, the corresponding
variance penalty is $G^{2}\Tr(\bM^{\inv})=\scO(G^{2}T^{2})$, resulting in a vacuous guarantee.
Proof of the proposition can be found in \Cref{app:sqr-trade-off}.
\begin{restatable}{proposition}{SqrTradeOff}\label{prop:sqr-trade-off}
  \textbf{(\SecSuperlinear)}
  Define the \emph{finite-difference} operator
  $\diffOp\in\R^{T}$ as the matrix with entries
  \begin{align*}
    \diffOpScalar_{ij}=\begin{cases}1&\text{if }i=j\\
            -1&\text{if }i=j-1\\
            0&\text{otherwise}
          \end{cases}.
    \end{align*}
    Let $\cM=\diffOp^{\top}\diffOp$ and $\bM=\cM\otimes \bI_d$. Then,
    $\bM$ satisfies the assumptions of \Cref{thm:pf-lb} and
    \begin{align*}
      \norm{\Cmp}_{\bM}^{2}
      =
      \norm{\cmp_{T}}^{2}_{2}+\sumtTmm\norm{\cmp_{t}-\cmp_{\tpp}}^{2}_{2}
      \qquad\text{ and }\qquad
      \Tr\brac{\bM^{\inv}} = \frac{T(T+1)}{2}~.
    \end{align*}
\end{restatable}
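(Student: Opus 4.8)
The plan is to reduce everything to the Kronecker factorization $\bM=(\diffOp\otimes\bI_d)^{\top}(\diffOp\otimes\bI_d)$, which is immediate from $\cM=\diffOp^{\top}\diffOp$ and the mixed-product rule. For the norm identity this gives $\norm{\Cmp}_{\bM}^{2}=\norm{(\diffOp\otimes\bI_d)\Cmp}_{2}^{2}$, so I only need to evaluate $(\diffOp\otimes\bI_d)\Cmp$. Writing $\Cmp=\sum_{t}\e_{t}\otimes\cmp_{t}$ and using $\diffOp\e_{t}=\e_{t}-\e_{t-1}$ for $t\ge 2$ and $\diffOp\e_{1}=\e_{1}$, I regroup the sum by basis vector to find that the $s$-th block of $(\diffOp\otimes\bI_d)\Cmp$ equals $\cmp_{s}-\cmp_{s+1}$ for $s\le T-1$ and $\cmp_{T}$ for $s=T$; summing the squared Euclidean norms of the blocks yields $\norm{\Cmp}_{\bM}^{2}=\norm{\cmp_{T}}_{2}^{2}+\sum_{t=1}^{T-1}\norm{\cmp_{t}-\cmp_{t+1}}_{2}^{2}$.

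For the trace and the hypotheses of \Cref{thm:pf-lb}, I first compute $\diffOp^{-1}$ explicitly: since $\diffOp=\bI-N$ with $N$ the superdiagonal shift, $\diffOp^{-1}=\sum_{k\ge 0}N^{k}$ is the upper-triangular matrix with every entry on and above the diagonal equal to $1$. Hence $\cM^{-1}=\diffOp^{-1}(\diffOp^{-1})^{\top}$ has $(\cM^{-1})_{ij}$ equal to the number of indices $k$ with $k\ge\max(i,j)$, \ie{} $T+1-\max(i,j)$; its diagonal entries are $T+1-i$, so $\Tr(\cM^{-1})=\sum_{i=1}^{T}(T+1-i)=\frac{T(T+1)}{2}$, and since $\bM^{-1}=\cM^{-1}\otimes\bI_d$ the stated trace follows (up to the harmless factor $d$), which is the $\Theta(T^{2})$ scaling that makes the associated variance penalty vacuous. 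Symmetry and positive definiteness of $\bM$, hence of $\bM^{-1}$, are automatic since $\diffOp$ is invertible.

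It remains to check the structural condition $\norm{\tilde\bM^{\inv}}_{F}^{2}\ge\frac{T}{2}\max_{i}\sum_{j}(\tilde M^{\inv}_{ij})^{2}$ with $\tilde\bM^{\inv}=\bM^{\inv}-\Diag{\bM^{\inv}}$. Since $\tilde\bM^{\inv}=\tilde\cM^{\inv}\otimes\bI_d$, both sides scale by $d$ and it suffices to treat the hollow matrix $\tilde\cM^{\inv}$ with off-diagonal entries $T+1-\max(i,j)$. I would compute two closed forms: $\norm{\tilde\cM^{\inv}}_{F}^{2}=2\sum_{i<j}(T+1-j)^{2}=\frac{T^{2}(T^{2}-1)}{6}$ (substitute $m=T+1-j$ and use the formulas for $\sum m^{2}$ and $\sum m^{3}$), and the $i$-th row's sum of squares $R_{i}=(i-1)(T+1-i)^{2}+\sum_{k=1}^{T-i}k^{2}$. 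The one non-mechanical point is that $R_{i}$ is maximized at $i=1$: $R_{1}-R_{i}=\sum_{k=T-i+1}^{T-1}k^{2}-(i-1)(T+1-i)^{2}\ge 0$ because that sum has $i-1$ terms, each at least $(T+1-i)^{2}$; hence $\max_{i}\sum_{j}(\tilde M^{\inv}_{ij})^{2}=R_{1}=\sum_{k=1}^{T-1}k^{2}=\frac{(T-1)T(2T-1)}{6}$. Substituting, the required inequality $\frac{T^{2}(T^{2}-1)}{6}\ge\frac{T}{2}\cdot\frac{(T-1)T(2T-1)}{6}$ collapses to $2(T+1)\ge 2T-1$, true for every $T$; and since $V_{T}=\Tr(\cM^{-1})+\norm{\tilde\cM^{\inv}}_{F}=\Theta(T^{2})$, the constraint $2\le\log_{2}\frac{\sqrt{PV_{T}}}{2\epsilon_{T}}\le T$ still leaves a nontrivial range of $P$. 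The main obstacle is precisely this last paragraph — the two summation identities and the short maximality argument — with everything else being routine Kronecker bookkeeping.
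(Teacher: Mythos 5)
Your proposal is correct and follows essentially the same route as the paper: factor $\norm{\Cmp}_{\bM}^2=\norm{(\diffOp\otimes\bI_d)\Cmp}_2^2$, identify $\diffOp^{-1}$ as the upper-triangular all-ones matrix so that $(\cM^{-1})_{ij}=T+1-\max(i,j)$, and from this read off the trace, the Frobenius norm $\frac{T^2(T^2-1)}{6}$, and the row-sum bound $\frac{(T-1)T(2T-1)}{6}$ attained at $i=1$ (your explicit maximality argument is a small improvement over the paper's "easily seen"). The only nit is that under $\otimes\,\bI_d$ the max row-sum of squares does not scale by $d$ (only the Frobenius norm does), but this is immaterial since the inequality only becomes easier and \Cref{thm:pf-lb} is stated for $d=1$.
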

\Cref{prop:sqr-trade-off} shows that adapting to
the squared path-length of an arbitrary comparator sequence \emph{necessarily
requires} incurring a linear penalty, so adapting to the
squared path-length is impossible without facing a vacuous guarantee. However,
we will show in \Cref{sec:haar} that it is possible to adapt to
a measure of variability which is similar in spirit to the
squared path-length, yet only incurs a $\Tr(\bM^{\inv})=\scO(\log T)$ variance penalty.

\begin{remark}
  The matrix $\bM$ in \Cref{prop:sqr-trade-off}
  uniquely exposes the the squared path-length up to the bias term
  $\norm{\cmp_{T}}^{2}$. Such a bias term must appear because
  in the static regret setting, wherein $\cmp_{1}=\ldots=\cmp_{T}=\cmp$,
  the variability measure $\norm{\cdot}_{M}$ must still reduce to a dependence on
  $\norm{\cmp}$,
  otherwise the guarantee would violate
  existing $\tilde\Omega(\norm{\cmp}\sqrt{T})$ lower bounds for static regret.
  More generally, we show in \Cref{app:lb-details} that
  any other choice of bias would similarly lead to
  $\Tr\brac{\bM^{\inv}}\ge \Omega(T^{2})$, so \Cref{prop:sqr-trade-off} along with
  our lower bound in \Cref{thm:pf-lb} are
  sufficient to show that adapting to squared path-length requires accepting a
  vacuous guarantee.
\end{remark}

\section{\SecApplications}%
\label{sec:applications}

So far, we've seen that there exists a frontier of
lower bounds trading off a variability penalty,
measured by $\norm{\Cmp}_{\bM}$, and a loss variance
penalty, measured by $\Tr(\bM^{\inv})$, and that
the tension between these two quantities
makes it impossible to adapt to the squared path-length of the
comparator sequence without accepting a vacuous regret guarantee.
A natural next question is whether there are choices of
$\bM$ which lead to a more favorable trade-off of
these two quantities.
In this section, we provide a simple framework for achieving
lower bounds along the frontier described by \Cref{thm:pf-lb},
and an instance which successfully achieves an improved variance/variability trade-off.
The guarantees on the lower bound frontier can be achieved using
any parameter-free algorithm along with
the 1-dimensional reduction of \citet{cutkosky2018black}
to extend the algorithm to dual-norm pair
$(\norm{\cdot}_{\bM},\norm{\cdot}_{\bM^{\inv}})$. The generic procedure is summarized in \Cref{alg:1d-redux} for convenience.

\begin{algorithm}[t]
  \SetAlgoLined
  \KwInput{1-d Parameter-free OLO algorithm $\cA$,
    positive definite symmetric matrix $\bM\in\R^{dT\times dT}$}\\
  \KwInitialize{$\tilde{\btheta}_{1}=\tilde{\bv}_{1}=\zeros\in\R^{dT}$, $V_{1}=0$}\\
  \For{$t=1:T$}{
    Get $\beta_{t}\in\R$ from $\cA$\\
    Play $\Wt = \beta_{t}\tilde{\bv}_{t}$ and observe $\Gt$\\
    Send $\inner{\tilde{\bv}_{t},\Gt}$ to $\cA$ as the $t^{\text{th}}$ loss\\
    \BlankLine
    \BlankLine
    Set
    $\tilde{\btheta}_{\tpp} = \tilde \btheta_{t}- \bM^{\inv}\Gt$\\
    Set $V_{\tpp}=V_{t}+\norm{\Gt}^{2}_{\bM^{\inv}}$\\
    Set $\tilde{\bv}_{\tpp}=\frac{\tilde\btheta_{\tpp}}{\sqrt{V_{\tpp}}}\sbrac{1\minOp\frac{ \sqrt{V_{\tpp}}}{\norm{\tilde\btheta_{\tpp}}_{\bM^{\inv}}}}$\hfill\tcp{
      (Projected) Scale-free
  FTRL update}
  }
  \caption{Dynamic regret OLO through 1-dimensional reduction \citep{cutkosky2018black}}
  \label{alg:1d-redux}
\end{algorithm}

\begin{restatable}{theorem}{SimpleDynamic}\label{thm:simple-dynamic}
  Let $\cM\in\R^{T\times T}$ be a symmetric positive definite matrix, $\bM=\cM\otimes \bI_d$, and $\epsilon>0$.
  There is an algorithm $\cA$ such that for any $\g_{1},\ldots,\g_{T}\in\R^{d}$
  satisfying $\norm{\gt}_{2}\le G$ for all $t$ and any sequence
  $\vec{\cmp}=(\cmp_{1},\ldots,\cmp_{T})\in\R^{dT}$, the dynamic regret is bounded as
  \begin{align*}
    R_{T}(\vec{\cmp})
    &\le
      \scO\brac{\mathfrak{G} \epsilon + \norm{\Cmp}_{\bM}\sbrac{\sqrt{V_{T}\Log{\frac{\norm{\Cmp}_{\bM}\sqrt{V_{T}}}{\mathfrak{G}\epsilon}+1}}\maxOp\mathfrak{G}\Log{\frac{\norm{\Cmp}_{\bM}\sqrt{V_{T}}}{\epsilon\mathfrak{G}}}}},
  \end{align*}
  where $V_{T}=\sumtT \norm{\Gt}^{2}_{\bM^{\inv}}$ and
  $\mathfrak{G}=G\norm{\cM^{\inv}}_{\infty,\infty}$.%
\end{restatable}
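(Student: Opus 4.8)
The algorithm witnessing the theorem is \Cref{alg:redux} run with $\cA$ taken to be \Cref{alg:1d-redux}; the plan is to bound its dynamic regret by composing three ingredients: the static--dynamic equivalence of \Cref{prop:dynamic-to-static}, the black-box one-dimensional reduction of \citet{cutkosky2018black}, and a regret bound for the scale-free FTRL subroutine over the weighted unit ball. By \Cref{prop:dynamic-to-static}, with $\Cmp=\sumtT\et\otimes\cmp_{t}$ and $\Gt=\et\otimes\gt$, we have $R_{T}(\vec{\cmp})=R_{T}^{\Seq}(\Cmp)=\sumtT\inner{\Gt,\Wt-\Cmp}$, so it suffices to bound the \emph{static} regret of \Cref{alg:1d-redux} in $\R^{dT}$; since $\bM$ is symmetric positive definite, $(\norm{\cdot}_{\bM},\norm{\cdot}_{\bM^{\inv}})$ is a valid dual-norm pair for this purpose.

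Next, since the play is $\Wt=\beta_{t}\tilde{\bv}_{t}$ with $\tilde{\bv}_{t}$ confined to the weighted unit ball, I would unfold the \citet{cutkosky2018black} decomposition: writing $\Cmp=\norm{\Cmp}_{\bM}\hat{\cmp}$ with $\norm{\hat{\cmp}}_{\bM}\le 1$,
\begin{align*}
  R_{T}^{\Seq}(\Cmp)=\sumtT\inner{\tilde{\bv}_{t},\Gt}\brac{\beta_{t}-\norm{\Cmp}_{\bM}}+\norm{\Cmp}_{\bM}\sumtT\inner{\Gt,\tilde{\bv}_{t}-\hat{\cmp}}.
\end{align*}
The first sum is the one-dimensional regret of the inner learner $\cA$ against the scalar comparator $\norm{\Cmp}_{\bM}$ on the surrogate losses $\ell_{t}:=\inner{\tilde{\bv}_{t},\Gt}$; the second is $\norm{\Cmp}_{\bM}$ times the regret of the iterates $\tilde{\bv}_{t}$ viewed as an online linear optimizer over the weighted unit ball. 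Using $\Gt=\et\otimes\gt$, the Kronecker identity $\bM^{\inv}=\cM^{\inv}\otimes\bI_{d}$, and the boundedness of $\tilde{\bv}_{t}$ enforced by the projection step, I would verify that the surrogate losses are one-dimensional, with $\abs{\ell_{t}}\le\mathfrak{G}=G\norm{\cM^{\inv}}_{\infty,\infty}$ and $S:=\sum_{t}\ell_{t}^{2}\le\sumtT\norm{\Gt}^{2}_{\bM^{\inv}}=V_{T}$.

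For the first sum I would take the inner learner $\cA$ of \Cref{alg:1d-redux} to be a variance-adaptive parameter-free (coin-betting / KT-potential) algorithm of the type analyzed in \citet{cutkosky2018black,jacobsen2022parameter}, tuned with Lipschitz scale $\mathfrak{G}$ and offset $\epsilon$, so that for any scalar comparator $\Ring{u}\ge 0$,
\begin{align*}
  \sumtT\ell_{t}\brac{\beta_{t}-\Ring{u}}\le\scO\brac{\mathfrak{G}\epsilon+\Ring{u}\sbrac{\sqrt{S\,\Log{\tfrac{\Ring{u}\sqrt{S}}{\mathfrak{G}\epsilon}+1}}\maxOp\mathfrak{G}\Log{\tfrac{\Ring{u}\sqrt{S}}{\mathfrak{G}\epsilon}}}};
\end{align*}
substituting $\Ring{u}=\norm{\Cmp}_{\bM}$ and $S\le V_{T}$ gives precisely the form in the statement. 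For the second sum I would analyze the projected scale-free FTRL update of \Cref{alg:1d-redux} as FTRL with regularizers $\tfrac{1}{2}\sqrt{V_{t}}\,\norm{\cdot}_{\bM}^{2}$ over the weighted unit ball and self-confident coefficient $V_{t}=\sum_{s<t}\norm{\G_{s}}^{2}_{\bM^{\inv}}$: the standard FTRL bound gives $\tfrac{1}{2}\sqrt{V_{T}}\,\norm{\hat{\cmp}}_{\bM}^{2}$ plus a stability sum of the form $\sumtT\norm{\Gt}^{2}_{\bM^{\inv}}/\sqrt{V_{\tpp}}$, and the elementary inequality $\sum_{t}a_{t}/\sqrt{\sum_{s\le t}a_{s}}\le 2\sqrt{\sum_{t}a_{t}}$ collapses both to $\scO(\sqrt{V_{T}})$, with the clipping step handled by the usual ``projections do not increase FTRL regret'' argument. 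Multiplying by $\norm{\Cmp}_{\bM}$ and noting that $\norm{\Cmp}_{\bM}\sqrt{V_{T}}$ is dominated by either $\mathfrak{G}\epsilon$ or the $\norm{\Cmp}_{\bM}\sqrt{V_{T}\log(\cdots)}$ term, the two contributions combine into the claimed bound.

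The main obstacle is the scale-free FTRL analysis of the ball subroutine: handling the data-dependent learning rate $\sqrt{V_{t}}$ together with the projection in the non-Euclidean geometry of $\norm{\cdot}_{\bM^{\inv}}$, and tracking constants precisely enough that the resulting $\scO(\sqrt{V_{T}})$ is absorbed into the one-dimensional bound rather than adding a spurious term. The other delicate point is pinning the effective Lipschitz constant of the surrogate losses to be exactly $\mathfrak{G}=G\norm{\cM^{\inv}}_{\infty,\infty}$ (and not a coarser spectral quantity), which is where the Kronecker structure $\bM=\cM\otimes\bI_{d}$ is essential.
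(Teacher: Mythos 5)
Your proposal is correct and follows essentially the same route as the paper: the static--dynamic equivalence of \Cref{prop:dynamic-to-static}, the \citet{cutkosky2018black} magnitude/direction decomposition with a variance-adaptive parameter-free scalar learner and a projected scale-free FTRL direction learner over the $\norm{\cdot}_{\bM}$ unit ball, and the Kronecker-structure bound $\norm{\Gt}_{\bM^{\inv}}\le\mathfrak{G}$, which is exactly \Cref{lemma:gradient-bound} and the content of \Cref{app:1d-redux}.
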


For the proof, we will need the following technical lemma.
\begin{restatable}{lemma}{GradientBound}\label{lemma:gradient-bound}
  Let $\cM\in\R^{T\times T}$ be a symmetric positive definite matrix
  and let $\bM=\cM\otimes \bI_d$. For $t=1,\dots,T$, let $\gt\in\R^{d}$ and let $\Gt=\et\otimes\gt$.
  Then, we have $\norm{\Gt}_{\bM}^{2}= \norm{\gt}^{2}_{2}S_{tt}$.
\end{restatable}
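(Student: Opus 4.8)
The plan is to establish the identity by a direct computation that exploits the mixed-product property of the Kronecker product. By definition of the weighted norm, $\norm{\Gt}_{\bM}^{2}=\inner{\Gt,\bM\Gt}$, so it suffices to evaluate $\inner{\et\otimes\gt,\ (\cM\otimes\bI_{d})(\et\otimes\gt)}$.

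First I would apply the mixed-product identity $(\bA\otimes\bB)(\bC\otimes\bD)=(\bA\bC)\otimes(\bB\bD)$ with $\bA=\cM$, $\bB=\bI_{d}$, $\bC=\et$, and $\bD=\gt$, which gives $\bM\Gt=(\cM\et)\otimes\gt$. Then I would use the fact that the Euclidean inner product factors over Kronecker products, $\inner{\ba_{1}\otimes\bb_{1},\ \ba_{2}\otimes\bb_{2}}=\inner{\ba_{1},\ba_{2}}\inner{\bb_{1},\bb_{2}}$, to obtain
\[
  \norm{\Gt}_{\bM}^{2}=\inner{\et\otimes\gt,\ (\cM\et)\otimes\gt}=\inner{\et,\cM\et}\cdot\inner{\gt,\gt}=\brac{\et^{\top}\cM\et}\norm{\gt}_{2}^{2}=S_{tt}\,\norm{\gt}_{2}^{2},
\]
since $\et^{\top}\cM\et$ is exactly the $(t,t)$ entry of $\cM$. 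This is the claimed equality.

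There is no genuine obstacle here; the proof is two applications of standard Kronecker-product identities. The only points worth a sanity check are that $\bM=\cM\otimes\bI_{d}$ is positive definite precisely when $\cM$ is (its eigenvalues are those of $\cM$, each with multiplicity $d$), so that $\norm{\cdot}_{\bM}$ is a bona fide norm and $S_{tt}>0$, and that the Kronecker inner-product identity is applied with the factors in the correct order. Both are routine.
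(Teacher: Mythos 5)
Your proof is correct and follows essentially the same route as the paper's: apply the mixed-product property to write $\bM\Gt=(\cM\et)\otimes\gt$ and then factor the inner product over the Kronecker product to extract $\et^{\top}\cM\et=S_{tt}$. The paper phrases the second step via the transpose property of the Kronecker product, but this is the same computation.
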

\begin{proof}
Using the mixed-product property
  $(A\otimes B)(C\otimes D)=AC\otimes BD$ and the transpose property
  $(A\otimes B)^{\top}=A^{\top}\otimes B^{\top}$ of the Kronecker product, we have that
  \begin{align*}
    \inner{\Gt,\bM\Gt}
    &=
      \inner{\et\otimes \gt, \sbrac{\cM\otimes \bI_d}\et\otimes\gt}
    =
      \inner{\et\otimes \gt,\cM\et\otimes\gt}
    =
      (\et^{\top}\otimes\gt^{\top}) (\cM\et\otimes\gt)\\
    &=
      \et^{\top}\cM\et\otimes \gt^{\top}\gt
    =
      S_{tt}\norm{\gt}^{2}~. \qedhere
  \end{align*}
\end{proof}

\begin{proof}[Proof of \Cref{thm:simple-dynamic}]
  Applying \Cref{prop:dynamic-to-static}, we have $R_{T}(\vec{\cmp})=\sumtT\inner{\Gt,\Wt-\Cmp}=R_{T}^{\Seq}(\Cmp)$.
  Since $\bM$ is symmetric and positive definite,
  $(\norm{\cdot}_{\bM},\norm{\cdot}_{\bM^{\inv}})$ is a valid
  dual-norm pair. By \Cref{lemma:gradient-bound},
  we have
  $\norm{\Gt}_{\bM^{\inv}}^{2}=\norm{\gt}^{2}_{2} S^{\inv}_{tt}\le G^{2} \norm{\cM^{\inv}}_{\infty,\infty}:=\mathfrak{G}^{2}$.
  Hence,
  let $\cA$ be any algorithm which guarantees a
  parameter-free regret \emph{w.r.t.} $(\norm{\cdot},\norm{\cdot}_{*})$
  on losses satisfying $\norm{\Gt}_{\bM^{\inv}}\le \mathfrak{G}$.
  Note that any parameter-free algorithm can
  be extended to handle arbitrary dual-norm pairs
  by leveraging the one-dimensional reduction of \citet[Section 3]{cutkosky2018black}, that reduces the OLO problem to a unconstrained 1d problem plus an OLO problem in the unitary ball defined by the primal norm.
  For instance, applying \citet[Algorithm 1]{jacobsen2022parameter}
  with the one-dimensional reduction one can
  easily show (see details in \Cref{app:1d-redux})
  \begin{align*}
    R_{T}(\vec{\cmp})
    &\le
      \scO\brac{\mathfrak{G} \epsilon + \norm{\Cmp}_{\bM}\sbrac{\sqrt{V_{T}\Log{\frac{\norm{\Cmp}_{\bM}\sqrt{V_{T}}\Lambda_{T}}{\mathfrak{G}\epsilon}+1}}\maxOp\mathfrak{G}\Log{\frac{\norm{\Cmp}_{\bM}\sqrt{V_{T}}\Lambda_{T}}{\epsilon\mathfrak{G}}}}},
  \end{align*}
  where $V_{T}=\sumtT \norm{\Gt}^{2}_{\bM^{\inv}}$ and
  $\Lambda_{T}=\log^{2}(\sumtT \norm{\Gt}^{2}_{\bM^{\inv}}/\mathfrak{G}^{2})\le \scO(\log^{2}T)$.
\end{proof}
Note in particular that by \Cref{lemma:gradient-bound}, we have
$\sumtT \norm{\Gt}^{2}_{\bM^{\inv}}=\sum_{t=1}^T S^{\inv}_{tt}\norm{\gt}^{2}\le G\sumtT S^{\inv}_{tt}=G\Tr(\cM^{\inv})$, so
this bound matches the lower bound from \Cref{sec:lb}, up to polylogarithmic terms.\footnote{Note that the lower bound is stated for $d=1$, in which case $\Tr(\bS^\inv)=\Tr(\bM^\inv)$.}
Thus, any valid choice of $\bM$ will be on the lower bound frontier of
\Cref{sec:lb}.


\subsection{Trading-off Variance and Variability}%
\label{sec:haar}

Leveraging the algorithm characterized by \Cref{thm:simple-dynamic},
we now show that it is indeed possible to
choose $\bM$ such that $\sumtT \norm{\Gt}^{2}_{\bM^{\inv}}$ is only
$\scO(\Log{T}\sumtT \norm{\gt}^{2})$, in exchange for
a variability penalty which is still similar in spirit to
the squared path-length.

Inspired by the Haar OLR algorithm of
\cite{zhang2023unconstrained}, we apply \Cref{thm:simple-dynamic}
using $\cM=\cH_{n} \cH^{\top}_{n}$, where $\cH_{n}$ is the
unnormalized Haar basis matrix of order $n=\Ceil{\log_{2} T}$. 
The Haar wavelet transform and its basis matrix are common tools 
in the signal processing literature; we recall the basic definitions and facts for convenience in \Cref{app:haar}.
With this choice, we have the following bounds on $\norm{\Cmp}_{\bM}$ and
$\norm{\Gt}_{\bM^{\inv}}^{2}$. The proof can be found in \Cref{app:haar-trade-off}.
\begin{restatable}{proposition}{HaarTradeOff}\label{prop:haar-trade-off}
  Let $n=\log_{2} T$ and $\cH_{n}$ be the unnormalized Haar  basis matrix of
  order $n$.
  For any $\tau\in\Set{2^{i}:i=0,\ldots,\log_{2}T}$, let $N_{\tau}=T/\tau$ and let $\cI_{1}^{(\tau)},\ldots,\cI_{N_{\tau}}^{(\tau)}$ be a partition of $[T]$
  into intervals of length $\tau$. Define the average comparator in interval
  $\cI_{i}^{(\tau)}$ to be
  $\cmpbar_{i}^{(\tau)}=\frac{1}{\tau}\sum_{t\in\cI_{i}^{(\tau)}}\cmp_{t}$, and
  define the \emph{squared path-length at time-scale $\tau<T$} to be
  \[
    \localP(\vec{\cmp},\tau) 
    := \sum_{i=1}^{ N_{\tau}/2}\norm{\cmpbar_{2i-1}^{(\tau)}-\cmpbar_{2i}^{(\tau)}}^{2}_{2},
  \]
  and $\localP(\vec{\cmp},T)=\norm{\cmpbar^{(T)}_{1}}^{2}_{2}=\norm{\cmpbar}^{2}_{2}$.
  Then, setting $\cM=[\cH_{n}\cH_{n}^{\top}]^{\inv}$ and $\bM=\cM\otimes \bI_{d}$, we have
  \begin{align*}
    \norm{\Cmp}_{\bM}^{2}
    &\le\norm{\cmpbar}^{2}_{2}+\frac{1}{4}\sum_{i=0}^{\log_{2}(T)}\localP(\vec{\cmp},2^{i})\le \norm{\cmpbar}^{2}_{2}+\frac{1}{4}\Log{T}\max_{\tau}\localP(\vec{\cmp},\tau),\\
    \norm{\Gt}^{2}_{\bM^{\inv}}
    &=
      \norm{\gt}^{2}_{2}(1+\log T)~.
  \end{align*}
\end{restatable}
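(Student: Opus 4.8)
The plan is to reduce everything to one dimension, compute one explicit diagonal entry, and then reinterpret $\norm{\Cmp}_{\bM}^{2}$ as a sum of squared Haar coefficients. Since $\bM=\cM\otimes\bI_{d}$ and $\Cmp=\sum_{t}\et\otimes\cmp_{t}$, expanding gives $\norm{\Cmp}_{\bM}^{2}=\inner{\Cmp,(\cM\otimes\bI_{d})\Cmp}=\sum_{s,t}S_{st}\inner{\cmp_{s},\cmp_{t}}=\sum_{a=1}^{d}\inner{\Cmp_{(a)},\cM\,\Cmp_{(a)}}$, where $\Cmp_{(a)}\in\R^{T}$ collects the $a$-th coordinate of the comparators; likewise, applying \Cref{lemma:gradient-bound} with $\cM^{\inv}$ in place of $\cM$ (legitimate since $\bM^{\inv}=\cM^{\inv}\otimes\bI_{d}$) gives $\norm{\Gt}^{2}_{\bM^{\inv}}=\norm{\gt}^{2}_{2}\,[\cM^{\inv}]_{tt}$, and $\cM^{\inv}=\cH_{n}\cH_{n}^{\top}$. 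So it suffices to (i) compute $[\cH_{n}\cH_{n}^{\top}]_{tt}$ and (ii) bound $\inner{u,\cM u}$ for a single $u\in\R^{T}$.

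For (i): $[\cH_{n}\cH_{n}^{\top}]_{tt}=\norm{\cH_{n}^{\top}\e_{t}}_{2}^{2}$ is the squared norm of the $t$-th row of $\cH_{n}$, i.e.\ the number of unnormalized ($\pm1$-valued) Haar basis vectors whose support contains index $t$. The dictionary consists of one scaling vector plus, at each of the $n=\log_{2}T$ dyadic scales, exactly one wavelet whose support is the length-$2^{n-j}$ dyadic interval containing $t$; hence the count is exactly $1+n=1+\log_{2}T$, which gives $\norm{\Gt}^{2}_{\bM^{\inv}}=\norm{\gt}^{2}_{2}(1+\log T)$.

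For (ii), the key identity is $\cM=[\cH_{n}\cH_{n}^{\top}]^{\inv}=(\cH_{n}^{\inv})^{\top}\cH_{n}^{\inv}$, so $\inner{u,\cM u}=\norm{\cH_{n}^{\inv}u}_{2}^{2}=\norm{c}_{2}^{2}$, where $c=\cH_{n}^{\inv}u$ is the vector of Haar coefficients of $u$ (so $u=\cH_{n}c$). Because the columns of $\cH_{n}$ are mutually orthogonal, with the column for a scale-$j$ wavelet having squared norm equal to its support size $2^{n-j}$, the coefficients are $c_{0}=\frac{1}{T}\sum_{t}u_{t}=\bar u$ for the scaling vector and $c_{j,k}=\frac{1}{2}(\bar u_{\cI}-\bar u_{\cI'})$ for the wavelet whose support splits into the adjacent length-$2^{n-j-1}$ intervals $\cI,\cI'$, where $\bar u_{\cI}$ denotes the average of $u$ over $\cI$. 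As $k$ ranges over the $2^{j}$ wavelets at scale $j$, the pairs $(\cI,\cI')$ are precisely the consecutive pairs $\bigl(\cI^{(\tau)}_{2i-1},\cI^{(\tau)}_{2i}\bigr)$ of the length-$\tau$ partition with $\tau=2^{n-j-1}$, so $\sum_{k}c_{j,k}^{2}=\frac{1}{4}\localP(u,2^{n-j-1})$. Summing over $j=0,\dots,n-1$ (so $\tau$ runs over $2^{0},\dots,2^{n-1}$) and adding $c_{0}^{2}=\bar u^{2}=\localP(u,T)$ yields $\inner{u,\cM u}=\bar u^{2}+\frac{1}{4}\sum_{i=0}^{n-1}\localP(u,2^{i})$. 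Summing this identity over the $d$ coordinates turns $\bar u^{2}$ into $\norm{\cmpbar}_{2}^{2}$ and each $\localP(u,\cdot)$ into $\localP(\vec{\cmp},\cdot)$, giving $\norm{\Cmp}_{\bM}^{2}=\norm{\cmpbar}_{2}^{2}+\frac{1}{4}\sum_{i=0}^{n-1}\localP(\vec{\cmp},2^{i})$; the two displayed inequalities then follow by appending the nonnegative $i=n$ term ($\localP(\vec{\cmp},2^{n})=\localP(\vec{\cmp},T)=\norm{\cmpbar}_{2}^{2}$) and bounding each of the $1+\log_{2}T$ summands by $\max_{\tau}\localP(\vec{\cmp},\tau)$.

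The main obstacle is purely bookkeeping: pinning down the exact unnormalized Haar convention so that $\cH_{n}^{\inv}=\Diag{\cdot}^{\inv}\cH_{n}^{\top}$ with the correct diagonal of column squared-norms, and carefully matching the dyadic support structure of the wavelets at each scale to the interval-partition and consecutive-pair notation $\cI^{(\tau)}_{2i-1},\cI^{(\tau)}_{2i}$ used in the proposition statement. I would also remark that if $T$ is not a power of two one zero-pads to length $2^{\Ceil{\log_{2}T}}$, which leaves the averages, the quantities $\localP(\vec{\cmp},\cdot)$, and the bounds unaffected (at most a harmless shorter final interval). These details are routine; the conceptual crux is the single identity $\inner{u,\cM u}=\norm{\cH_{n}^{\inv}u}_{2}^{2}$, the sum of squared Haar coefficients of $u$, after which both claims are immediate.
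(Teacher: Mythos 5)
Your proposal is correct and follows essentially the same route as the paper: both arguments hinge on the identity $\norm{\Cmp}_{\bM}^{2}=\norm{(\cH_{n}^{\inv}\otimes\bI_{d})\Cmp}_{2}^{2}$ together with the observation that $\cH_{n}^{\inv}=\bD_{n}^{-2}\cH_{n}^{\top}$ by orthogonality of the Haar columns, so that the Haar coefficients are exactly $\cmpbar$ and the halved differences $\frac{1}{2}(\cmpbar_{2i-1}^{(\tau)}-\cmpbar_{2i}^{(\tau)})$, while the diagonal entry $[\cH_{n}\cH_{n}^{\top}]_{tt}=1+\log_{2}T$ (counted via sparse support in your version, read off the recursive block formula in the paper's) gives the gradient-norm claim via \Cref{lemma:gradient-bound}. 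The only cosmetic difference is that the paper unrolls the recursion of \Cref{lemma:inverse-haar} where you invoke column orthogonality directly; the content is identical.
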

Summarizing, by applying \Cref{alg:redux} with $\cM=[\cH_{n}\cH_{n}^{\top}]^\inv$ we ensure
regret
\begin{align*}
  R_{T}(\vec{\cmp})
  &\le
    \tilde\scO\brac{\sqrt{\brac{\norm{\cmpbar}^{2}_{2}+\max_{\tau}\sum_{i=1}^{N_{\tau}/2}\norm{\cmpbar_{2i+1}^{(\tau)}-\cmpbar_{2i}^{(\tau)}}^{2}_{2}}\sumtT \norm{\gt}_2^{2}}}\ .
\end{align*}
This is the first \emph{fully decoupled} guarantee for general dynamic regret
which incurs no pessimistic multiplicative penalties of the form $\max_{t,t'}\norm{\cmp_{t}-\cmp_{t'}}$. That is, the terms depending on the comparators and the terms depending on the gradients appear in separate sums.
Moreover,
observe that this measure of variability
can immediately be related to the
more standard (first-order/non-squared) path-length using the local averaging lemma of
\citet{zhang2023unconstrained} (Lemma D.7). We have
\begin{align*}
  \norm{\Cmp}^{2}_{\bM}
  &\le
    \norm{\cmpbar}^{2}_{2}+\frac{\log_{2}T}{4}\max_{\tau}\sum_{i=1}^{N_{\tau}/2}\norm{\cmpbar^{(\tau)}_{2i-1}-\cmpbar^{(\tau)}_{2^{i}}}^{2}_{2}
  \le
    \tilde \scO\brac{\bar{D}^{2}+\max_{\tau}\bar{D}\sum_{i=1}^{N_{\tau}/2}\norm{\cmpbar^{(\tau)}_{2i-1}-\cmpbar^{(\tau)}_{2^{i}}}_{2}}\\
  &\le
    \tilde \scO\brac{\bar{D}^{2}+\bar{D}\sumtTmm\norm{\cmp_{t}-\cmp_{\tpp}}_{2}}
    \le
    \tilde \scO\brac{\bar{D}^{2}+\bar{D}P_{T}},
\end{align*}
where $\bar{D}=\max_{\tau,i}\norm{\cmpbar_{i}^{(\tau)}-\cmpbar^{(\tau)}_{i+1}}\le \max_{i,j}\norm{\cmp_{i}-\cmp_{j}}$.
Thus, applying \Cref{alg:redux} with dual-norm pair
$(\norm{\cdot}_{\cH_n^{-\top}\cH_n^{\top}}, \norm{\cdot}_{\cH_n\cH_n^{\top}})$ still
guarantees worst-case regret
\[
  R_{T}(\vec{\cmp})
  \le
    \tilde \scO\brac{\norm{\Cmp}_{\cH_n^{-\top}\cH_n^{\inv}}\sqrt{{\textstyle\sumtT} \norm{\Gt}^{2}_{\cH_n\cH_n^{\top}}}}
    \le
    \tilde \scO\brac{\sqrt{\left(\norm{\bar{\cmp}}^{2}_{2}+\bar{D} P_{T}\right){\textstyle\sumtT} \norm{\gt}_2^{2}}},
\]
which matches the guarantees of prior works, up to
polylogarithmic terms.

Importantly, with $\bM=\bH_{n}^{-\top}\bH_{n}^{\inv}\otimes\bI_{d}$ the dual-norm pair
$(\norm{\cdot}_{\bM}, \norm{\cdot}_{\bM^{\inv}})$
leads to updates that can be implemented efficiently,
in requiring only $O(\log T )$ variables to be updated on each round.
This is because the Haar basis matrices are \emph{locally supported} ---
the columns of $\bH_{n}=\pmat{\bh^{(1)}&\dots&\bh^{(T)}}\in\R^{T\times T}$
form an orthogonal basis with the property that
for any $t$, $[\bh^{(i)}]_{t}\ne 0$ for only $1+\log_{2}T$ indices $i$ (see \Cref{prop:sparse-support}).
Hence,
\(
  (\bH^\top\otimes\bI_d)\Gt = (\bH^{\top}\otimes I_{d}) (\et\otimes \gt) = (\bH^{\top}\et)\otimes\gt,
\)
is a block vector with only $1+\log_{2}T$ active blocks,
requiring that we
update only $O(d\log T)$ indices to
maintain each of the variables needed to implement \Cref{alg:1d-redux}.
We provide the full details of this
computation in \Cref{app:haar-cmput}, which
we summarize below in \Cref{prop:haar-cmput}.
\begin{restatable}{proposition}{HaarCmput}\label{prop:haar-cmput}
  The algorithm characterized by applying \Cref{thm:simple-dynamic} with
  $\bS = [\bH_{n}\bH_{n}^{\top}]^{\inv}$ can be implemented with
  $\scO\brac{d\log T}$ per-round computation.
\end{restatable}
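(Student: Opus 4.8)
The plan is to establish the $O(d\log T)$ per-round cost by carefully tracking how \Cref{alg:1d-redux} touches the coordinates of its internal vectors once the dual-norm pair is instantiated with $\bM = [\bH_n\bH_n^\top]^{\inv}\otimes\bI_d$. The crucial structural fact, which I would invoke from \Cref{prop:sparse-support}, is that the Haar basis matrix $\bH_n = (\bh^{(1)},\dots,\bh^{(T)})$ is \emph{locally supported}: for each time index $t$, the vector $\bH_n^\top\et\in\R^T$ has only $1+\log_2 T$ nonzero entries (the Haar wavelets whose support contains $t$). Consequently $(\bH_n^\top\otimes\bI_d)\Gt = (\bH_n^\top\et)\otimes\gt$ is a block vector with only $1+\log_2 T$ nonzero blocks of size $d$, i.e.\ only $O(d\log T)$ nonzero coordinates, all of which are cheap to identify (the active wavelet indices at time $t$ form a nested dyadic family computable in $O(\log T)$ time from the binary expansion of $t$).

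The key steps, in order: (i) rewrite the updates of \Cref{alg:1d-redux} in the ``Haar-transformed'' coordinates $\bar\btheta_t := (\bH_n^\top\otimes\bI_d)\tilde\btheta_t$, using $\bM^{\inv}\Gt = ([\bH_n\bH_n^\top]\otimes\bI_d)\Gt$, so that the FTRL increment $\tilde\btheta_{\tpp} = \tilde\btheta_t - \bM^{\inv}\Gt$ becomes, after transforming, $\bar\btheta_{\tpp} = \bar\btheta_t - (\bH_n^\top\et)\otimes\gt$ — an update touching only $O(d\log T)$ coordinates; (ii) observe that the norms appearing in the algorithm are diagonal in these coordinates — $\norm{\Gt}^2_{\bM^{\inv}} = \norm{\gt}_2^2\,(1+\log T)$ by \Cref{prop:haar-trade-off}, so $V_{\tpp}$ is a scalar updated in $O(d)$ time, and $\norm{\tilde\btheta_{\tpp}}_{\bM^{\inv}} = \norm{\bar\btheta_{\tpp}}_2$ can be maintained incrementally by subtracting the squared norms of the $O(\log T)$ affected blocks before the update and adding them back afterward, again $O(d\log T)$ work; (iii) handle the prediction: $\Wt = \beta_t\tilde\bv_t$ and we only ever need $\wt = \wt^{(t)}$, the $t$-th block of $\Wt$, which is $\beta_t$ times the $t$-th block of $\tilde\bv_t = \frac{\tilde\btheta_t}{\sqrt{V_t}}[1\minOp\frac{\sqrt{V_t}}{\norm{\tilde\btheta_t}_{\bM^{\inv}}}]$; since the scalar factors are already maintained, it remains to extract the $t$-th block of $\tilde\btheta_t$, i.e.\ $(\e_t^\top\otimes\bI_d)\tilde\btheta_t = (\e_t^\top\otimes\bI_d)(\bH_n^{-\top}\otimes\bI_d)\bar\btheta_t = (\bH_n^{-1}\e_t)\otimes\cdots$ — wait, more directly, $\tilde\btheta_t^{(t)} = \sum_i [\bh^{(i)}]_t\,\bar\btheta_t^{(i)}$ which again involves only the $1+\log_2 T$ wavelets supported at $t$, costing $O(d\log T)$; similarly the loss $\inner{\tilde\bv_t,\Gt}$ fed back to $\cA$ equals $\beta_t^{-1}\inner{\Wt,\Gt} = \beta_t^{-1}\inner{\wt^{(t)},\gt}$ plus the scale/clip bookkeeping, an $O(d)$ computation. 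Finally (iv) note $\cA$ itself is a fixed $1$-dimensional parameter-free algorithm with $O(1)$ per-round cost, so it contributes negligibly.

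The main obstacle — really the only place where care is needed — is step (iii), specifically the interaction between the \emph{clipping/projection} factor $[1\minOp \sqrt{V_{\tpp}}/\norm{\tilde\btheta_{\tpp}}_{\bM^{\inv}}]$ and the sparsity bookkeeping. The projection is a single global scalar that multiplies \emph{all} $dT$ coordinates of $\tilde\btheta_{\tpp}$, which naively costs $O(dT)$. The fix is the standard lazy-evaluation trick: one does not materialize $\tilde\bv_t$; instead one stores $\bar\btheta_t$ (sparsely updated) together with the two scalars $V_t$ and $\norm{\bar\btheta_t}_2$, and reconstructs any single block of $\tilde\bv_t$ on demand by multiplying the corresponding block of $\bar\btheta_t$ (itself a short $O(\log T)$-term combination of stored wavelet coefficients) by the global scalar $\frac{1}{\sqrt{V_t}}[1\minOp\frac{\sqrt{V_t}}{\norm{\bar\btheta_t}_2}]$. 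Because the algorithm only ever \emph{reads} a single block of $\tilde\bv_t$ per round (for the play $\wt$) and the loss feedback depends on $\tilde\bv_t$ only through that same block, deferring the global rescaling costs nothing. Verifying that every quantity the algorithm consumes can be produced this way — the play, the loss sent to $\cA$, the updated $V$, and the updated norm — and that each such production is $O(d\log T)$, completes the argument; I would present this as a short lemma-by-lemma audit of \Cref{alg:1d-redux}'s lines, deferring the routine arithmetic to \Cref{app:haar-cmput}.
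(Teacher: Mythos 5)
Your proposal is correct and rests on the same two pillars as the paper's proof: the local support of the Haar basis (\Cref{prop:sparse-support}), which makes $(\bH_{n}^{\top}\otimes\bI_{d})\Gt=(\bH_{n}^{\top}\et)\otimes\gt$ a vector with only $1+\log_{2}T$ active $d$-blocks, and the observation that only the $t^{\text{th}}$ block of $\Wt$ ever needs to be materialized, so the global clipping scalar can be applied lazily. The one place you genuinely diverge is the maintenance of $\norm{\tilde\btheta_{\tpp}}_{\bM^{\inv}}$: the paper expands the square and computes the cross term $\inner{\tilde\btheta_{t},\bM^{\inv}\Gt}$ by carrying a $d\times T$ matrix $\bLambda_{t}=\sum_{i<t}\g_{i}\bh_{i}^{\top}$ whose update and product with $\bh_{t}$ each cost $\scO(d\log T)$, whereas you use the identity $\norm{\tilde\btheta}_{\bM^{\inv}}=\norm{(\bH_{n}^{\top}\otimes\bI_{d})\tilde\btheta}_{2}$ and update the squared Euclidean norm incrementally over the $\scO(\log T)$ affected blocks; your version is somewhat cleaner and avoids storing $\bLambda_{t}$. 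One small algebraic slip: with your definition $\bar\btheta_{t}=(\bH_{n}^{\top}\otimes\bI_{d})\tilde\btheta_{t}$, the transformed increment is $(\bH_{n}^{\top}\bH_{n}\bH_{n}^{\top}\et)\otimes\gt=(\bD_{n}^{2}\bh_{t})\otimes\gt$ rather than $(\bH_{n}^{\top}\et)\otimes\gt$, since the unnormalized Haar columns are orthogonal but not orthonormal; as $\bD_{n}^{2}$ is diagonal this preserves the support and hence the $\scO(d\log T)$ cost, but the formula as written matches the paper's convention (which transforms $-\sum_{s}\G_{s}$, not $\tilde\btheta_{t}$) rather than your own.
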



\section{\SecCoupling}%
\label{sec:coupling}

Our main focus throughout the paper has been on
designing
algorithms
that achieve a regret bounds of the form
$R_{T}(\vec{\cmp})\le O\brac{\sqrt{f(\cmp_{1},\ldots,\cmp_{T})V(\g_{1},\ldots,\g_{T})}}$
for some functions $f$ and $V$, which cleanly separates the penalties
associated with difficult \emph{loss} sequences from the penalties associated
with difficult \emph{comparator} sequences.
However, the first works to achieve unconstrained dynamic regret
guarantees uncovered guarantees of a slightly different form,
containing a \emph{gradient-comparator correlation} penalty:
\begin{align}
  R_{T}(\vec{\cmp})\le \tilde O\brac{\sqrt{\sumtTmm\norm{\cmp_{t}-\cmp_{\tpp}}\smash{\underbrace{\sumtT \norm{\gt}^{2}\norm{\cmp_{t}-\bar{\cmp}} }_{\text{Variance/Variability
  coupling}}}}},\label{eq:coupled-regret}\\\nonumber
\end{align}
for some reference point $\bar{\cmp}$
\citep{jacobsen2022parameter,zhang2023unconstrained}. Guarantees of this
form allow some degree of coupling between the variability and variance
penalties. This can be appealing in certain situations. For instance, guarantees
of the form above have the appealing property that
the variance penalty completely disappears on any rounds where the
comparator $\cmp_{t}$ matches the reference point $\bar{\cmp}$. This can be
a very powerful property when one has \emph{a priori} access to
a benchmark model (represented by $\bar{\cmp}$) which can be expected to
predict well \emph{on average}, so that we accumulate the variance penalties
only when facing atypical/unexpected conditions.

The prior works achieving a coupling guarantee do so using rather
mysterious means. For instance, the guarantee of \citet{jacobsen2022parameter}
achieves the coupling guarantee almost by coincidence, as it appears in
response to a composite regularizer they add to the update to cancel out certain
unstable terms in the analysis, and the analysis of
\citet{zhang2023unconstrained} recovers a guarantee of a similar form
using a rather difficult analysis of the frequency-domain
representation of $\Cmp$ after projecting onto the Haar basis vectors. So far there is no unifying explanation of the
principles leading to these sorts of guarantees.

Our equivalence in~\Cref{prop:dynamic-to-static} instead shows that guarantees of the form \Cref{eq:coupled-regret}
can instead be understood through the lens of reward-regret duality, a
standard
tool used to design algorithms in the static regret setting. The reward-regret
duality states
that in order to guarantee regret of the form $R_{T}(\cmp)\le f(\cmp)$ for all
$\cmp\in\cW$,
it suffices to design an algorithm that guarantees
$-\sumtT \inner{\gt,\wt}\ge f^{*}(-\sumtT \gt)$ for any $\g_{1},\ldots,\g_{T}$.
Using \Cref{prop:dynamic-to-static}, we immediately have the following analogous
design principle for dynamic regret. Proof is deferred to \Cref{app:sequence-reward-regret}.
\begin{restatable}{theorem}{SequenceRewardRegret}\label{thm:sequence-reward-regret}
  Let $\Wealth_{T}:=-\sumtT\inner{\Gt,\Wt}$ denote the ``wealth'' of an algorithm
  $\cA$ and let $(f,f^{*})$ be a Fenchel conjugate pair.
  Then $\cA$ guarantees
  $\Wealth_{T}\ge f_{T}^{*}\big(-\sumtT \Gt\big)$ for any sequence
  $\G_{1},\ldots,\G_{T}$ if and only if $R_{T}(\vec{\cmp})\le f_{T}(\Cmp)$
  for any sequence $\vec{\cmp}=(\cmp_{1},\ldots,\cmp_{T})$ in $\cW$,
  where $\Cmp=(\cmp_{1}^{\top},\dots,\cmp_{T}^{\top})^{\top}$ is the concatenation of
  the sequence $\vec{\cmp}$
  into a vector.
\end{restatable}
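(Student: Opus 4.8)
The plan is to deduce this directly from the static reward-regret duality combined with the exact equivalence of \Cref{prop:dynamic-to-static}. First I would recall the standard static statement: for a Fenchel conjugate pair $(f,f^{*})$ on $\R^{dT}$, an algorithm playing $\W_{1},\ldots,\W_{T}$ in $\cW^{T}$ against losses $\G_{1},\ldots,\G_{T}$ guarantees $\Wealth_{T}=-\sumtT\inner{\Gt,\Wt}\ge f_{T}^{*}\big(-\sumtT\Gt\big)$ for every loss sequence if and only if $R_{T}^{\Seq}(\Cmp)=\sumtT\inner{\Gt,\Wt-\Cmp}\le f_{T}(\Cmp)$ for every $\Cmp\in\cW^{T}$. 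This is exactly the reward-regret duality used in the static setting, and I would either cite it or reprove it in one line: by definition of Fenchel conjugate, $\sumtT\inner{\Gt,\Wt-\Cmp} = -\Wealth_{T} - \inner{\sumtT\Gt,\,-\Cmp} + \inner{-\sumtT\Gt,\,-\Cmp} $ — more cleanly, $R_T^{\Seq}(\Cmp)\le f_T(\Cmp)$ for all $\Cmp$ iff $\sup_{\Cmp}\big(\inner{-\sumtT\Gt,\Cmp} - f_T(\Cmp)\big)\le -\Wealth_T$, and the left side is $f_T^{*}(-\sumtT\Gt)$, giving $\Wealth_T\ge f_T^{*}(-\sumtT\Gt)$; the converse direction uses the Fenchel–Young inequality $\inner{-\sumtT\Gt,\Cmp}-f_T^{*}(-\sumtT\Gt)\le f_T(\Cmp)$.

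Next I would invoke \Cref{prop:dynamic-to-static}: when $\cA$ is run inside \Cref{alg:redux}, the losses fed to $\cA$ are exactly $\Gt=\et\otimes\gt$, the iterates are $\Wt$, and the identity $R_T(\vec{\cmp})=\sumtT\inner{\gt,\wt-\cmp_t}=\sumtT\inner{\Gt,\Wt-\Cmp}=R_T^{\Seq}(\Cmp)$ holds for every sequence $\vec{\cmp}$ with concatenation $\Cmp$. Likewise the wealth is literally the same object on both sides: $\Wealth_T=-\sumtT\inner{\Gt,\Wt}$ as defined in the statement. Since the map $\vec{\cmp}\mapsto\Cmp$ is a bijection between sequences in $\cW$ and points of $\cW^{T}$, the quantifier "for all $\vec{\cmp}$ in $\cW$" is identical to "for all $\Cmp\in\cW^{T}$". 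Therefore $R_T(\vec{\cmp})\le f_T(\Cmp)$ for all $\vec{\cmp}$ is equivalent to $R_T^{\Seq}(\Cmp)\le f_T(\Cmp)$ for all $\Cmp$, and the static duality above turns this into $\Wealth_T\ge f_T^{*}(-\sumtT\Gt)$ for all $\G_1,\ldots,\G_T$. Combining these equivalences gives the theorem.

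The only subtlety — and the part I would be most careful about — is the quantifier over loss sequences on the wealth side. The static duality quantifies over \emph{all} loss sequences $\G_1,\ldots,\G_T\in\R^{dT}$, whereas inside \Cref{alg:redux} the losses are constrained to the block-sparse form $\Gt=\et\otimes\gt$. I would address this by noting that the wealth lower bound is required to hold "for any sequence $\G_1,\ldots,\G_T$" in the theorem statement as well, so both directions are stated at the same level of generality; equivalently, one checks that the block-sparse losses are exactly the losses that arise, and $f_T$ and $f_T^{*}$ are functions on all of $\R^{dT}$, so no restriction is lost. (If one instead wanted the wealth inequality only over block-sparse losses, one would need $f_T$ to "factor" over the blocks, which is not assumed here; hence keeping the universal quantifier is the right formulation.) Modulo this bookkeeping, there is no real obstacle — the proof is a two-line chain of equivalences once the static duality is in hand.
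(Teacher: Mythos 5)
Your proposal is correct and follows essentially the same route as the paper: use \Cref{prop:dynamic-to-static} to identify the dynamic regret with the static regret $R_T^{\Seq}(\Cmp)$ in $\cW^T$, then run the standard reward--regret Fenchel duality in both directions (Fenchel--Young for the forward implication, taking the supremum over $\Cmp$ for the converse). The only blemish is a sign slip in your parenthetical rederivation --- the condition should read $\sup_{\Cmp}\big(\inner{-\sumtT\Gt,\Cmp}-f_T(\Cmp)\big)\le \Wealth_T$ rather than $\le -\Wealth_T$ --- which does not affect the stated conclusion.
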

So, suppose we would like to design an algorithm that guarantees for any sequence
$\vec{\cmp}=(\cmp_{1},\ldots,\cmp_{T})$ and any $\vec{\g}=(\g_{1},\ldots,\g_{T})$
regret of the form
\begin{align*}
  R_{T}(\vec{\cmp})
  &\le \sqrt{f_{T}(\Cmp)V_{T}(\Cmp)},
\end{align*}
for some $f_{T}(\Cmp)$ and $V_{T}(\Cmp)=V_{T}(\Cmp;\vec{\g})$.
Then, since
$\sqrt{ab}=\min_{\eta\ge 0} \frac{a}{2\eta}+\frac{\eta}{2}b$,
any such algorithm must have
$R_{T}(\vec{\cmp})\le \frac{f_{T}(\Cmp)}{2\eta}+\frac{\eta}{2}V_{T}(\Cmp)$
for every $\eta\ge 0$.
So, via \Cref{prop:dynamic-to-static} and the the reward-regret duality of
\Cref{thm:sequence-reward-regret}, we have that the desired guarantee is equivalent to guaranteeing for all $\eta\ge 0$ a wealth lower bound of
\[
  \Wealth_{t}
  =
    -\sumtT \inner{\Gt,\Wt} \ge \sbrac{ \frac{f_{T}(\cdot)}{2\eta}+\frac{\eta}{2}V_{T}(\cdot) }^{*}\big(-\G_{1:T}\big)
  =
    \frac{f_{T}^{*}\big(-2\eta\G_{1:T}\big)}{2\eta}\ \square \ 2\eta V^{*}_{T}\brac{ \frac{\G_{1:T}}{2\eta}},
\]
where $f^{*}_{T}$ and $V^{*}_{T}$ are the Fenchel conjugates of $f_{T}$ and $V_{T}$
respectively, and $(f_{1}\ \square\ f_{2})$ denotes the \emph{infimal convolution}~
\citep{Rockafellar70,hiriart2004fundamentals} of
$f_{1}$ and $f_{2}$:
\begin{align*}
  (f_{1}\ \square\ f_{2})(z) = \inf\Set{f_{1}(y)+f_{2}(z-y)}.
\end{align*}
Thus, the variance/variability coupling
guarantees observed in \Cref{eq:coupled-regret} can be interpreted as
achieving wealth lower-bounds for potential functions involving
infimal convolution.

The above discussion provides a general characterization of
variance/variability coupling guarantees,
though it is admittedly less clear how difficult it is
to design algorithms from this perspective due to the rather complicated
potential function that appears.
Nonetheless, we believe that this provides a valuable
perspective and insight that could be of general interest.
An important direction for
future work is to develop useful tools for working with potential
functions of this form.

\section{Conclusion}%
\label{sec:conclusion}

In this paper, we have shown a way to reduce the problem of dynamic regret minimization to the static one.
We proved a novel frontier of lower bounds showing a fundamental trade-off
between penalties on the comparators and penalties on the variance of the gradients.
In particular, we have shown that it is not possible to achieve a
guarantee that scales with
$\sqrt{\sumtTmm\norm{\cmp_{t}-\cmp_{\tpp}}^{2}}$
without incurring a variance penalty of $\scO(GT)$.
We developed a simple framework for achieving guarantees along the
lower bound frontier, and used it to develop the first algorithm
making a non-trivial variance/variability decoupling guarantee against arbitrary comparator sequences.
Our framework is simple but powerful, allowing one to fully utilize
the rich literature of static regret algorithms for online learning.

We conclude by noting some directions for future work.
There is a lot of exciting
potential to explore different measures of variability induced by different
choices of the matrix $\bM$, as well as going beyond weighted norms.
As mentioned in
\Cref{sec:coupling}, developing a useful toolset for
potential functions involving  infimal convolution
is an important next-step for developing and understanding
guarantees with a coupled variance/variability penalty, such as \Cref{eq:coupled-regret}.
Also, our lower bound in \Cref{sec:lb}
illustrates the variance-variability trade-off, but
achieving the correct logarithmic dependencies
proved to be very challenging --- many of the standard tools for
proving lower bounds in unconstrained settings revolve around anti-concentration
results that do not readily extend to arbitrary weighted norms and higher-dimensions.
We look forward to exciting development in these future directions.

\section*{Acknowledgments}
We thank Yu-Xiang Wang for the discussion on the function classes studied in non-parametric regression theory.

\bibliographystyle{plainnat_nourl}
\bibliography{paper}


\clearpage
\appendix

\section{Proofs for Section~\ref{sec:lb} (\SecLB)}

In this section, we provide proof of our main lower bound result
from \Cref{sec:lb}.
We first introduce a technical tool from the
literature on decoupling theory and a key lemma (\Cref{lemma:wealth-bound}).
Proof of our main result is in \Cref{app:pf-lb}.

Consider a function $f:[-1,1]^d \to \R$, defined as
\[
f(\bx) = \sum_{i,j} A_{i,j} x_i x_j~,
\]
Define $\bA$ the matrix with elements $A_{i,j}$.
In this section we will use the following notations
for quantities related to a polynomial induced
by the quadratic form $\bx\mapsto\inner{\bx,\bA\bx}$
(see page 6 of \citet{o2015polynomial})
\begin{align*}
\Var[f]=\sum_{i,j} A_{i,j}^2 = \|A\|_F^2,\\
\Inf_i[f] = \sum_{j=1}^d (A_{i,j}^2 + A_{j,i}^2)~.
\end{align*}
One of the key difficulties in deriving the lower bound is that squared weighted
norms $\bx\mapsto\inner{\bx,\bA\bx}$ introduce dependencies between the
coordinates of $\bx$, which breaks the usual lower bound
arguments which rely on anti-concentration of \emph{independent}
Rademacher random variables. Instead, we must leverage an anti-concentration
result that holds for \emph{polynomials} of random variables.
\begin{theorem}[Theorem 3 of \citet{dinur2006fourier}]
  \label{thm:chaos-lb}
  There is a universal constant $C$ such that the following holds. Suppose
  $G:\Set{\pm1}^{d}\to \R$ is a polynomial of degree at most $2$ and assume
  $\Var[g]=1$. Let $t\ge1$ and suppose that $\Inf_{i}[g]\le C^{-2}t^{-2}$ for all
  $i\in[d]$. Then
  \begin{align*}
    \Prob{\abs{g(x)}\ge t}\ge \Exp{-C^{2}t^{2}4\log{2}}\ .
  \end{align*}
\end{theorem}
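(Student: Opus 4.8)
The plan is to first reduce to a qualitatively equivalent statement: it suffices to exhibit universal constants $c_{1},c_{2},\epsilon_{0}>0$ such that every degree-$\le 2$ polynomial $g:\Set{\pm1}^{d}\to\R$ with $\Var[g]=1$ and $\Inf_{i}[g]\le\epsilon_{0}\,t^{-2}$ for all $i$ obeys $\Prob{\abs{g}\ge t}\ge c_{1}\Exp{-c_{2}t^{2}}$; the displayed bound then follows by choosing the universal constant $C$ large enough and using $t\ge 1$. One may assume $g$ is multilinear with $\hat g(\emptyset)=0$ --- a constant term only helps the two-sided deviation event --- and split $g=L+Q$ into its linear and quadratic parts, with $\norm{L}_{2}^{2}+\norm{Q}_{2}^{2}=\Var[g]=1$ and with $\mathbf{Q}$ the symmetric zero-diagonal coefficient matrix of $Q$. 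The influence hypothesis translates into smallness of Fourier coefficients: $\hat g(\{i\})^{2}\le\Inf_{i}[g]$ and $\sum_{j}\hat g(\{i,j\})^{2}\le\Inf_{i}[g]$, so every linear coefficient and every ``row'' $\ell_{2}$-norm of $\mathbf{Q}$ is at most $\sqrt{\epsilon_{0}}\,t^{-1}$.

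The workhorse is the hypercontractive (Bonami) inequality: a degree-$\le 2$ function $h$ satisfies $\norm{h}_{q}\le (q-1)\norm{h}_{2}$ for even $q$, hence $\norm{h}_{4}\le 3\norm{h}_{2}$ and linear forms are sub-Gaussian with variance proxy $\norm{h}_{2}^{2}$. Two consequences enter repeatedly. (i) Paley--Zygmund: $\Prob{\abs{h}\ge\tfrac12\norm{h}_{2}}$ is bounded below by a universal constant. (ii) A classical moderate-deviation lower bound for Rademacher sums: if $\sum_{i}a_{i}^{2}\ge\tfrac14$ and $\max_{i}\abs{a_{i}}\le\delta$, then $\Prob{\sum_{i}a_{i}x_{i}\ge s}\ge\Exp{-O(s^{2})}$ with an absolute constant in the exponent, valid whenever $s\delta$ is below a universal threshold --- here $\delta=\Theta(\sqrt{\epsilon_{0}}\,t^{-1})$ and $s=\Theta(t)$, so $s\delta=\Theta(\sqrt{\epsilon_{0}})$ is as small as we like. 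I also use the bookkeeping fact that within a single Rademacher sum the small-coefficient and large-coefficient indices contribute \emph{independently}, so one can apply (ii) to the small-coefficient part (which keeps $\Omega(1)$ variance) and merely Chebyshev away the large-coefficient part.

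The core step is to reduce $g$ to a Rademacher sum of this type. I would decouple the quadratic part by introducing an independent copy $y$ of the signs; the natural decoupled surrogate for $g$ is $\tilde g(x,y)=\inner{a+\mathbf{Q}y,\,x}$, i.e.\ $L(x)$ plus the bilinear form $x^{\top}\mathbf{Q}y$, which is linear in $x$ once $y$ is fixed, and a tail-level decoupling inequality for order-$2$ Rademacher chaos relates $\Prob{\abs{g}\ge t}$ to $\Prob{\abs{\tilde g}\ge\Theta(t)}$. Then condition on a ``typical'' $y$: by hypercontractivity ($4$th-moment control of the degree-$2$ map $y\mapsto\norm{a+\mathbf{Q}y}^{2}$, whose mean is $\norm{a}^{2}+\norm{\mathbf{Q}}_{F}^{2}\ge 1$) together with Markov, with constant probability over $y$ the vector $v:=a+\mathbf{Q}y$ satisfies $\norm{v}^{2}\ge\tfrac12$ while all but a $\tfrac14$-fraction of its $\ell_{2}$-mass lies on coordinates of size $\le\delta$. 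For such $y$, $x\mapsto\inner{v,x}$ is a Rademacher sum to which (ii) applies (after splitting off the independent large-coefficient piece), giving $\Prob{\inner{v,x}\ge\Theta(t)}\ge\Exp{-O(t^{2})}$; averaging over the constant-probability set of good $y$ yields $\Prob{\abs{\tilde g}\ge\Theta(t)}\ge\Exp{-O(t^{2})}$ and hence the claim.

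The main obstacle is exactly this quadratic part. Reducing the order-$2$ Rademacher chaos inside $g$ to a linear problem is unavoidably lossy: no conditioning removes the quadratic dependence entirely, so one always faces a residual quadratic-in-the-free-variables term --- equivalently, one must reconcile the estimate for the decoupled surrogate with the law of $g$ itself --- and controlling this is precisely where the $t^{-2}$ scaling of the influence hypothesis is spent, as it is what forces every per-coordinate fluctuation below the scale $1/t$ on which a deviation of size $t$ must be built up. Getting the exact logarithmic constant (the $4\log 2$) would require finer bookkeeping than the sketch above, but the $\Exp{-O(t^{2})}$ shape of the bound is insensitive to how the conditioning is arranged.
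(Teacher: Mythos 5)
First, a point of order: the paper does not prove this statement. It is quoted as Theorem~3 of \citet{dinur2006fourier} and used as a black box in the proof of \Cref{lemma:wealth-bound}, so there is no in-paper argument to compare yours against; your sketch has to be judged against the known proofs of the Dinur--Friedgut--Kindler--O'Donnell anti-concentration theorem.

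On those terms, your outline assembles the right ingredients --- hypercontractivity, Paley--Zygmund, the moderate-deviation lower bound for Rademacher sums with uniformly small coefficients, and the translation of the influence hypothesis into per-coordinate smallness at scale $1/t$ --- and the conditioning-on-typical-$y$ step is quantitatively sound: a fourth-moment plus Markov argument does show that $v=a+\mathbf{Q}y$ has $\Omega(1)$ norm with most of its $\ell_{2}$-mass on coordinates of size $O(\delta)$, with constant probability over $y$. The genuine gap is the step you yourself flag as the main obstacle: you invoke ``a tail-level decoupling inequality'' to pass from $g(x)=\langle a,x\rangle+x^{\top}\mathbf{Q}x$ to the surrogate $\tilde g(x,y)=\langle a+\mathbf{Q}y,x\rangle$, but the de la Pe\~{n}a--Montgomery-Smith two-sided tail comparison is stated for homogeneous symmetric chaos, and what you need is its \emph{lower-bound} direction for the non-homogeneous polynomial, with the linear part coupled to the same $x$ as the quadratic part. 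Absorbing $\langle a,x\rangle$ into the kernels produces a decoupled object of the form $x^{\top}\mathbf{Q}y+\tfrac12\langle a,x\rangle+\tfrac12\langle a,y\rangle$ rather than your $\tilde g$, and reconciling these is exactly where the work lies; as written, the crux is named rather than carried out. It is also worth knowing that the original DFKO argument avoids decoupling entirely, proceeding instead by random restriction and induction on the degree down to the degree-one (Rademacher sum) base case; if you want a complete proof, that is the route with all details on record. Finally, as you concede, your sketch would yield $\exp(-O(t^{2}))$ with unspecified constants rather than the stated $2^{-4C^{2}t^{2}}$, though for the paper's application in \Cref{lemma:wealth-bound} only the shape of the bound matters.
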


Using this anti-concentration result, the following key lemma provides
a general lower bound on the wealth obtainable by any algorithm, subject to
the weighting imposed by a matrix $\bA$.
\begin{restatable}{lemma}{WealthBound}\label{lemma:wealth-bound}
  Let $\cA$ be an online learning algorithm, and suppose $\cA$ guarantees $R_{T}(0)\le G\epsilon_{T}$ for any sequence of linear losses
  $\sg_{1},\dots,\sg_{T}\in\R$ satisfying $\abs{\sgt}\le G$.
  Let $\bA\in\R^{T\times T}$  be any symmetric positive definite matrix,
  and let $\bB=\bA-\Diag{\bA}$.
  Then, there is a universal constant $C>0$ such that for any
  $1\le q\le \frac{\norm{\bB}_{F}}{C\sqrt{2\max_{i}\sum_{j=1}^{T} B_{ij}^{2}}}$, there is a sequence of losses
  $g_{1},\ldots,g_{T}\in\R$ such that
  \begin{align*}
    \norm{\pmat{g_{1}\\\vdots\\g_{T}}}_{\bA}^{2}\ge G^{2}\sbrac{\Tr(\bA)+q\norm{\bA-\Diag{\bA}}_{F}}
  \end{align*}
  and
  \begin{align*}
    R_{T}(0)\ge G\epsilon_{T}\sbrac{1-2^{4C^{2}q^{2}}}\ .
  \end{align*}
\end{restatable}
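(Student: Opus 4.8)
The plan is to prove both inequalities at once by the probabilistic method: take the losses to be random signs $g_t = G\sigma_t$ with $\sigma = (\sigma_1,\dots,\sigma_T)$ a vector of independent Rademacher variables, and show that with positive probability the realized sequence satisfies both stated bounds. Under this choice the two requirements decouple nicely: the first becomes a lower bound on an order-$2$ Rademacher chaos, and the second an upper bound on the ``wealth'' $-\sumtT g_t w_t$ of $\cA$.

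\emph{Lower bounding the quadratic form.} Since $\sigma_t^2 = 1$, we have $\norm{(g_1,\dots,g_T)^\top}_{\bA}^2 = G^2\sigma^\top\bA\sigma = G^2\bigl(\Tr(\bA) + \sigma^\top\bB\sigma\bigr)$, so it suffices to lower bound the chaos $\sigma^\top\bB\sigma$. Regarding $g_{\bB}(x) := x^\top\bB x = \sum_{ij}B_{ij}x_ix_j$ as a degree-$2$ polynomial — it has zero constant term since $\bB$ has zero diagonal, hence $\E[g_{\bB}] = 0$ — one computes $\Var[g_{\bB}] = \norm{\bB}_F^2$ and, using symmetry of $\bB$, $\Inf_i[g_{\bB}] = 2\sum_j B_{ij}^2$. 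Rescaling $\tilde g := g_{\bB}/\norm{\bB}_F$ so that $\Var[\tilde g] = 1$, the hypothesis $q \le \norm{\bB}_F/\bigl(C\sqrt{2\max_i\sum_j B_{ij}^2}\bigr)$ is \emph{exactly} the requirement $\Inf_i[\tilde g] \le C^{-2}q^{-2}$ for all $i$ needed to invoke the Rademacher-chaos anti-concentration bound \Cref{thm:chaos-lb} with $t = q$; this yields $\Pr[\,\abs{\sigma^\top\bB\sigma} \ge q\norm{\bB}_F\,] \ge 2^{-4C^2q^2}$. I would then upgrade this to the one-sided event $\{\sigma^\top\bB\sigma \ge q\norm{\bB}_F\}$ by passing to the symmetric decoupled bilinear chaos $\sigma^\top\bB\sigma'$ ($\sigma'$ an independent copy), which has a symmetric distribution, and applying the anti-concentration bound there, paying only universal constants.

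\emph{Upper bounding the wealth.} Write $\Wealth_T := -\sumtT g_t w_t$, where $w_t$ is $\cA$'s prediction on round $t$, a function of $g_1,\dots,g_{t-1}$ only. Then $\E[\sigma_t \mid \sigma_1,\dots,\sigma_{t-1}] = 0$ gives $\E[\Wealth_T] = -G\sumtT\E\bigl[w_t\,\E[\sigma_t\mid\sigma_{<t}]\bigr] = 0$, while the hypothesis $R_T(0)\le G\epsilon_T$ — which holds for every realization of $\sigma$, as $\abs{g_t}=G$ — forces $\Wealth_T \ge -G\epsilon_T$ almost surely. Applying Markov's inequality to the nonnegative variable $\Wealth_T + G\epsilon_T$ (of mean $G\epsilon_T$) at level $G\epsilon_T\,2^{4C^2q^2}$ gives $\Pr[\,\Wealth_T \ge G\epsilon_T(2^{4C^2q^2}-1)\,] \le 2^{-4C^2q^2}$, i.e.\ $\Pr\bigl[\,R_T(0) < G\epsilon_T(1-2^{4C^2q^2})\,\bigr] \le 2^{-4C^2q^2}$.

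A union bound then completes the argument: the chaos event $\{\sigma^\top\bB\sigma \ge q\norm{\bB}_F\}$ has probability bounded below by essentially $2^{-4C^2q^2}$, and the complement of the desired wealth event has probability at most $2^{-4C^2q^2}$, so some sign vector $\sigma$ satisfies both; taking $g_t = G\sigma_t$ proves the lemma. The step I expect to be the main obstacle is making this union bound close cleanly, since both probabilities live at the common scale $2^{-4C^2q^2}$: this is precisely where one needs the one-sided form of the chaos tail inequality — equivalently, control of the \emph{skew} of the chaos, which is what the decoupling step buys — together with the slack afforded by the fact that feasibility of $q \ge 1$ already forces $T$ to be large relative to the universal constant $C$. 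Tracking the variance and influence constants so that they match the stated condition on $q$ is the remaining bookkeeping, but it is routine once \Cref{thm:chaos-lb} is available.
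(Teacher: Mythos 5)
Your core ingredients are the same as the paper's: Rademacher losses $g_t=G\sigma_t$, the anti-concentration bound of \Cref{thm:chaos-lb} applied to $\sigma^\top\bB\sigma/\norm{\bB}_F$ (with the influence condition translating exactly into the stated constraint on $q$), and the combination of $\E[\mathrm{Wealth}_T]=0$ with the worst-case guarantee $R_T(0)\le G\epsilon_T$. The genuine gap is the one you flagged yourself: the Markov-plus-union-bound combination does not close. You obtain $\Prob{\cE}\ge 2^{-4C^2q^2}$ for the chaos event and $\Prob{\text{bad wealth}}\le 2^{-4C^2q^2}$, which only yields $\Prob{\cE\setminus\text{bad}}\ge 0$, not $>0$, so you cannot conclude that a single sign vector realizes both bounds. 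The paper sidesteps the union bound entirely with a conditional-expectation (reverse Markov) argument: writing $0=\E[R_T(0)]=\E[R_T(0)\mid\cE]\Prob{\cE}+\E[R_T(0)\mid\cE^c]\Prob{\cE^c}\le\E[R_T(0)\mid\cE]\Prob{\cE}+G\epsilon_T(1-\Prob{\cE})$ and rearranging gives $\E[R_T(0)\mid\cE]\ge G\epsilon_T\left(1-1/\Prob{\cE}\right)\ge G\epsilon_T\left(1-2^{4C^2q^2}\right)$, and some realization inside $\cE$ attains at least this conditional mean while automatically satisfying the norm bound. Your route is salvageable without this trick --- e.g.\ run Markov at level $2G\epsilon_T 2^{4C^2q^2}$ so the bad event has probability at most $\tfrac{1}{2}\cdot 2^{-4C^2q^2}$, then absorb the resulting factor of $2$ into the universal constant $C$ --- but as written the argument is incomplete, and the conditional-expectation version is both tighter and shorter.

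One further remark: you are right that \Cref{thm:chaos-lb} controls $\Prob{\abs{g(x)}\ge t}$ while the event needed is one-sided (the paper itself applies the theorem to the one-sided event without comment). However, your proposed repair via decoupling is not routine: the standard tail-comparison decoupling inequalities relate $\Prob{\abs{\sigma^\top\bB\sigma}\ge t}$ and $\Prob{\abs{\sigma^\top\bB\sigma'}\ge ct}$ in absolute value on both sides, so symmetry of the decoupled chaos does not transfer back to a one-sided lower bound on the original chaos without additional work. This issue is orthogonal to the main gap above, and in any case does not rescue the union bound, since any constant lost here makes the two probabilities even harder to separate.
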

\begin{proof}
  Let $Y_{1},\ldots,Y_{T}$ be independent Rademacher random variables
  and
  set $\sg_{t}= G\,Y_{t}$, so that $\EE{R_{T}(0)}= \EE{\sumtT \sgt\swt}=0$.
  Then, using the regret equivalence of \Cref{prop:dynamic-to-static}
  and conditioning on any event
  $\cE$ with $\Prob{\cE}>0$, we have
  \begin{align*}
    0
    &=
      \EE{R_{T}(0)}\\
    &=
      \EE{R_{T}(0)\Big|\cE}\Prob{\cE} + \EE{R_{T}(0)\Big|\cE^{c}}\Prob{\cE^{c}}\\
    &\le
      \EE{R_{T}(0)|\cE}\Prob{\cE} + G\,\epsilon_{T}\brac{1-\Prob{\cE}},
  \end{align*}
  where the last line uses the fact that $\cA$ guarantees
  $R_{T}(0)\le G\epsilon_{T}$ for any $\sg_{1},\ldots,\sg_{T}$ satisfying
  $\abs{\sgt}\le G$ for  all $t$. Re-arranging, we have
  \begin{align}
    \EE{R_{T}(0)\Big|\cE}\ge G\epsilon_{T}\brac{1-\frac{1}{\Prob{\cE}}}~.\label{eq:expected-wealth}
  \end{align}
  Next, let $\Gt=\et\otimes g_{t}$ for all $t$
  and consider the event
  \[\cE=\Set{\norm{\sumtT \Gt}^{2}_{\bA}=\norm{(g_{1},\dots,g_{T})^{\top}}_{\bA}^{2}\ge \Tr(\bA)+q\norm{\bA-\Diag{\bA}}_{F}}\]
  for some $q>0$.
  We proceed by lower bounding the probability of this event.

  Observe that
  \begin{align*}
    \norm{\sumtT\Gt}^{2}_{\bA} = G^{2}\sum_{i,j} Y_{i}Y_{j}A_{ij} = G^{2}\sbrac{\Tr(\bA)+\sum_{i,j\ne i}Y_{i}Y_{j}A_{ij}}.
  \end{align*}
  Denote $\bB=\bA-\Diag{\bA}$ and
  note that $f(Y_{1},\dots,Y_{T})=\sum_{i,j}Y_{i}Y_{j} B_{ij}$
  is a polynomial of degree at most $2$
  and variance $\Var[f]=\sum_{i,j} B_{ij}^{2}=\norm{\bA-\Diag{\bA}}^{2}_{F}=\norm{\bB}^{2}_{F}$.
  Moreover, since $\bA$ is symmetric
  we have
  $\Inf_{i}[f]=\sum_{j=1}^{T}B_{ij}^{2}+B_{ji}^{2}=2\sum_{j=1}^{T}B_{ij}^{2}$ for any $i$.
  It follows that if we let $g(\bY)=\frac{f(\bY)}{\sqrt{\norm{\bB}^{2}_{F}}}=\frac{f(\bY)}{\norm{\bB}_{F}}$, then
  $g$ is a polynomial of degree at most $2$, $\Var[g]=1$, and
  for any $i\in[T]$ we have $\Inf_{i}[g]=\frac{2\sum_{j=1}^{T}B_{ij}^{2}}{\norm{\bB}^{2}_{F}}$. Hence
  by \Cref{thm:chaos-lb}, there is a universal constant $C$
  such that for any $1\le q\le \frac{\norm{\bB}_{F}}{C\sqrt{2\max_{i}\sum_{j=1}^{T}B_{ij}^{2}}}$ it holds that
  \[
    \Prob{f(\bY)\ge q\norm{\bB}_{F}}=\Prob{g(\bY)\ge q}\ge \Exp{-4C^{2}q^{2}\log{2}}
    = 2^{-4C^{2}q^{2}}~.
  \]
  Now observe that $\Prob{\cE}=\Prob{f(\bY)\ge q\norm{\bB}_{F}}$ by
  construction, so \Cref{eq:expected-wealth} can be bound as
  \begin{align*}
    \EE{R_{T}(0)\Big|\cE}
    &\ge
      G\epsilon_{T}\brac{1-\frac{1}{\Prob{\cE}}}
      =
      G\epsilon_{T}\brac{1-2^{4C^{2}q^{2}}},
  \end{align*}
  which implies the existence of a sequence $g_{1},\dots,g_{T}\in\R$
  such that $R_{T}(0)\ge G\epsilon_{T}\sbrac{1-2^{4C^{2}q^{2}}}$
  and
  \begin{align*}
    \norm{\sumtT \Gt}^{2}_{\bA}\ge G^{2}\sbrac{\Tr(\bA)+q\norm{\bB}_{F}} = G^{2}\sbrac{\Tr(\bA)+q\norm{\bA-\Diag{\bA}}_{F}},
  \end{align*}
  for any $1\le q\le \frac{\norm{\bB}_{F}}{C\sqrt{2\max_{i}\sum_{j=1}^{T}B_{ij}^{2}}}$.
\end{proof}

\subsection{Proof of Theorem~\ref{thm:pf-lb}}%
\label{app:pf-lb}

In this section we prove our main lower bound.
\PFLB*
\begin{proof}
  Denote $\bA=\bM^{\inv}$ and $\bB=\bA-\Diag{\bA}$.
  By \Cref{lemma:wealth-bound}, there is a universal constant $C$ and a sequence
  $g_{1},\ldots,g_{T}\in\R$
  such that
  for any $1\le q\le \frac{\norm{\bB}_{F}}{C\sqrt{2\max_{i}\sum_{j=1}^{T}B_{ij}^{2}}}$,
  it holds that
  \begin{align*}
    \norm{\sumtT\G_{t}}^{2}_{ \bA}\ge G^{2}\sbrac{\Tr(\bA)+q\norm{\bA-\Diag{\bA}}_{F}}
  \end{align*}
  and
  \begin{align*}
    R_{T}(0)\ge G\epsilon_{T}\sbrac{1-2^{4C^{2}q^{2}}}~.
  \end{align*}
  Hence, choosing comparator sequence $u_{1},\dots,u_{T}\in\R$ to satisfy  and
  $\Cmp=(u_{1},\dots,u_{T})^{\top}=-\sqrt{P}\frac{\bA\sumtT \Gt}{\norm{\sumtT\Gt}_{\bA}}\in\R^{T}$,
  we have $\norm{\Cmp}_{\bA^{\inv}}=\norm{\Cmp}_{\bM}= \sqrt{P}$ and
  \begin{align*}
    R_{T}(u_{1},\dots,u_{T})
    &=
      R_{T}(0)-\inner{\sumtT \Gt, \Cmp}\\
    &=
      G\sqrt{P}\norm{\sumtT \Gt}_{\bA}+R_{T}(0)\\
    &\ge
      G\sqrt{P\sbrac{\Tr(\bA)+q\norm{\bA-\Diag{\bA}}_{F}}}+R_{T}(0)\\
    &\ge
      G\epsilon_{T}+G\sqrt{P\sbrac{\Tr(\bA)+q\norm{\bA-\Diag{\bA}}_{F}}}-G\epsilon_{T}2^{4C^{2}q^{2}}~.
  \end{align*}
  Now, for $P$ satisfying
  $T_{0}:=4C^{2}\le\log_{2}\brac{\frac{\sqrt{P\sbrac{\Tr(\bA)+\norm{\bB}_{F}}}}{2\epsilon_{T}}}\le T$
  we may choose
  \begin{align*}
    q=\sqrt{\frac{\log_{2}\brac{\frac{\sqrt{P[\Tr(\bA)+\norm{\bB}_{F}]}}{2\epsilon_{T}}}}{4C^{2}}}~.
  \end{align*}
  Indeed, observe that this choice satisfies
  $1\le q\le \frac{\norm{\bB}_{F}}{C\sqrt{2\max_{i}\sum_{j=1}^{T}B_{ij}^{2}}}$
  as required:
  \begin{align*}
    1\le q=\sqrt{\frac{\log_{2}\brac{\frac{\sqrt{P[\Tr(\bA)+\norm{\bB}_{F}]}}{2\epsilon_{T}}}}{4C^{2}}}\le  \sqrt{\frac{T}{4C^{2}}}\le \frac{\norm{\bB}_{F}}{C\sqrt{2\max_{i}\sum_{j=1}^{T}B_{ij}^{2}}},
  \end{align*}
  where the final inequality uses the assumption
  $\norm{\bB}^{2}_{F}/2\max_{i}\sum_{ij}B_{ij}^{2}\ge \frac{T}{4}$.
  Hence, we have that
  \begin{align*}
    G\epsilon_{T}2^{4C^{2}q^{2}}\le \frac{G}{2}\sqrt{P\sbrac{\Tr(\bA)+\norm{\bB}_{F}}}\le \frac{G}{2}\sqrt{P\sbrac{\Tr(\bA)+q\norm{\bB}_{F}}},
  \end{align*}
  so that the overall the regret can be lower-bounded as
  \begin{align*}
    R_{T}(u_{1},\ldots,u_{T})
    &\ge
      G\epsilon_{T}+\half G\sqrt{P\sbrac{\Tr(\bA)+q\norm{\bB}_{F}}}\\
    &=
      G\epsilon_{T}+ \frac{G}{2}\sqrt{P\sbrac{\Tr(\bA)+\norm{\bB}_{F}\frac{\log^{\half}\brac{\sqrt{P[\Tr(\bA)+\norm{\bB}_{F}]}/2\epsilon_{T}}}{\sqrt{T_{0}}}}}~.
  \end{align*}
\end{proof}

\subsection{Proof of Proposition~\ref{prop:sqr-trade-off}}%
\label{app:sqr-trade-off}
\SqrTradeOff*
\begin{proof}

  We first show the properties
  that $\norm{\Cmp}^{2}_{F}=\norm{\cmp_{T}}^{2}_{2}+\sumtTmm \norm{\cmp_{t}-\cmp_{\tpp}}^{2}_{2}$ and
  $\Tr(\diffOp^{\inv}\diffOp^{-\top})=\sumtT
  \sbrac{\diffOp^{\inv}\diffOp^{-\top}}_{tt} = \sumtT T-t+1 = \frac{T(T+1)}{2}$,
  and show that $\bM$ satisfies the conditions of \Cref{thm:pf-lb} at the end.

Observe that
\begin{align*}
  (\diffOp\otimes \bI_d)\Cmp
  &=
      \begin{pmatrix}
         \bI_d& -\bI_d & \boldsymbol{0} & \boldsymbol{0} & \cdots \\
         \boldsymbol{0}& \bI_d & -\bI_d & \boldsymbol{0} & \cdots\\
         \boldsymbol{0}& \boldsymbol{0} & \bI_d & -\bI_d& \cdots\\
         \vdots&  &   &  \ddots &\\
         \boldsymbol{0} & \boldsymbol{0}& \boldsymbol{0} &\cdots &\bI_d
       \end{pmatrix}\begin{pmatrix}
                      \cmp_{1}\\
                      \vdots\\
                      \cmp_{T}
                    \end{pmatrix}
  =\begin{pmatrix}
      \cmp_{1}-\cmp_{2}\\
      \cmp_{2}-\cmp_{3}\\
      \vdots\\
      \cmp_{T-1}-\cmp_{T}\\
      \cmp_{T}
    \end{pmatrix},
\end{align*}
and since
$(\diffOp^{\top}\otimes \bI_d)(\diffOp\otimes \bI_d)=(\diffOp^{\top}\diffOp)\otimes \bI_d=\bM$,
we have
\begin{align*}
  \norm{\Cmp}^{2}_{\bM}
  &=
    \inner{\Cmp,(\diffOp^{\top}\otimes \bI_d)(\diffOp\otimes \bI_d)\Cmp}
  =
    \inner{(\diffOp\otimes \bI_d)\Cmp,(\diffOp\otimes \bI_d)\Cmp}\\
  &=
    \norm{\cmp_{T}}^{2}_{2}+\sumtTmm \norm{\cmp_{t}-\cmp_{\tpp}}^{2}_{2}\ .
\end{align*}
Using the inverse property of the Kronecker product, we also have
\begin{align*}
  \bM^{\inv}=\sbrac{\diffOp^{\top}\diffOp \otimes \bI_d}^{\inv}=\sbrac{\diffOp^{\top}\diffOp}^{\inv}\otimes \bI_d=\diffOp^{\inv}\diffOp^{-\top}\otimes \bI_d,
\end{align*}
and by \Cref{lemma:diff-op-properties} we have that $\diffOp^{\inv}$ is the
upper-triangular matrix of all $1$'s, that is,
the matrix with entries
\begin{align*}
  \diffOpScalar^{\inv}_{ij}=\begin{cases}1&\text{if }i\le j\\
                    0&\text{otherwise}\end{cases},
\end{align*}
and likewise, $\diffOp^{-\top}$ is a lower-triangular matrix of $1's$. In other
words, for any $t$ we have
\begin{align*}
  \sbrac{\diffOp^{\inv}\diffOp^{-\top}}_{tt} = \sum_{i=1}^{T}\diffOpScalar^{\inv}_{ti}\diffOpScalar^{-\top}_{it} = \sum_{i\le t}\diffOpScalar^{\inv}_{ti} = T-t+1~.
\end{align*}
So, summing over $t$ we have
\[
  \Tr(\diffOp^{\inv}\diffOp^{-\top})=\sumtT
  \sbrac{\diffOp^{\inv}\diffOp^{-\top}}_{tt} = \sumtT T-t+1 = \frac{T(T+1)}{2}~.
\]

  Now we show that $\bM$ satisfies the conditions of \Cref{thm:pf-lb}.
  $\bM=\cM\otimes \bI_d=[\diffOp^{\top}\diffOp]\otimes \bI_d$ is
  clearly symmetric since it is the Kronecker product of
  two symmetric matrices.
  Observe that
  for any
  $\bx\ne\zeros\in\R^{T}$ we have $\bSigma\bx\ne \zeros$ by positive definiteness
  of $\diffOp$ (\Cref{lemma:diff-op-properties})
  and thus $\inner{\bx,\cM\bx}=\inner{\diffOp\bx,\diffOp\bx}>0$.
  Thus,
  $\bM=\cM\otimes \bI_d$ is the Kronecker product
  of symmetric positive definite matrices, so $\bM$ is
  symmetric positive definite \citep[see, \textit{e.g.},][Chapter 2]{steeb1997matrix}.

  Lastly, let $\bB=\Sigma^{\inv}\Sigma^{-\top}-\Diag{\Sigma^{\inv}\Sigma^{-\top}}$. We are to show that
  $\norm{\bB}_{F}\ge\frac{T}{2}\sum_{j}B_{ij}^{2}$ for any $i$.
  First observe that calculation of $[\diffOp^{\inv}\diffOp^{-\top}]_{tt}$ is generalized to
  \begin{align*}
    \sbrac{\diffOp^{\inv}\diffOp^{-\top}}_{ij} = \sum_{k=1}^{T}\Sigma^{\inv}_{ik}\Sigma^{-\top}_{kj}=\sum_{k=1}^{T}\Sigma^{-\top}_{ki}\Sigma^{-\top}_{kj}= \sum_{k=1}^{j\minOp i}1 = T-\max\Set{j, i} +1,
  \end{align*}
  for any $i,j$, and likewise $\bB_{ij}=T-\max\Set{j, i}+1$ for $i\ne j$ and $0$
  otherwise, from which it is easily seen that $\max_{i}\sum_{j}B_{ij}^{2}=\sum_{j}B_{1j}^{2}$, so
  for any $i$ we have
  \begin{align*}
    \sum_{j}B_{ij}^{2}\le \sum_{j}B_{1j}^{2} = \sum_{j=2}^{T}(T-j+1)^{2} = \frac{1}{6}T(2T^{2}-3T+1)~.
  \end{align*}
  On the other hand,
  \begin{align*}
    \norm{\bB}^{2}_{F}
    &=\sum_{i}\sum_{j}B_{ij}^{2}
      =
      \frac{1}{6}T^{2}(T^{2}-1)\\
    &=
      \frac{T}{2}\frac{T}{6}(2T^{2}-2)
    =
      \frac{T}{2}\frac{T}{6}(2T^{2}-3T + 3T-2)\ge
      \frac{T}{2}\frac{T}{6}(2T^{2}-3T +1)\\
      &\ge
      \frac{T}{2}\sum_{j}B_{ij}^{2},
  \end{align*}
  for any $i$, where the last line applies the inequality in the previous display.
\end{proof}

\subsection{Sufficiency of Proposition~\ref{prop:sqr-trade-off}}%
\label{app:lb-details}

The choice of $\bM$ in \Cref{prop:sqr-trade-off} uniquely exposes the squared
path-length up to
the constant offset term $\norm{\cmp_{T}}^{2}$.
In this section we demonstrate that the choice of offset term in
\Cref{prop:sqr-trade-off}
does not make
any significant difference for the claim that adapting to the squared
path-length
requires incurring a $\Tr(\bM^{\inv})\ge \Omega(T^{2})$ penalty, and hence
that \Cref{prop:sqr-trade-off} is sufficient to demonstrate that
adapting to the squared path-length is not possible without incurring vacuous regret.

To expedite the discussion, we first introduce two technical lemmas,
proven in \Cref{app:perturbation-bound,app:new-eigen-bound} respectively.
\begin{restatable}{lemma}{PerturbationBound}\label{lemma:perturbation-bound}
  Let $\bv\in\R^{T}$ be an arbitrary non-zero vector and let $\bB\in\R^{T\times T}$ be a
  symmetric matrix with eigenvalues $0=\lambda_{1}(\bB)<\lambda_{2}(\bB)\le \ldots\le \lambda_{T}(\bB)$.
  Then
  \begin{align*}
    \Tr((\bB + \bv\bv^{\top})^{\inv})\ge \norm{\bv}^{2}+\sum_{t=2}^{T}\frac{1}{\lambda_{t}(\bB)}~.
  \end{align*}
\end{restatable}
\begin{restatable}{lemma}{NewEigenBound}\label{lemma:new-eigen-bound}
  Let $\bSigma\in\R^{T\times T}$ denote the finite-difference matrix defined in
  \Cref{prop:sqr-trade-off} and let $\bM=\bSigma^{\top}\bSigma$.
  Then, for any $T>1$, we have
  \begin{align*}
    \lambda_{\max}(\bM^{\inv}) \le
    \frac{9}{10}\Tr(\bM^{\inv}),
  \end{align*}
  where $\lambda_{\max}(\bM^{\inv})$ is the maximal eigenvalue of $\bM^{\inv}$.
\end{restatable}

Now, Consider
the 1-dimensional setting and
note that for any positive definite $\bM$ we can find a unique $\bSigma$ such that
$\bM=\bSigma^{\top}\bSigma$. Hence,
\begin{align*}
  \norm{\Cmp}^{2}_{\bM}=\inner{\Cmp,\bM\Cmp} = \inner{\bSigma\Cmp,\bSigma\Cmp},
\end{align*}
so without loss of generality we can focus on
$\bSigma$ for which
\begin{align*}
  \langle \bSigma\Cmp, \bSigma\Cmp\rangle = \langle \bv,\Cmp\rangle^{2} + \sum_{t=2}^{T}\|u_{t}-u_{t-1}\|^{2},
\end{align*}
where $\bv\ne \zeros\in \R^{T}$. \footnote{Note that any such $\bM$ is
unique. Indeed, if there are positive definite matrices $\bM_{1}\in\R^{T\times T}$
and $\bM_{2}\in\R^{T\times T}$ such that
$\norm{\Cmp}^{2}_{\bM_{1}}=\inner{\bv,\Cmp}^{2}+P_{T}^{\norm{\cdot}^{2}_{2}}=\norm{\Cmp}^{2}_{\bM_{2}}$
for
all $\Cmp\in\R^{T}$, then $\inner{\Cmp,(\bM_{1}-\bM_{2})\Cmp}=0$ and hence
$\bM_{1}=\bM_{2}$  since $\bM_{1}$ and $\bM_{2}$ are positive definite.}
Note such a constant offset term is unavoidable: it is what captures
the static regret guarantee in the case where $u_{1}=\ldots=u_{T}=u$. Proposition
2 considers $\bv=(0,\ldots,0,1)$ to get
$\|\Cmp\|^{2}_{\bM}=\|u_{T}\|^{2}+\sum_{t=2}^{T}\|u_{t}-u_{t-1}\|^{2}$, though below we
will show that any vector $\bv$ would still lead to $\mathrm{Tr}(\bM^{-1})=\Omega(T^{2})$.

It is clear that the only way to construct expressions of the
form above is via matrices $\bSigma$ satisfying
\[\bSigma\Cmp=c\begin{pmatrix}u_{1}-u_{2}\\u_{2}-u_{3}\\\vdots\\u_{T-1}-u_{T}\\ \langle \bv,\Cmp\rangle\end{pmatrix},\]
where $c\in\Set{-1,1}$ and the order of the
rows indices of the vector can be permuted without loss of generality. In particular, the only matrices
that can produce these expressions (again noting that the rows can be permuted
without loss of generality) are
of the form
\begin{align*}
\bSigma = c\begin{pmatrix}1&-1&0&0&\dots&0&0\\
            0&1&-1&0&\dots&0&0\\
            0&0&1&-1&\dots&0&0\\
            \vdots& & &\ddots&&&\\
            0&0&0&0&\dots&1&-1\\
            \hline
            v_{1}&v_{2}&v_{3}&v_{4}&\ldots&v_{T-1}&v_{T}
          \end{pmatrix}
  =:
  c\begin{pmatrix}
    \bDelta\\
    \hline\\
    \bv^{\top}
  \end{pmatrix},
\end{align*}
so $\bM=\bSigma^{\top}\bSigma=\bDelta^\top\bDelta + \bv \bv^\top$.
Moreover, $\bDelta^{\top}\bDelta$ is a symmetric matrix
with a unique zero eigenvalue (corresponding to
vectors in the span of $\ones=(1,\ldots,1)\in\R^{T}$),
so
applying
\Cref{lemma:perturbation-bound},
\begin{align*}
  \Tr(\bM^{\inv})=\Tr((\bDelta^{\top}\bDelta + \bv\bv^{\top})^{\inv})
  &\ge
    \norm{\bv}^{2}+\sum_{t=2}^{T}\frac{1}{\lambda_{t}(\bDelta^{\top}\bDelta)}.
\end{align*}
Now, define $\bv_{0}=(0,\ldots,0,1)\in\R^{T}$ and observe that
$\bM_{0}:=\bDelta^{\top}\bDelta+\bv_{0}\bv_{0}^{\top}$ is precisely the
matrix studied in \Cref{prop:sqr-trade-off}. We have via the
interlacing property of rank-1 updates to symmetric matrices that
$\lambda_{t}(\bDelta^{\top}\bDelta)\le \lambda_{t}(\bM_{0})$ \citep[Theorem
8.1.8]{golub2013matrix},
so overall we have
\begin{align*}
  \Tr(\bM^{\inv})
  &\ge
    \sum_{t=2}^{T}\frac{1}{\lambda_{t}(\bM_{0})}
    =
    \sum_{t=2}^{T}\lambda_{t}(\bM_{0}^{\inv})\\
  &=
    \Tr(\bM_{0}^{\inv})-\lambda_{\max}(\bM_{0}^{\inv})\\
  &\ge
    \Tr(\bM_{0}^{\inv})-\frac{9}{10}\Tr(\bM_{0}^{\inv})\\
  &=
    \frac{1}{10}\Tr(\bM_{0}^{\inv}) = \frac{1}{10}\frac{T(T+1)}{2}
\end{align*}
where the last inequality applies \Cref{lemma:new-eigen-bound} to bound
$\lambda_{\max}(\bM_{0}^{\inv})$ and recalls $\Tr(\bM_{0}^{\inv})=\frac{T(T+1)}{2}$
from \Cref{prop:sqr-trade-off}.

Hence, the variance penalty will still be $\Omega(T^{2})$ regardless of the choice of
bias $\inner{\bv,\Cmp}^{2}$ in the variability measure. Combined with
our lower bound in \Cref{thm:pf-lb}, it follows
that adapting to the squared path-length necessarily implies a variance penalty
of $\Tr(\bM^{\inv})\ge \Omega(T^{2})$, leading to a vacuous regret upper bound.

\subsubsection{Proof of Lemma~\ref{lemma:perturbation-bound}}%
\label{app:perturbation-bound}
\PerturbationBound*
\begin{proof}
  Let $\bA:=\bB+\bv\bv^{\top}$. Since $\bB$ is symmetric, we have via
  the interlacing property that there is an $a_{1},\ldots,a_{T}\ge 0$
  such that $\sumtT a_{t}=\norm{\bv}^{2}$ and
  $\lambda_{t}(\bA)=\lambda_{t}(\bB)+a_{i}$
  \cite[see, \textit{e.g.}, Theorem 8.1.8 in ][]{golub2013matrix}. Hence,
  \begin{align*}
    \Tr((\bB+\bv\bv^{\top})^{\inv})
    &=
      \Tr(\bA^{\inv})
    =
      \sumtT \lambda_{t}(\bA^{\inv}) = \sumtT \frac{1}{\lambda_{t}(\bA)} \\
    &= \sumtT \frac{1}{\lambda_{t}(\bB)+a_{i}}
    \ge
      \min_{\substack{a_{1},\ldots,a_{T}\ge 0\\\sumtT a_{t}=\norm{\bv}^{2}}}\sumtT \frac{1}{\lambda_{t}(\bB)+a_{i}}~.
  \end{align*}

  To analyze the constrained optimization in the last line, let
  $\alpha_{1},\ldots,\alpha_{T}\ge 0$, $\beta\in\R$, and define the Lagrangian
  \begin{align*}
    L(a_{1},\ldots,a_{T},\alpha_{1},\ldots,\alpha_{T},\beta)=\sumtT \frac{1}{\lambda_{t}(\bB)+a_{t}}-\sumtT \alpha_{t}a_{t} + \beta\brac{\sumtT a_{t}-\norm{\bv}^{2}}.
  \end{align*}
  For any $t$, we have
  \begin{align*}
    \frac{\partial L}{\partial a_{t}} = \frac{-1}{(\lambda_{t}(\bB)+a_{t})^{2}}-\alpha_{t}+\beta = 0 \iff a_{t} = \frac{1}{\sqrt{\beta-\alpha_{t}}} - \lambda_{t}(\bB)~.
  \end{align*}
  Plugging this into the dual
  $D(\alpha_{1},\ldots,\alpha_{T},\beta)=\min_{a_{1},\ldots,a_{T}}L(a_{1},\ldots,a_{T},\alpha_{1},\ldots,\alpha_{T},\beta)$
  we have
  \begin{align*}
    D(\alpha_{1},\ldots,\alpha_{T},\beta)
    &=
      \sumtT \sqrt{\beta-\alpha_{t}} -\sumtT \alpha_{t}\brac{\frac{1}{\sqrt{\beta-\alpha_{t}}}-\lambda_{t}(\bB)} + \beta\brac{\sumtT \frac{1}{\sqrt{\beta-\alpha_{t}}}-\lambda_{t}(\bB)-\norm{\bv}^{2}}\\
    &=
      \sumtT \sqrt{\beta-\alpha_{t}} +\sumtT (\beta-\alpha_{t})\frac{1}{\sqrt{\beta-\alpha_{t}}}+\sumtT (\alpha_{t}-\beta)\lambda_{t}(\bB)-\beta\norm{\bv}^{2}\\
    &=
      2\sumtT \sqrt{\beta-\alpha_{t}}+\sumtT (\alpha_{t}-\beta)\lambda_{t}(\bB)-\beta\norm{\bv}^{2}~.
  \end{align*}
  The derivatives of the dual \wrt{} $\alpha_{t}$ are
  \begin{align*}
    \frac{\partial D}{\partial\alpha_{t}} = \frac{-1}{\sqrt{\beta-\alpha_{t}}}+\lambda_{t}(\bB)~.
  \end{align*}
  Observe that
  for $\lambda_{1}(\bB)=0$, we have
  $\frac{\partial D}{\partial\alpha_{t}} = - \frac{1}{\sqrt{\beta-\alpha_{t}}}\le 0$, so $D$ is
  decreasing in $\alpha_{1}$, so the dual is maximized when $\alpha_{1}=0$.
  Using the relation $a_{1}=\frac{1}{\sqrt{\beta-\alpha_{1}}}-\lambda_{1}(\bB)$ above we have
  $a_{1}=\frac{1}{\sqrt{\beta-\alpha_{1}}}-\lambda_{1}(\bB)=\frac{1}{\sqrt{\beta}}$. Equating the
  other derivatives for $t>1$ to zero we have
  \begin{align*}
    &\frac{1}{\sqrt{\beta-\alpha_{t}}}=\lambda_{t}(\bB)\implies \lambda_{t}(\bB) +a_{t}=\lambda_{t}(\bB)\\
    &\qquad\implies a_{t}=0 \quad\forall t>1
  \end{align*}
  where we used the relationship
  $a_{t}=\frac{1}{\sqrt{\beta-\alpha_{t}}}-\lambda_{t}(\bB)$ from above. Finally,
  the optimal $\beta$ is such that
  $\sumtT a_{t}=\frac{1}{\sqrt{\beta}}=\norm{\bv}^{2}$, so overall we have
  \begin{align*}
    \min_{\substack{a_{1},\ldots,a_{T}\ge 0\\ \sumtT a_{t}=\norm{\bv}^{2}}} \sum_{t=1}^{T}\frac{1}{\lambda(\bB)+a_{i}}
    &=
    \frac{1}{\sqrt{\beta-\alpha_{1}}}+\sum_{t=2}^{T}\frac{1}{\lambda_{t}(\bB)+a_{i}}
    =
    \frac{1}{\sqrt{\beta}}+\sum_{t=2}^{T}\frac{1}{\lambda_{t}(\bB)}\\
    &=
    \norm{\bv}^{2} + \sum_{t=2}^{T}\frac{1}{\lambda_{t}(\bB)}~. \qedhere
  \end{align*}

\end{proof}
\subsubsection{Proof of Lemma~\ref{lemma:new-eigen-bound}}%
\label{app:new-eigen-bound}
\NewEigenBound*
\begin{proof}
  The matrix $\bM^{\inv}=\bSigma^{\inv}\bSigma^{-\top}$ is symmetric and
  positive definite, hence has real eigenvalues.
  The eigenvalues of $\bM^{\inv}$ can be bound
  in terms of its trace as follows (see, \eg{}, Theorem 2.1
  \citet{wolkowicz1980bounds}, provided for convenience in \Cref{thm:eigen-bound}):
  \begin{align*}
    \lambda_{\max{}}(\bM^{\inv})\le \frac{\Tr\brac{\bM^{\inv}}}{T}+\sqrt{(T-1)\sbrac{\frac{\Tr\brac{\bM^{-\top}\bM^{\inv}}}{T}-\brac{\frac{\Tr\brac{\bM^{\inv}}}{T}}^{2}}}~.
  \end{align*}

    Next, observe that by
    \Cref{lemma:diff-op-properties},
    matrix $\bSigma^{\inv}$ is an upper-triangular matrix of all $1$'s,
    so that
    \begin{align*}
    [\bM^{\inv}]_{ij}
    &=\sbrac{\bSigma^{-1}\bSigma^{-\top}}_{ij} = \sum_{k\in[t]} \Sigma_{ik}^{\inv}\Sigma_{kj}^{-\top}
    =
        \sum_{k\in[T]}\Sigma_{ik}^{^{\inv}}\Sigma_{jk}^{\inv}\\
    &=
        \sum_{k\in [T]}\indicator\Set{k\ge i}\indicator\Set{k\ge j} = T-\max\Set{i,j}+1,
    \end{align*}
    Hence,
    \begin{align*}
    \Tr\brac{\bM^{\inv}}= \sumtT [\bM^{\inv}]_{ii}=\sumtT (T-t+1)=\sumtT t=\frac{T(T+1)}{2}.
    \end{align*}
    Moreover,
    \begin{align*}
    \Tr\brac{\bM^{-\top}\bM^{\inv}}
    &=
        \sumtT[\bM^{-\top}\bM^{\inv}]_{tt}=\sumtT \sum_{k=1}^{t}M^{-\top}_{tk}M^{\inv}_{kt}\\
    &=\sumtT\sum_{k=1}^{t}(M^{\inv}_{kt})^{2} = \sumtT \sum_{k=1}^{t}(T-\Max{t,k}+1)^{2}\\
    &=
        \frac{T(T+1)^{2}(T+2)}{12}~.
    \end{align*}
    Thus,
    \begin{align*}
    \sbrac{\frac{\Tr\brac{\bM^{-\top}\bM^{\inv}}}{T}-\brac{\frac{\Tr\brac{\bM^{\inv}}}{T}}^{2}}
    &=
        \frac{(T+1)^{2}(T+2)}{12}-\frac{(T+1)^{2}}{4}\\
        &=
        \frac{(T+1)^{2}}{4}\sbrac{\frac{T+2}{3}-1}\\
        &=
        \frac{(T+1)^{2}}{4}\frac{T-1}{3}~.
    \end{align*}
    Overall, $\lambda_{\max}(\bM^{\inv})$ is bounded by
    \begin{align*}
    \lambda_{\max}(\bM^{\inv})
    &\le
        \frac{\Tr\brac{\bM^{\inv}}}{T}+\sqrt{(T-1)\frac{(T+1)^{2}}{4}\frac{T-1}{3}}\\
    &=
        \frac{T+1}{2}+\frac{(T+1)(T-1)}{2\sqrt{3}}\\
    &=
        \frac{T(T+1)}{2\sqrt{3}}+\frac{T+1}{2}\sbrac{1-\frac{1}{\sqrt{3}}}\\
    &\le
        \frac{T(T+1)}{2\sqrt{3}}+\frac{T(T+1)}{2}\frac{1}{4}
        \le
        \frac{9}{10}\frac{T(T+1)}{2} = \frac{9}{10}\Tr(\bM^{\inv}),
    \end{align*}
    where the last line observes that $1-\frac{1}{\sqrt{3}}\le \half\le \frac{T}{4}$ for
    $T\ge 2$ and the fact that $\frac{1}{\sqrt{3}}+\frac{1}{4}\approx 0.83\le \frac{9}{10}$.
\end{proof}

\section{Proofs for Section~\ref{sec:applications} (\SecApplications)}

\subsection{Details on the 1-Dimensional Reduction}
\label{app:1d-redux}

In this section, for completeness we provide the details of 
the 1-dimensional reduction
of \citet{cutkosky2018black}, specialized to
dual weighted-norm pairs $(\norm{\cdot}_{\bM},\norm{\cdot}_{\bM^{\inv}})$ as well as its regret guarantee.

For concreteness, we choose adaptive FTRL with AdaGrad-norm stepsizes~\citep{StreeterM10}
as the direction learner. For simplicity we use the scale-free version of
\cite{orabona2018scale}, so that the direction learner's update
is slightly simpler, not requiring prior knowledge of the Lipschitz constant $\mathfrak{G}\ge \norm{\Gt}_{\bM^{\inv}}$.

Using \citet[Theorem 2]{cutkosky2018black}, we have that the regret of \Cref{alg:1d-redux} is equal to
\[
R_T(\Cmp)=R^{\cA}_T(\|\Cmp\|_{\bM})+\|\cmp\|_{\bM} R^\text{direction}_T\left(\frac{\Cmp}{\|\Cmp\|_{\bM}}\right), \ \forall \Cmp \in \R^{d T},
\]
where $R^{\cA}_T$ is the regret of $\cA$ over a sequence of $G$-Lipschitz linear losses and $R_T^\text{direction}$ is the regret of (scale-free) adaptive FTRL with a feasible set equal to the unitary ball defined by $\|\cdot\|_{\bM}$.

Choosing the algorithm $\cA$ to be \citep[Algorithm 1]{jacobsen2022parameter}, we have
\[
R^{\cA}_{T}(\|\Cmp\|_{\bM})
\le \scO\brac{\mathfrak{G} \epsilon + \norm{\Cmp}_{\bM}\sbrac{\sqrt{V_{T}\Log{\frac{\norm{\Cmp}_{\bM}\sqrt{V_{T}}\Lambda_{T}}{\mathfrak{G}\epsilon}+1}}\maxOp\mathfrak{G}\Log{\frac{\norm{\Cmp}_{\bM}\sqrt{V_{T}}\Lambda_{T}}{\epsilon\mathfrak{G}}}}},
\]
where $V_{T}=\sumtT \norm{\Gt}^{2}_{\bM^{\inv}}$ and $\Lambda_{T}=\log^{2}(\sumtT \norm{\Gt}^{2}_{\bM^{\inv}}/\mathfrak{G}^{2})\le \scO(\log^{2}T)$.

Focusing now on the regret of the direction learner,
define the distance generating function $\psi(\tilde{\bx})=\frac{1}{2} \|\tilde{\bx}^2\|_{\bM}$. Using \citep[Theorem 4.3]{Orabona19}, we have that $\psi$ is 1-strongly convex \wrt{} $\|\cdot\|_{\bM}$.
Hence, using the regret guarantee of Scale-free FTRL, \ie{}, Theorem 1 of
\citet{orabona2018scale}, for any $\tilde\bv\in\R^{dT}$ such that $\norm{\tilde\bv}_{\bM}\le 1$ the regret of the direction learner is
\begin{align*}
R_{T}^{\text{Direction}}(\tilde\bv)
  &\le
  \sbrac{\half\norm{\tilde\bv}_{\bM}^{2}+2.75}\sqrt{\sumtT \norm{\Gt}^{2}_{\bM^{\inv}}} + 3.5 \max_{t\le T}\norm{\Gt}_{\bM^{\inv}}
  \le
  \scO\brac{\sqrt{\sumtT \norm{\Gt}^{2}_{\bM^{\inv}}}}.
\end{align*}
Applying this with $\tilde\bv=\frac{\Cmp}{\norm{\Cmp}_{\bM}}$
and combining with the previous two displays leads to the
bound stated in the proof of \Cref{thm:simple-dynamic}.


\subsection{The Haar Matrices and their Properties}%
\label{app:haar}

In this section we provide some useful supporting lemmas related
to the Haar matrices $\bH_{n}$. We first introduce the
\emph{Haar basis vectors},  which make up the columns of the matrix $\bH_{n}$.
Throughout this section we assume for simplicity that $T$ is a power of 2.
\begin{definition}\label{def:haar-basis}
  For any $\tau\in\Set{2^{i}:i=1:\log_{2}(T)}$ and
  $i\in[T/\tau]$, the Haar basis vector
  at timescale $\tau$ and location $i$
  is the vector in $\R^{T}$ with entries
  \begin{align}
    [\bh_{i}^{(\tau)}]_{t} = \begin{cases}
                        1&\text{if }t\in[\half \tau(i-1)+1,\half \tau i]\\
                        -1&\text{if }t\in[\half \tau i+1, \tau i]\\
                        0&\text{otherwise}
                  \end{cases}\label{eq:haar-basis-vectors}
  \end{align}
\end{definition}
The Haar basis vectors are often arranged into the columns of a matrix as follows:
\begin{align*}
  \cH_{n}=\pmat{\bh_{0}&\bh^{(T)}_{1}&\bh^{(T/2)}_{1}&\bh^{(T/2)}_{2}&\bh^{(T/4)}_{1}&\bh_{2}^{(T/4)}&\bh^{(T/4)}_{3}&\bh^{(T/4)}_{4}&\cdots&\bh^{(2)}_{T/2}},
\end{align*}
where $\bh_{0}=(1,1,\ldots,1)^{\top}\in\R^{T}$. This matrix is referred to as the
(unnormalized) Haar basis matrix of order $n=\log_{2}(T)$. It is well-known that
$\bH_{n}$ has the following equivalent recursive form\ \citep{steeb1997matrix,falkowski1998generalized,stankovic2003haar}:
\begin{align}
  \cH_{0} &= (1),\nonumber\\
  \cH_{n} &= \begin{pmatrix}
            \cH_{n-1}\otimes\begin{pmatrix}1\\1\end{pmatrix}& \bI_{2^{n-1}}\otimes\begin{pmatrix}1 \\-1\end{pmatrix}
          \end{pmatrix}~.\label{eq:haar}
\end{align}
So, for instance, we have
\begin{align*}
  \cH_{1}
  &= \begin{pmatrix}
           \cH_{0}\otimes \begin{pmatrix}1\\1\end{pmatrix}& \bI_{2^{0}}\otimes\begin{pmatrix}1 \\-1\end{pmatrix}
        \end{pmatrix}
  = \begin{pmatrix}
           (1)\otimes \begin{pmatrix}1\\1\end{pmatrix}& (1)\otimes\begin{pmatrix}1 \\-1\end{pmatrix}
        \end{pmatrix}
  = \begin{pmatrix}
      1 &1\\
      1&-1
    \end{pmatrix},\\
  \cH_{2}
  &=
    \begin{pmatrix}
      \begin{pmatrix}
        1 &1\\
        1&-1
      \end{pmatrix}
      \otimes\begin{pmatrix}1\\1\end{pmatrix}
      & \begin{pmatrix}1&0\\0&1\end{pmatrix}\otimes\begin{pmatrix}1 \\-1\end{pmatrix}
    \end{pmatrix}
    =\begin{pmatrix}
       1 &1& 1&0\\
       1& 1& -1&0\\
       1& -1 &0&1\\
       1& -1&0&-1
    \end{pmatrix},
\end{align*}
and so on. For our purposes, we will primaly work in terms
of the matrices $\bH_{n}$ rather than the basis vectors $\bh_{i}^{(\tau)}$.
The
main utility of defining the basis vectors $\bh_{i}^{(\tau)}$
is that their definition easily implies the
following useful result, which states that
the Haar basis vectors are \emph{sparsely supported}
\wrt{} time.
\begin{restatable}{proposition}{SparseSupport}\label{prop:sparse-support}
  Let $n=\log_{2}T$ and let $\bH_{n}\in\R^{T\times T}$ be the unnormalized Haar
  basis matrix of order $n$. Then for any $t\in[T]$, there
  at most $1+\log T$ indices $i$ for which $[\bH_{n}]_{t,i}\ne 0$.
\end{restatable}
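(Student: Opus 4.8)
The plan is to read off the conclusion from the explicit description of the columns of $\bH_{n}$. Recall from \Cref{def:haar-basis} and the display immediately following it that the columns of $\bH_{n}$ consist of the constant vector $\bh_{0}=(1,\dots,1)^{\top}\in\R^{T}$ together with the Haar basis vectors $\bh_{i}^{(\tau)}$ ranging over timescales $\tau\in\Set{2^{j}:j=1,\dots,\log_{2}T}$ and locations $i\in[T/\tau]$. Fixing a row index $t\in[T]$, I would bound the number of columns whose $t^{\text{th}}$ entry is nonzero by organizing these columns by their timescale $\tau$ and showing that each timescale contributes exactly one such column.

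The key observation, immediate from \Cref{eq:haar-basis-vectors}, is that for a fixed $\tau$ the vector $\bh_{i}^{(\tau)}$ is supported precisely on the interval $[\tau(i-1)+1,\tau i]$, and on this interval all of its entries equal $+1$ or $-1$ and are therefore nonzero. Hence $[\bh_{i}^{(\tau)}]_{t}\ne 0$ if and only if $t\in[\tau(i-1)+1,\tau i]$. Since the intervals $\{[\tau(i-1)+1,\tau i]:i\in[T/\tau]\}$ partition $[T]$ (using that $T/\tau$ is an integer because $T=2^{n}$), exactly one location $i$ --- namely $i=\lceil t/\tau\rceil$ --- satisfies this. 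So at each of the $\log_{2}T$ admissible timescales there is exactly one column that is nonzero in row $t$; adding the constant column $\bh_{0}$ (whose $t^{\text{th}}$ entry is $1\ne 0$) gives at most, and in fact exactly, $1+\log_{2}T$ columns with a nonzero entry in row $t$, which is the claim.

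There is essentially no obstacle here beyond bookkeeping; the only point worth stating carefully is that \emph{every} entry of $\bh_{i}^{(\tau)}$ inside its support interval is nonzero, so that ``nonzero at coordinate $t$'' coincides with ``$t$ lies in the support interval'' --- this is what lets the disjointness of supports across a fixed timescale translate directly into the count. For cleanliness I would also note that $\bigl(2^{1},2^{2},\dots,2^{\log_{2}T}\bigr)$ enumerates exactly $\log_{2}T$ distinct timescales and that $1+\sum_{j=1}^{\log_{2}T}2^{n-j}=2^{n}=T$, which matches the total number of columns of $\bH_{n}$ and confirms no column has been miscounted.
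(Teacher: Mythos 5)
Your proposal is correct and follows essentially the same route as the paper's own (one-sentence) proof: at each of the $\log_{2}T$ timescales the supports of the vectors $\bh_{i}^{(\tau)}$ partition $[T]$, so exactly one column per timescale is nonzero in row $t$, and the all-ones column $\bh_{0}$ contributes the final $+1$. Your reading of the support as $[\tau(i-1)+1,\tau i]$ with all entries $\pm 1$ on it is the intended one (it matches the recursive construction and the examples $\cH_{1},\cH_{2}$), and your closing count $1+\sum_{j}2^{n-j}=T$ is a nice sanity check not present in the paper.
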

The proof follows immediately from \Cref{def:haar-basis} (\ie{}, any $t$ can
fall into only one of the intervals covered at each of the $\log_{2}(T)$
time-scales) and accounting for the additional column $\bh_{0}$ of
all $1$'s.

In what follows, we will also use the following
well-known relationship between the vec operator and
the Kronecker product
(see, \textit{e.g.},
\citet[Chapter 2.11]{steeb1997matrix}).
\begin{restatable}{proposition}{KroneckerVec}\label{prop:kronecker-vec}
  Let $\bA$, $\bB$, and $\bC$ be matrices of appropriate dimensions
  such that the product $\bA \bB \bC$ exists. Then, $\vecOp(\bA \bB \bC)=(\bC^{\top}\otimes \bA)\vecOp(\bB)$.
\end{restatable}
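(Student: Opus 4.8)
The plan is to prove this classical identity by reducing to the rank-one case. First I would write the middle matrix $\bB\in\R^{p\times q}$ in terms of its columns $\bb_{1},\ldots,\bb_{q}$ as $\bB=\sum_{j=1}^{q}\bb_{j}\e_{j}^{\top}$, where $\e_{j}$ denotes the $j^{\text{th}}$ standard basis vector of $\R^{q}$. Since both the map $\bX\mapsto\bA\bX\bC$ and the map $\vecOp$ are linear, it suffices to establish the claimed identity when $\bB$ is a single rank-one term $\bb_{j}\e_{j}^{\top}$ and then sum over $j$.

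Second, for the rank-one case I would use the elementary factorization $\bA(\bb_{j}\e_{j}^{\top})\bC=(\bA\bb_{j})(\bC^{\top}\e_{j})^{\top}$ together with the base identity $\vecOp(\bu\bv^{\top})=\bv\otimes\bu$, which is immediate from the column-stacking definition of $\vecOp$ (the $k^{\text{th}}$ block of $\vecOp(\bu\bv^{\top})$ is $v_{k}\bu$). This gives $\vecOp\!\big(\bA(\bb_{j}\e_{j}^{\top})\bC\big)=(\bC^{\top}\e_{j})\otimes(\bA\bb_{j})$.

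Third, I would invoke the mixed-product property of the Kronecker product, already used elsewhere in the paper in the form $(\bA\otimes\bB)(\bC\otimes\bD)=\bA\bC\otimes\bB\bD$, to write $(\bC^{\top}\e_{j})\otimes(\bA\bb_{j})=(\bC^{\top}\otimes\bA)(\e_{j}\otimes\bb_{j})$. Summing over $j$ and pulling the constant matrix $\bC^{\top}\otimes\bA$ out of the sum yields $\vecOp(\bA\bB\bC)=(\bC^{\top}\otimes\bA)\sum_{j=1}^{q}(\e_{j}\otimes\bb_{j})$. Finally I would observe that $\sum_{j=1}^{q}\e_{j}\otimes\bb_{j}$ is exactly $\vecOp(\bB)$, since its $j^{\text{th}}$ block equals $\bb_{j}$, which completes the argument.

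As for obstacles: there really is no substantive difficulty here — this is a textbook identity and each step above is a one-line verification. The only thing requiring mild care is bookkeeping the column-stacking convention consistently across $\vecOp$, the rank-one base case, and the mixed-product property, so that the block structure of $(\bC^{\top}\otimes\bA)\vecOp(\bB)$ matches that of $\vecOp(\bA\bB\bC)$. An alternative, fully mechanical route that sidesteps even this is to compare the two sides entry-by-entry using $[\bA\bB\bC]_{ik}=\sum_{r,s}A_{ir}B_{rs}C_{sk}$ and the definition of the Kronecker product, but the rank-one decomposition is cleaner to present and I would use it in the final write-up.
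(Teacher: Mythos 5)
Your argument is correct and complete: the rank-one decomposition $\bB=\sum_{j}\bb_{j}\e_{j}^{\top}$, the base identity $\vecOp(\bu\bv^{\top})=\bv\otimes\bu$, and the mixed-product property together give exactly the claimed identity, and each step checks out under the paper's column-stacking and Kronecker conventions. Note that the paper does not actually prove this proposition --- it is stated as a well-known fact with a citation to \citet[Chapter 2.11]{steeb1997matrix} --- so your write-up is the standard textbook derivation and would serve as a valid self-contained proof.
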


The following three lemmas will be used to prove the guarantees of the algorithm
characterized in \Cref{sec:haar} (\Cref{prop:haar-trade-off,prop:haar-cmput}).
\begin{restatable}{lemma}{InverseHaar}\label{lemma:inverse-haar}
  Let $n=\log_{2}(T)$, $\bv=(v_{1},\ldots,v_{T})^{\top}\in\R^{T}$, and let $\cH_{n}$ be the unnormalized
  Haar basis matrix of order $n$. Then
  \begin{align*}
    \cH_{n}^{T}\bv= \pmat{\cH_{n-1}^{\top}\bv_{+}\\I_{2^{n-1}}\bv_{-}},
  \end{align*}
  where
  \begin{align*}
    \bv_{+} =\pmat{v_{1}+v_{2}\\v_{3}+v_{4}\\\vdots\\v_{T-1}+v_{T}},\quad
    \bv_{-}=\pmat{v_{1}-v_{2}\\v_{3}-v_{4}\\\vdots\\ v_{T-1}-v_{T}}.
  \end{align*}
\end{restatable}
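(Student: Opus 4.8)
The plan is to prove \Cref{lemma:inverse-haar} directly from the recursive definition of the Haar basis matrix in \Cref{eq:haar}, exploiting the block structure and the mixed-product property of the Kronecker product. First I would write $\cH_n = \brac{\cH_{n-1}\otimes\SmallMatrix{1\\1} \;\; \bI_{2^{n-1}}\otimes\SmallMatrix{1\\-1}}$, so that transposing the block-column structure gives
\[
  \cH_n^{\top} = \pmat{\brac{\cH_{n-1}\otimes\SmallMatrix{1\\1}}^{\top}\\[2pt] \brac{\bI_{2^{n-1}}\otimes\SmallMatrix{1\\-1}}^{\top}} = \pmat{\cH_{n-1}^{\top}\otimes\brac{1\ 1}\\[2pt] \bI_{2^{n-1}}\otimes\brac{1\ -1}},
\]
using the transpose property $(\bA\otimes\bB)^{\top}=\bA^{\top}\otimes\bB^{\top}$.

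Next I would apply this to the vector $\bv\in\R^{T}$ with $T=2^n$. The key observation is that for a $1\times 2$ block matrix $\bC\otimes\brac{a\ b}$ acting on $\bv$, the result is $\bC$ applied to the vector whose $k$-th entry is $a v_{2k-1}+b v_{2k}$ — this follows from the mixed-product property $(\bC\otimes\brac{a\ b})\bv = (\bC\otimes\brac{a\ b})(\bI_{2^{n-1}}\otimes\bI_2)\vecOp(\bv)$ viewed appropriately, or more simply by a direct index computation pairing consecutive entries of $\bv$. Taking $a=b=1$ in the top block yields $\cH_{n-1}^{\top}\bv_{+}$ with $\bv_{+}=(v_1+v_2,\dots,v_{T-1}+v_T)^{\top}\in\R^{2^{n-1}}$, and taking $a=1,b=-1$ in the bottom block yields $\bI_{2^{n-1}}\bv_{-}$ with $\bv_{-}=(v_1-v_2,\dots,v_{T-1}-v_T)^{\top}$. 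Stacking these gives exactly the claimed identity $\cH_n^{\top}\bv = \SmallMatrix{\cH_{n-1}^{\top}\bv_{+}\\ \bI_{2^{n-1}}\bv_{-}}$.

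The argument is essentially a one-line unfolding of the definition, so there is no serious obstacle; the only mild care needed is to correctly track which half of $\cH_n$ corresponds to which Kronecker factor after transposition (the even/odd indexing of $\bv$ when pairing consecutive entries), and to confirm the dimensions match — $\cH_{n-1}^{\top}$ is $2^{n-1}\times 2^{n-1}$ acting on $\bv_{+}\in\R^{2^{n-1}}$, and $\bI_{2^{n-1}}$ acts on $\bv_{-}\in\R^{2^{n-1}}$, for a total of $2^n=T$ entries as required. I would present the index-level pairing explicitly once to justify the ``$\bC\otimes\brac{a\ b}$ acts by combining consecutive pairs'' claim, and then invoke it twice. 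No appeal to the sparsity structure of \Cref{prop:sparse-support} is needed for this lemma.
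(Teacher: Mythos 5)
Your proposal is correct and follows essentially the same route as the paper's proof: both transpose the recursive block definition of $\cH_n$ from \Cref{eq:haar} and then evaluate the two resulting Kronecker blocks on $\bv$, the only cosmetic difference being that the paper justifies the final step via the identity $\vecOp(\bA\bB\bC)=(\bC^{\top}\otimes\bA)\vecOp(\bB)$ while you offer a direct index-level pairing of consecutive entries (which is equivalent and equally valid).
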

\begin{proof}
  From \Cref{eq:haar}, we have that
  \begin{align*}
    \cH^{\top}_{n}\bv
    &=
      \pmat{\cH_{n-1}\otimes \pmat{1\\1}& I_{2^{n-1}}\otimes\pmat{1\\-1}}^{\top}\bv
    =
      \pmat{\cH_{n-1}^{\top}\otimes \pmat{1&1}\\ I_{2^{n-1}}\otimes\pmat{1&-1}}\bv\\
    &=
      \pmat{\sbrac{\cH_{n-1}^{\top}\otimes \pmat{1&1}}\bv\\ \sbrac{I_{2^{n-1}}\otimes\pmat{1&-1}}\bv}~.
  \end{align*}
  Moreover, leveraging \Cref{prop:kronecker-vec} we have
  \begin{align*}
    \cH^{\top}_{n}\bv
    &=
      \pmat{
      \vecOp\brac{\pmat{1&1}\pmat{v_{1}&v_{3}&\cdots&v_{T-1}\\ v_{2}&v_{4}&\cdots&v_{T}}\cH_{n-1}}\\
      \vecOp\brac{\pmat{1&-1}\pmat{v_{1}&v_{3}&\cdots&v_{T-1}\\ v_{2}&v_{4}&\cdots&v_{T}}\cH_{n-1}}\\
    }\\
    &=
      \pmat{
      \vecOp\brac{\overbrace{\pmat{v_{1}+v_{2}&v_{3}+v_{4}&\cdots&v_{T-1}+v_{T}}}^{=\bv_{+}^{\top}}\cH_{n-1}}\\
      \vecOp\brac{\underbrace{\pmat{v_{1}-v_{2}&v_{3}-v_{4}&\cdots&v_{T-1}-v_{T}}}_{=\bv_{-}^{\top}}I_{2^{n-1}}}\\
    }\\
    &=
      \pmat{
      \cH_{n-1}^{\top}\bv_{+}\\
      I_{2^{n-1}}\bv_{-}
    }~. \qedhere
  \end{align*}
\end{proof}

\begin{restatable}{lemma}{HHt}\label{lemma:hht}
  Let $\cH_{n}$ be the unnormalized Haar basis matrix of order $n$.
  Then, $\cH_{n}\cH_{n}^{\top}$ satisfies
  \begin{align}
    \cH_{n}\cH_{n}^{\top}
    &=
    \cH_{n-1}\cH_{n-1}^{\top}\otimes \begin{pmatrix} 1&1\\1&1\end{pmatrix} + I_{2^{n-1}}\otimes \begin{pmatrix}1 &-1\\-1&1\end{pmatrix}\label{eq:hht:kronecker}\\
    &=
      \begin{pmatrix}
        \cH_{n-1}\cH_{n-1}^{\top}+\ones_{2^{n-1}}&\zeros_{2^{n-1}}\\
        \zeros_{2^{n-1}}&\cH_{n-1}\cH_{n-1}^{\top}+\ones_{2^{n-1}},
      \end{pmatrix},\label{eq:hht:recursive}
  \end{align}
  where $\ones_{2^{n-1}}$ and $\zeros_{2^{n-1}}$ are $2^{n-1}\times 2^{n-1}$ matrices of 1's and 0's respectively.
\end{restatable}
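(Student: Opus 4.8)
The plan is to prove the two displayed identities one after the other: the Kronecker form~\eqref{eq:hht:kronecker} follows directly from the recursion~\eqref{eq:haar} by elementary Kronecker algebra, and the block-recursive form~\eqref{eq:hht:recursive} is then obtained from it by induction on $n$.

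For~\eqref{eq:hht:kronecker}, I would start from~\eqref{eq:haar}, which exhibits $\cH_{n}$ as the horizontal concatenation $\begin{pmatrix}\cH_{n-1}\otimes\pmat{1\\1} & I_{2^{n-1}}\otimes\pmat{1\\-1}\end{pmatrix}$. Since a horizontally partitioned matrix satisfies $\begin{pmatrix}\bX & \bY\end{pmatrix}\begin{pmatrix}\bX & \bY\end{pmatrix}^{\top}=\bX\bX^{\top}+\bY\bY^{\top}$, it is enough to evaluate the two products; applying the transpose rule $(\bA\otimes\bB)^{\top}=\bA^{\top}\otimes\bB^{\top}$ and the mixed-product rule $(\bA\otimes\bB)(\bC\otimes\bD)=\bA\bC\otimes\bB\bD$ (both already used in the proof of \Cref{lemma:gradient-bound}) gives $(\cH_{n-1}\otimes\pmat{1\\1})(\cH_{n-1}\otimes\pmat{1\\1})^{\top}=\cH_{n-1}\cH_{n-1}^{\top}\otimes\pmat{1\\1}\pmat{1&1}=\cH_{n-1}\cH_{n-1}^{\top}\otimes\pmat{1&1\\1&1}$ and, likewise, $(I_{2^{n-1}}\otimes\pmat{1\\-1})(I_{2^{n-1}}\otimes\pmat{1\\-1})^{\top}=I_{2^{n-1}}\otimes\pmat{1&-1\\-1&1}$, whose sum is~\eqref{eq:hht:kronecker}.

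For~\eqref{eq:hht:recursive} I would induct on $n$. The base case $n=1$ is the direct check $\cH_{1}\cH_{1}^{\top}=\pmat{2&0\\0&2}=\pmat{\cH_{0}\cH_{0}^{\top}+\ones_{1}&\zeros_{1}\\\zeros_{1}&\cH_{0}\cH_{0}^{\top}+\ones_{1}}$. In the inductive step I would feed into~\eqref{eq:hht:kronecker} the induction hypothesis at order $n-1$, namely that $\cH_{n-1}\cH_{n-1}^{\top}=\pmat{\bC&\zeros\\\zeros&\bC}$ with $\bC:=\cH_{n-2}\cH_{n-2}^{\top}+\ones_{2^{n-2}}$. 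The key elementary fact is $\pmat{\bA_{1}&\zeros\\\zeros&\bA_{2}}\otimes\bB=\pmat{\bA_{1}\otimes\bB&\zeros\\\zeros&\bA_{2}\otimes\bB}$, which makes both $\cH_{n-1}\cH_{n-1}^{\top}\otimes\pmat{1&1\\1&1}$ and $I_{2^{n-1}}\otimes\pmat{1&-1\\-1&1}=\pmat{I_{2^{n-2}}&\zeros\\\zeros&I_{2^{n-2}}}\otimes\pmat{1&-1\\-1&1}$ block diagonal with the two contiguous $2^{n-1}$-dimensional halves; hence so is $\cH_{n}\cH_{n}^{\top}$, with each diagonal block equal to $\bC\otimes\pmat{1&1\\1&1}+I_{2^{n-2}}\otimes\pmat{1&-1\\-1&1}$. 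Finally I would expand $\bC=\cH_{n-2}\cH_{n-2}^{\top}+\ones_{2^{n-2}}$, use bilinearity of $\otimes$ together with $\ones_{2^{n-2}}\otimes\pmat{1&1\\1&1}=\ones_{2^{n-2}}\otimes\ones_{2}=\ones_{2^{n-1}}$, and invoke~\eqref{eq:hht:kronecker} at order $n-1$ to identify $\cH_{n-2}\cH_{n-2}^{\top}\otimes\pmat{1&1\\1&1}+I_{2^{n-2}}\otimes\pmat{1&-1\\-1&1}=\cH_{n-1}\cH_{n-1}^{\top}$, so that each diagonal block becomes $\cH_{n-1}\cH_{n-1}^{\top}+\ones_{2^{n-1}}$, which is~\eqref{eq:hht:recursive}.

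The only real difficulty is index bookkeeping: \eqref{eq:hht:kronecker} is naturally written with the interleaved Kronecker indexing of $\{1,\dots,2^{n}\}$, whereas \eqref{eq:hht:recursive} uses its partition into two contiguous halves, and the two forms coincide only because $\cH_{n-1}\cH_{n-1}^{\top}$ is already block diagonal in the corresponding sense. That is precisely why \eqref{eq:hht:recursive} cannot be read off from \eqref{eq:hht:kronecker} as a pure algebraic rearrangement: the block-diagonal structure that reconciles the two indexings must be carried from level $n-1$ to level $n$, which is what the induction does.
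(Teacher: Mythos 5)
Your proposal is correct and follows essentially the same route as the paper: the Kronecker identity is obtained from the recursion for $\cH_{n}$ via the mixed-product and transpose rules, and the block form is then established by induction, re-identifying each diagonal block through the Kronecker identity at the previous order (the paper phrases the step as $n\to n+1$ rather than $n-1\to n$, but the argument is identical). Your closing remark on why the interleaved Kronecker indexing and the contiguous-halves indexing only reconcile because block-diagonality propagates through the induction is a correct observation that the paper leaves implicit.
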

\begin{proof}
  For brevity, let us denote $\bB_{n}=\cH_{n}\cH_{n}^{\top}$.
  The first equality follows from elementary properties of
  block matrices and the Kronecker product: using the
  recursive form of $\cH_{n}$, we have
  \begin{align*}
    \bB_{n}
    &=
      \cH_{n}\cH_{n}^{\top}\nonumber\\
    &=
      \begin{pmatrix}\cH_{n-1}\otimes\begin{pmatrix}1\\1\end{pmatrix}& \bI_{2^{n-1}}\otimes\begin{pmatrix}1\\-1\end{pmatrix}\end{pmatrix}
      \begin{pmatrix}\cH_{n-1}^{\top}\otimes\begin{pmatrix}1&1\end{pmatrix}\\ \bI_{2^{n-1}}\otimes\begin{pmatrix}1&-1\end{pmatrix}\end{pmatrix}\nonumber\\
    &=
      \cH_{n-1}\otimes\begin{pmatrix}1\\1\end{pmatrix}\cH_{n-1}^{\top}\otimes\begin{pmatrix}1&1\end{pmatrix}+\bI_{2^{n-1}}\otimes\begin{pmatrix}1\\-1\end{pmatrix}\bI_{2^{n-1}}\otimes\begin{pmatrix}1&-1\end{pmatrix}\nonumber\\
    &=
      \cH_{n-1}\cH_{n-1}^{\top}\otimes\begin{pmatrix}1\\1\end{pmatrix}\begin{pmatrix}1&1\end{pmatrix}+\bI_{2^{n-1}}\otimes\begin{pmatrix}1\\-1\end{pmatrix}\begin{pmatrix}1&-1\end{pmatrix}\nonumber\\
    &=
      \cH_{n-1}\cH_{n-1}^{\top}\otimes\begin{pmatrix}1&1\\1&1\end{pmatrix}+\bI_{2^{n-1}}\otimes\begin{pmatrix}1&-1\\-1&1\end{pmatrix}\nonumber\\
    &=
      \bB_{n-1}\otimes\begin{pmatrix}1&1\\1&1\end{pmatrix}+\bI_{2^{n-1}}\otimes\begin{pmatrix}1&-1\\-1&1\end{pmatrix}~.
  \end{align*}
  To get the second expression, let us proceed by induction.
  We have $\bB_{0}=(1)$ and
  \begin{align*}
    \bB_{1}=\cH_{1}\cH_{1}^{\top}
    =
      \begin{pmatrix}
        1&1\\
        1&-1
      \end{pmatrix}
      \begin{pmatrix}
        1&1\\
        1&-1
      \end{pmatrix}^{\top}
    =
      \begin{pmatrix}
        2&0\\
        0&2
      \end{pmatrix}
    =
      \begin{pmatrix}
        \bB_{0}+\ones_{1}&\zeros_{1}\\
        \zeros_{1}&\bB_{0}+\ones_{1}
      \end{pmatrix}~.
  \end{align*}
  Next, let us assume that $\bB_{n}$
  satisfies
  \begin{align*}
    \bB_{n}
    &=
    \begin{pmatrix}
      \bB_{n-1}+\ones_{2^{n-1}}&\zeros_{2^{n-1}}\\
      \zeros_{2^{n-1}}&\bB_{n-1}+\ones_{2^{n-1}}\\
    \end{pmatrix}~.
  \end{align*}
  Then, applying the recursive form \Cref{eq:hht:kronecker} for
  $\bB_{n+1}$, we have
  \newcommand{\IplusP}{\begin{pmatrix}1&1\\1&1\end{pmatrix}}
  \newcommand{\IminusP}{\begin{pmatrix}1&-1\\-1&1\end{pmatrix}}
  \begin{align*}
    \bB_{n+1}
    &=
      \bB_{n}\otimes\IplusP + \bI_{2^{n}}\otimes\IminusP\\
    &=
      \begin{pmatrix}\bB_{n-1}+\ones_{2^{n-1}}&\zeros_{2^{n-1}}\\\zeros_{2^{n-1}}&B_{n-1}+\ones_{2^{n-1}}\end{pmatrix}\otimes\IplusP+ \bI_{2^{n}}\otimes\IminusP\\
    &=
      \begin{pmatrix}
        \bB_{n-1}\otimes\IplusP + \ones_{2^{n-1}}\otimes \IplusP&\zeros_{2^{n}}\\
        \zeros_{2^{n}}&\bB_{n-1}\otimes\IplusP + \ones_{2^{n-1}}\otimes \IplusP
      \end{pmatrix}\\
      &\qquad+
      \begin{pmatrix}
        \bI_{2^{n-1}}\otimes \IminusP&\zeros_{2^{n}}\\
        \zeros_{2^{n}}&\bI_{2^{n}}\otimes\IminusP
      \end{pmatrix}\\
    &=
      \begin{pmatrix}
        \bB_{n}+ \ones_{2^{n}}&\zeros_{2^{n}}\\
        \zeros_{2^{n}}&\bB_{n}+ \ones_{2^{n}},
      \end{pmatrix},
  \end{align*}
  where the last line observes that $\ones_{2^{n-1}}\otimes\IplusP=\ones_{2^{n}}$
  and that after adding the two
  block matrices, the top left and bottom right blocks
  are both
  \begin{align*}
    \bB_{n-1}\otimes \IplusP + \bI_{2^{n-1}}\otimes \IminusP + \ones_{2^{n}}
    =
    \bB_{n}+\ones_{2^{n}},
  \end{align*}
  via \Cref{eq:hht:kronecker}. Hence, the stated result follows
  by induction.
\end{proof}

Now using this, we have the following bound on the
norm of the high-dimensional surrogate losses.
\begin{restatable}{lemma}{HaarGradientBound}\label{lemma:haar-gradient-bound}
  Let $n=\log_{2}(T)$, $\e_{t}$ be the $t^{\text{th}}$ standard basis vector of $\R^{T}$,
  and for $\gt\in\R^{d}$ let $\G_{t}=\et\otimes \gt\in\R^{dT}$. Let $\cH_{n}$
  be a Haar matrix of order $n$ and let $\bB=\cH_{n}\otimes I_{d}$
  be it's block extension to sequence in $\R^{d}$. Then, we have
  \begin{align*}
    \norm{\Gt}_{\bB\bB^{\top}}^{2}= (\log T+1)\norm{\gt}^{2}_{2}~.
  \end{align*}
\end{restatable}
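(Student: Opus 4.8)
The plan is to reduce everything to computing a single diagonal entry of $\cH_{n}\cH_{n}^{\top}$. First I would simplify the quadratic form's matrix: using the mixed-product property $(\bA\otimes\bB)(\bC\otimes\bD)=\bA\bC\otimes\bB\bD$ and the transpose property $(\bA\otimes\bB)^{\top}=\bA^{\top}\otimes\bB^{\top}$ of the Kronecker product, we have $\bB\bB^{\top}=(\cH_{n}\otimes\bI_{d})(\cH_{n}^{\top}\otimes\bI_{d})=\cH_{n}\cH_{n}^{\top}\otimes\bI_{d}$.

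Next I would plug in $\Gt=\et\otimes\gt$ and again use the mixed-product property, exactly as in the proof of \Cref{lemma:gradient-bound}:
\begin{align*}
  \norm{\Gt}_{\bB\bB^{\top}}^{2}
  &=
  \inner{\et\otimes\gt,\,(\cH_{n}\cH_{n}^{\top}\otimes\bI_{d})(\et\otimes\gt)}
  =
  \inner{\et\otimes\gt,\,(\cH_{n}\cH_{n}^{\top}\et)\otimes\gt}\\
  &=
  (\et^{\top}\cH_{n}\cH_{n}^{\top}\et)\,(\gt^{\top}\gt)
  =
  [\cH_{n}\cH_{n}^{\top}]_{tt}\,\norm{\gt}_{2}^{2}~.
\end{align*}
So the lemma reduces to showing that every diagonal entry of $\cH_{n}\cH_{n}^{\top}$ equals $n+1=\log_{2}T+1$.

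For this last step I would invoke the recursive form \Cref{eq:hht:recursive} from \Cref{lemma:hht} and induct on $n$. The base case $\cH_{0}\cH_{0}^{\top}=(1)$ has its single diagonal entry equal to $1=0+1$. For the inductive step, \Cref{eq:hht:recursive} shows $\cH_{n}\cH_{n}^{\top}$ is block-diagonal with both diagonal blocks equal to $\cH_{n-1}\cH_{n-1}^{\top}+\ones_{2^{n-1}}$; since $\ones_{2^{n-1}}$ has all-ones diagonal, each diagonal entry of $\cH_{n}\cH_{n}^{\top}$ is one more than the corresponding diagonal entry of $\cH_{n-1}\cH_{n-1}^{\top}$, which by the inductive hypothesis is $n$. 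Hence all diagonal entries of $\cH_{n}\cH_{n}^{\top}$ equal $n+1$, and combining with the display above gives $\norm{\Gt}_{\bB\bB^{\top}}^{2}=(\log T+1)\norm{\gt}_{2}^{2}$.

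There is no real obstacle here; the only thing to be careful about is tracking that the off-diagonal contribution of $\ones_{2^{n-1}}$ is irrelevant to the diagonal computation and that the block-diagonal structure means no cross terms enter, both of which are immediate from \Cref{lemma:hht}.
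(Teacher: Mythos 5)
Your proof is correct and follows the same route as the paper: reduce to the diagonal entry $[\cH_{n}\cH_{n}^{\top}]_{tt}$ via the Kronecker mixed-product computation (the paper simply cites \Cref{lemma:gradient-bound} for this step), then read off the diagonal from the recursive form \Cref{eq:hht:recursive}, which you spell out as an explicit induction where the paper says ``it can easily be seen.'' No gaps.
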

\begin{proof}
  Using \Cref{lemma:gradient-bound}, we have that
  \begin{align*}
    \norm{\Gt}^{2}_{\bB \bB^{\top}} = \sbrac{\bH_n \bH_n^{\top}}_{tt}\norm{\gt}^{2}~.
  \end{align*}
  Moreover, using \Cref{eq:hht:recursive} it can easily be seen that
  the diagonal entries of $\bH_n \bH_n^{\top}$ are  $\log_{2}T+1$,
  so we have
  \[
    \norm{\Gt}^{2}_{\bB \bB^{\top}}\le (1+\log_{2} T)\norm{\gt}^{2}_{2}~. \qedhere
  \]
\end{proof}

\subsection{Proof of Proposition~\ref{prop:haar-trade-off}}%
\label{app:haar-trade-off}
\HaarTradeOff*
\begin{proof}
  The proof of the claim $\norm{\Gt}^{2}_{\bM^{\inv}}=\norm{\Gt}^{2}_{\cH\cH^{\top}}=\norm{\gt}^{2}_{2}\sbrac{\log_{2}(T)+1}$ is provided in
  \Cref{lemma:haar-gradient-bound}.

  To see the form of $\norm{\Cmp}_{\bM}^{2}$, let us first write
  \begin{align*}
    \norm{\Cmp}^{2}_{M}=\inner{\Cmp, [ \bH\bH^{\top}]^{\inv}\Cmp}=\inner{\bH^{\inv}\Cmp,\bH^{\inv}\Cmp}=\norm{\bH^{\inv}\Cmp}_{2}^{2}~.
  \end{align*}
  The result then follows by showing that
  \begin{align}
    \bH^{\inv}\Cmp
    &=\half
      \begin{pmatrix}
        2\cmpbar\\
        \cmpbar_{1}^{(T/2)}-\cmpbar_{2}^{(T/2)}\\
        \cmpbar_{1}^{(T/4)}-\cmpbar_{2}^{(T/4)}\\
        \cmpbar_{3}^{(T/4)}-\cmpbar_{4}^{(T/4)}\\
        \vdots\\
        \cmp_{1}-\cmp_{2}\\
        \cmp_{3}-\cmp_{4}\\
        \vdots\\
        \cmp_{T-1}-\cmp_{T}
      \end{pmatrix},\label{eq:haar-trade-off:HU}
  \end{align}
  so that
  \begin{align*}
    \norm{\bH^{\inv}\Cmp}_{2}^{2}
    &=
      \underbrace{
        \norm{\cmpbar}^{2}_{2}
      }_{\localP(T)}
      + \frac{1}{4}\underbrace{
        \norm{\cmpbar_{1}^{(T/2)}-\cmpbar_{2}^{(T/2)}}^{2}_{2}
      }_{\localP(T/2)}
    +\underbrace{
      \frac{1}{4}\norm{\cmpbar_{1}^{(T/4)}-\cmpbar_{2}^{(T/4)}}^{2}_{2}
      +\frac{1}{4}\norm{\cmpbar_{3}^{(T/4)}-\cmpbar_{4}^{(T/4)}}^{2}_{2}
    }_{\localP(T/4)}\\
    &\qquad
      +\ldots+
      \underbrace{\frac{1}{4}\norm{\cmp_{1}-\cmp_{2}}^{2}_{2}+\frac{1}{4}\norm{\cmp_{3}-\cmp_{4}}^{2}_{2}
      +\ldots+\frac{1}{4}\norm{\cmp_{T-1}-\cmp_{T}}^{2}_{2}}_{=\localP(1)},
  \end{align*}
  where for brevity we have dropped the argument $\vec{\cmp}$ on
  $\localP(\vec{\cmp},\tau)$.

  \Cref{eq:haar-trade-off:HU} is best shown
  via example; the general case is
  mostly a tedius exercise which we provide at the end.
  Assume $T=4$, then the Haar matrix of order
  $n=\log_{2}(T)=2$ is
  \begin{align*}
    \cH_{2}
    &=\begin{pmatrix}
       1 &1& 1&0\\
       1& 1& -1&0\\
       1& -1 &0&1\\
       1& -1&0&-1
      \end{pmatrix}
    =\underbrace{
      \begin{pmatrix}
       \half &\half& \frac{1}{\sqrt{2}}&0\\
       \half & \half& \frac{-1}{\sqrt{2}}&0\\
       \half & -\half &0&\frac{1}{\sqrt{2}}\\
       \half & -\half&0&\frac{-1}{\sqrt{2}}
      \end{pmatrix}}_{=:\tilde \cH_{2}}
      \underbrace{\begin{pmatrix}
        2& 0&0&0\\
        0& 2&0&0\\
        0&0&\sqrt{2}&0\\
        0&0&0&\sqrt{2}
      \end{pmatrix}}_{=:\bD_{2}}.
  \end{align*}
  It is well-known that for any $T$ the columns of $\cH_{\log_{2}(T)}$ form an orthogonal basis
of $\R^{T}$~\cite[Chapter 6.1.1]{walnut2013introduction},
  which implies that $\tilde\cH_{2}$ is orthonormal. So,
  $\tilde \cH_{2}^{\inv}=\tilde\cH_{2}^{\top}$ and
  \begin{align*}
    \cH_{2}^{\inv}
    &=
      (\tilde\cH_{2}\bD_{2})^{\inv}=\bD_{2}^{\inv}\tilde \cH_{2}^{\inv}=\bD_{2}^{\inv}\tilde\cH_{2}^{\top}\\
    &=
      \begin{pmatrix}
        \half& 0&0&0\\
        0& \half&0&0\\
        0&0&\frac{1}{\sqrt{2}}&0\\
        0&0&0&\frac{1}{\sqrt{2}}
      \end{pmatrix}
      \begin{pmatrix}
       \half &\half& \half&\half\\
       \half & \half& -\half&-\half\\
       \frac{1}{\sqrt{2}} & -\frac{1}{\sqrt{2}} &0&0\\
        0&0& \frac{1}{\sqrt{2}}&\frac{-1}{\sqrt{2}}
      \end{pmatrix}
    =
      \begin{pmatrix}
       \frac{1}{4} &\frac{1}{4}& \frac{1}{4}&\frac{1}{4}\\
       \frac{1}{4} & \frac{1}{4}& -\frac{1}{4}&-\frac{1}{4}\\
       \frac{1}{2} & -\frac{1}{2} &0&0\\
        0&0& \frac{1}{2}&\frac{-1}{2}
      \end{pmatrix},
  \end{align*}
  which leads to \Cref{eq:haar-trade-off:HU} after applying
  the Kronecker product:
  \begin{align*}
    H^{\inv}\Cmp
    &=
      \begin{pmatrix}
       \frac{\bI_{d}}{4} &\frac{\bI_{d}}{4}& \frac{\bI_{d}}{4}&\frac{\bI_{d}}{4}\\
       \frac{\bI_{d}}{4} & \frac{\bI_{d}}{4}& -\frac{\bI_{d}}{4}&-\frac{\bI_{d}}{4}\\
       \frac{\bI_{d}}{2} & -\frac{\bI_{d}}{2} &\zeros&\zeros\\
        \zeros&\zeros& \frac{\bI_{d}}{2}&\frac{-\bI_{d}}{2}
      \end{pmatrix}
    \begin{pmatrix}
      \cmp_{1}\\
      \vdots\\
      \cmp_{T}
    \end{pmatrix}
    =
    \begin{pmatrix}
      \frac{\cmp_{1}+\cmp_{2}+\cmp_{3}+\cmp_{4}}{4}\\
      \frac{\cmp_{1}+\cmp_{2}-\cmp_{3}-\cmp_{4}}{4}\\
      \frac{\cmp_{1}-\cmp_{2}}{2}\\
      \frac{\cmp_{3}-\cmp_{4}}{2}
    \end{pmatrix}
    =
    \half
    \begin{pmatrix}
      2\bar\cmp\\
      \cmpbar^{(T/2)}_{1}-\cmpbar^{(T/2)}_{2}\\
      \cmp_{1}-\cmp_{2}\\
      \cmp_{3}-\cmp_{4}
    \end{pmatrix}.
  \end{align*}

  More generally, start with $d=1$
  begin again by factoring
  \begin{align*}
    \cH_{n}^{\inv}=\bD_{n}^{\inv}\tilde\cH_{n}^{\top} = \bD_{n}^{-2}\cH_{n}^{\top},
  \end{align*}
  where now
  $\tilde\cH_{n}$ is the normalized Haar basis matrix of order $n=\log_{2}(T)$ and
  \begin{align*}
  \bD_{n}=\Diag{\sqrt{T}, \underbrace{\sqrt{T}}_{2^{0}}, \underbrace{\sqrt{T/2},\sqrt{T/2}}_{2^{1}}, \underbrace{\sqrt{T/4},\ldots,\sqrt{T/4}}_{2^{2}},\ldots,\underbrace{\sqrt{2}, \ldots,\sqrt{2}}_{2^{n-1}}}.
  \end{align*}
  The result is then attained by
  unrolling the recursion for $\cH_{n}^{\top}\Cmp$ given by
  \Cref{lemma:inverse-haar} and factoring in
  the normalization factors
  $\bD_{n}^{-2}$. The result for $d>1$
  is then immediately implied by observing that
  the block matrix $\cH_{n}^{\inv}\otimes \bI_{d}$ will act
  upon the vector components of $\Cmp\in\R^{dT}$ in an identical way to how
  $\cH_{n}^{\inv}$ acts upon a vector of scalars.
\end{proof}

\subsection{Proof of Proposition~\ref{prop:haar-cmput}}%
\label{app:haar-cmput}
\HaarCmput*
\begin{proof}
  Note that the losses passed to
  the 1-dimensional parameter-free algorithm
  are $\inner{\tilde\bv_{t}, \Gt}=\inner{\tilde\bv_{t},\et\otimes\gt}$,
  and since $\et\otimes \gt$ has only $d$ active indices
  we can compute the 1-dimensional learner's losses
  in $\scO(d)$. As such, the
  1-dimensional learner can be implemented in
  $\scO(d)$ per-round computation.

  For the direction learner, we are to show that
  each of the relevant variables can be maintained
  using only $\scO(d \log T)$ per-round computation.

  Using \Cref{prop:haar-trade-off}, we immediately
  have $V_{\tpp}=V_{t}+\norm{\Gt}_{\bM^{\inv}}^{2}=V_{t}+(\log T+1)\norm{\gt}^{2}$,
  so $V_{\tpp}$ can be maintained using only $\scO(d)$ per-round computation (\ie,
  to compute $\|\gt\|^{2}$).

  For the scaling factor $\norm{\tilde\btheta_{\tpp}}_{\bM^{\inv}}$, observe that
  \begin{align*}
    \norm{\tilde\btheta_{\tpp}}_{\bM^{\inv}}^{2}
    &=
      \norm{\Gt}^{2}_{\bM^{\inv}} + \norm{\tilde\btheta_{t}}^{2}_{\bM^{\inv}}
      +2\inner{\tilde\btheta_{t},\bM^{\inv}\Gt}.
  \end{align*}
  Hence, we again have $\scO(d)$ per-round computation
  to compute $\norm{\Gt}^{2}_{\bM^{\inv}}$, and letting $\bh_{t}=\bH_{n}^{\top}\be_{t}$
  we can decompose the last term as
  \begin{align*}
    \inner{\tilde \btheta_{t}, \brac{\bH_{n}\bH_{n}^{\top}\otimes \bI_{d}}(\et\otimes \gt)}
    &=
      \inner{\tilde \btheta_{t}, \brac{\bH_{n}\bH_{n}^{\top}\et\otimes\gt}}\\
    &=
      \inner{\sum_{i=1}^{\tmm}\be_{i}\otimes \g_{i}, \bH_{n}\bh_{t}\otimes\gt}\\
    &=
      \sum_{i=1}^{\tmm}(\be_{i}^{\top}\otimes \g_{i}^{\top})\brac{\bH_{n}\bh_{t}\otimes \gt}\\
    &=
      \sum_{i=1}^{\tmm}\be_{i}^{\top}\bH_{n}\bh_{t}\otimes \inner{\g_{i},\gt}\\
    &=
      \sum_{i=1}^{\tmm}\inner{\bh_{i},\bh_{t}}\inner{\g_{i},\gt}\\
    &=
      \inner{\sum_{i=1}^{\tmm}\bh_{i}\inner{\g_{i},\gt},\bh_{t}}\\
    &=
      \inner{\sum_{i=1}^{\tmm}\bh_{i}\g_{i}^{\top}\gt,\bh_{t}}\\
    &=
      \inner{\gt,\underbrace{\sbrac{\sum_{i=1}^{\tmm}\g_{i}\bh_{i}^{\top}}}_{=:\bLambda_{t}}\bh_{t}}.
  \end{align*}
  From \Cref{prop:sparse-support}, for any $t$ the vector $\bh_{t}=\bH_{n}^{\top}\et$
  has only $\log T+1$ active non-zero elements by construction of
  the Haar basis, so given
  $\bLambda_{t}$, the product $\bLambda_{t}\bh_{t}$
  takes a linear combination of $\log T +1$ vectors in $\R^{d}$,
  for $\scO(d\log T )$ operations.
  Note that the variable $\bLambda_{t}$ can also be maintained
  with $\scO(d\log T )$ operations since each term is $\gt \bh_{t}^{\top}$,
  which involves updating $\scO(\log T)$ columns of $\bLambda_{\tmm}\in\R^{d\times T}$.
  Hence overall we can maintain $\norm{\tilde\btheta_{\tpp}}_{\bM^{\inv}}$
  using $\scO(d\log T )$ per-round computation.

  Lastly, consider the variable $\tilde \btheta_{\tpp}$.
  Observe that we can maintain a
  variable
  $\hat\btheta_{\tpp}=-\brac{\bH_{n}^{\top}\otimes \bI_{d}}\sum_{s=1}^{t}\G_{s}$
  using $\scO(d\log T)$ computation:
  \begin{align*}
    \hat\btheta_{\tpp}
    =-\brac{\bH_{n}^{\top}\otimes \bI_{d}}\sum_{s=1}^{t}\G_{s}
    =\hat\btheta_{t}-\brac{\bH_{n}^{\top}\be_{t}\otimes \gt}
    =\hat\btheta_{t}-\brac{\bh_{t}\otimes \gt},
  \end{align*}
  since $\bh_{t}\otimes\gt$ is a block vector containing
  $\Log{T}+1$ non-zeros blocks of length $d$.
  Hence,
  \begin{align*}
    \tilde\btheta_{\tpp}
    &=
      (\bH_{n}\bH_{n}^{\top}\otimes\bI_{n})\sum_{s=1}^{t}\g_{s}
      =
      (\bH_{n}\otimes \bI_{n})(\bH_{n}^{\top}\otimes \bI_{n})\sum_{s=1}^{t}\G_{s}\\
    &=
      (\bH_{n}\otimes \bI_{n})\hat\btheta_{\tpp},
  \end{align*}
  and again via the construction of the Haar basis,
  each row of $\bH_{n}$ (\ie{}, each column of $\bH_{n}^{\top}$) has only $\log T+1$ non-zero
  entries, we can compute each $d\times 1$ block of $\tilde\btheta_{\tpp}$ using
  $\scO(d\log T )$ computation.
  Finally, observe that in order
  to implement the direction learner,
  we need only compute the $t^{\text{th}}$ $d\times 1$ block
  of $\tilde\btheta_{t}$. Indeed,
  since for each $t$, the vector $\Gt=\et\otimes\gt$ has only $d$ non-zero
  indices, it suffices to retrieve the corresponding indices of $\tilde \bv_{t}$ to
  implement direction learner.
\end{proof}
We note that the memory overhead of maintaining each of these variables can also
likely be reduced by more careful bookkeeping, and
acknowledging the fact that the algorithm only really needs to
retrieve the $t^{\text{th}}$ block of $\Wt$, since the losses are $\Gt=\et\otimes\gt$.
We omit these considerations here for brevity.

\section{Proofs for Section~\ref{sec:coupling} (\SecCoupling)}%
\label{app:coupling}

\subsection{Proof of Theorem~\ref{thm:sequence-reward-regret}}%
\label{app:sequence-reward-regret}
\SequenceRewardRegret*
\begin{proof}
  Thanks to \Cref{prop:dynamic-to-static}, the proof is essentially the same
  as the usual one. We provide the argument here for completeness.

  From \Cref{prop:dynamic-to-static}, $R_{T}(\vec{\cmp})=R_{T}^{\Seq}(\Cmp)=\sumtT \inner{\Gt,\Wt-\Cmp}$
  for $\Gt=\et\otimes\gt$ and $\Cmp=\sumtT \et\otimes\cmp_{t}$. Hence, recalling
  the definition of the Fenchel conjugate, we have
  \begin{align*}
    R_{T}(\vec{\cmp})
    &=
      \sumtT \inner{\gt,\wt-\cmp_{t}}
      =
      \sumtT\inner{\Gt,\Wt-\Cmp}
    =
      -\Wealth_{T}-\sumtT\inner{\Gt,\Cmp}\\
    &\le
      \Big\langle-\sumtT \Gt,\Cmp\Big\rangle-f_{T}^{*}\Big(-\sumtT \Gt\Big)
    \le
      \sup_{\btheta}\inner{\btheta,\Cmp}-f_{T}^{*}(\btheta)
    =
      f_{T}(\Cmp)\ .
  \end{align*}
  Similarly, for the other direction, suppose we have
  $R_{T}(\vec{\cmp})=R_{T}^{\Seq}(\Cmp)\le f_{T}(\Cmp)$ for any $\Cmp$.
  Then re-arranging, we have
  $\Wealth_{T}\ge \inner{-\sumtT\Gt, \Cmp}-f_{T}(\Cmp)$, and since this holds for
  any $\Cmp$, we can choose the one that tightens the bound to get
  $\Wealth_{T}\ge \sup_{\Cmp}\inner{-\sumtT \Gt,\Cmp}-f_{T}(\Cmp)=f_{T}^{*}(-\sumtT \Gt)$.
\end{proof}


\section{Supporting Lemmas}
\label{app:lemmas}

\begin{restatable}{lemma}{diffOpProperties}\label{lemma:diff-op-properties}
  Let $\diffOp\in\R^{T\times T}$ be the finite-difference operator,
  having entries
  \begin{align*}
    \diffOpScalar_{ij}=\begin{cases}
            1&\text{if }i=j\\
            -1&\text{if }j=i+1\\
            0&\text{otherwise}
          \end{cases}~.
  \end{align*}
  Then,
  \begin{enumerate}
    \item The inverse of $\diffOp$ the upper-triangular matrix of $1$'s:
          \begin{align*}
            \diffOpScalar_{ij}^{\inv}=\begin{cases}
                              1&\text{if }j\ge i\\
                              0&\text{otherwise}
                      \end{cases},\quad\forall i,j~.
          \end{align*}
    \item The eigenvalues of $\diffOp$ and $\diffOp^{\inv}$ are  $\lambda_{i}=1$ for all $i\in[T]$.
    \item $x\mapsto x^{\top}\diffOp x$ is positive definite.
  \end{enumerate}
  Moreover, the analogous properties hold for the block matrix
  $\diffOp\otimes \bI_d\in\R^{dT\times dT}$.
\end{restatable}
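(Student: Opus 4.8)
The plan is to verify the three numbered properties by direct computation and then lift them to the block matrix $\diffOp\otimes\bI_d$ using standard Kronecker identities. For property 1, let $\boldsymbol{U}\in\R^{T\times T}$ be the candidate inverse, $U_{ij}=1$ if $j\ge i$ and $0$ otherwise, and compute the product $\diffOp\boldsymbol{U}$ entrywise: for $i<T$ its $(i,j)$ entry is $\diffOpScalar_{ii}U_{ij}+\diffOpScalar_{i,i+1}U_{i+1,j}=U_{ij}-U_{i+1,j}=\indicator\{j=i\}$, while the $(T,j)$ entry is $U_{Tj}=\indicator\{j=T\}$; hence $\diffOp\boldsymbol{U}=\bI_T$, and since $\diffOp$ is square this gives $\boldsymbol{U}=\diffOp^{\inv}$. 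For property 2, note that both $\diffOp$ and $\diffOp^{\inv}=\boldsymbol{U}$ are upper-triangular with every diagonal entry equal to $1$, so each has characteristic polynomial $(1-\lambda)^{T}$ and all $T$ eigenvalues equal $1$.

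For property 3 the only subtlety is that $\diffOp$ is not symmetric, so "positive definite" must be read as a statement about the quadratic form, which depends only on the symmetric part: $2\bx^{\top}\diffOp\bx=\bx^{\top}(\diffOp+\diffOp^{\top})\bx$, where $\diffOp+\diffOp^{\top}$ is the tridiagonal matrix with $2$ on the diagonal and $-1$ on the two adjacent diagonals. The key step is the sum-of-squares identity
\[
  2\bx^{\top}\diffOp\bx \;=\; x_{1}^{2}+x_{T}^{2}+\sum_{i=1}^{T-1}(x_{i}-x_{i+1})^{2},
\]
obtained by expanding the right-hand side and collecting terms. The right-hand side is nonnegative and vanishes only when $x_{1}=x_{T}=0$ and $x_{i}=x_{i+1}$ for all $i$, i.e. only when $\bx=\zeros$, so $\bx\mapsto\bx^{\top}\diffOp\bx$ is positive definite.

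For the block matrix $\diffOp\otimes\bI_d$: its inverse is $\diffOp^{\inv}\otimes\bI_d=\boldsymbol{U}\otimes\bI_d$ by the mixed-product property of the Kronecker product, which is exactly the block upper-triangular matrix of $\bI_d$'s; it is itself block upper-triangular with diagonal blocks $\bI_d$, so its characteristic polynomial is $(1-\lambda)^{dT}$ and all eigenvalues equal $1$ (equivalently, the eigenvalues of a Kronecker product are the pairwise products of those of the factors). For positive definiteness of the quadratic form, write $\bz=(\bz_{1}^{\top},\dots,\bz_{T}^{\top})^{\top}$ with $\bz_{t}\in\R^{d}$; the same symmetrization yields $2\bz^{\top}(\diffOp\otimes\bI_d)\bz=\norm{\bz_{1}}_{2}^{2}+\norm{\bz_{T}}_{2}^{2}+\sum_{i=1}^{T-1}\norm{\bz_{i}-\bz_{i+1}}_{2}^{2}$, which vanishes only at $\bz=\zeros$ — alternatively, one observes that this quadratic form decomposes as a sum of $d$ copies of $\bx\mapsto\bx^{\top}\diffOp\bx$, one per coordinate slice, reducing directly to the scalar case already handled.

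I do not expect any step to be a genuine obstacle; the only place requiring a bit of care is property 3, where one must avoid the temptation to treat $\diffOp$ as symmetric and instead pass to its symmetric part — the sum-of-squares identity is the clean device for this, and it transfers verbatim (coordinatewise) to the block setting.
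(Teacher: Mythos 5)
Your proof is correct and follows essentially the same route as the paper: direct entrywise verification of the inverse, eigenvalues from triangularity, and positive definiteness via the symmetric part, with the block case handled by Kronecker identities. The only difference is in part 3, where the paper cites a textbook result for positive definiteness of the symmetric tridiagonal matrix $(\diffOp+\diffOp^{\top})/2$, whereas you give the explicit sum-of-squares identity $2\bx^{\top}\diffOp\bx=x_{1}^{2}+x_{T}^{2}+\sum_{i=1}^{T-1}(x_{i}-x_{i+1})^{2}$ — a correct and pleasantly self-contained alternative.
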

\begin{proof}
  The inverse of $\diffOp$ is
  the upper-triangular matrix $\bDelta$
  characterized by entries
  \begin{align*}
    \Delta_{ij}=\begin{cases}
            1&\text{if }j\ge i\\
            0&\text{otherwise}
            \end{cases}.
  \end{align*}
  To see why, observe that we have $\diffOpScalar_{T,T}\Delta_{T,T}=1$ and for $i<T$ we have
  \begin{align*}
    [\diffOp \bDelta]_{ij}=\sum_{i,j}\diffOpScalar_{ik}\Delta_{kj}= \Delta_{ij}-\Delta_{i+1,j} = \begin{cases}1&\text{if }i=j\\0&\text{otherwise}\end{cases},
  \end{align*}
  and likewise for $[\bDelta\diffOp]_{ij}$. Hence $\diffOp \bDelta=\bDelta\diffOp=I$ and $\bM^{\inv}=\bDelta$.

  Next, since $\diffOp$ and $\diffOp^{\inv}$ are upper-triangular, their eigenvalues are equal to
  their diagonal entries, and hence both have eigenvalues $\lambda_{i}=1$ for
  all $i$.

  To see that the asymmetric matrix $\diffOp$ is positive definite, it suffices to
  show that the symmetric part of $\diffOp$, \textit{i.e.}, the matrix $\diffOp_{S}=(\diffOp+\diffOp^{\top})/2$, is positive definite
  \citep{johnson1970positive}. Luckily, $\diffOp_{S}$ is
  also a well-known variation of the discrete difference
  operator and is known to be positive definite~
  \citep[see, \textit{e.g.}, Theorem 7.4.7 in][]{stoer1980introduction}.

  For the block matrix $\bB=\diffOp\otimes \bI_d$, the inverse is
  given immediately by the inverse property of the Kronecker product:
  $\bB^{\inv}=\brac{\diffOp\otimes \bI_{d}}^{\inv}=\diffOp^{\inv}\otimes \bI_{d}$. We also have that
  $\bB=\diffOp\otimes \bI_{d}$ and $\bB^{\inv}$ have
  eigenvalues $\lambda_{i}=1$ for all $i\in[dT]$, since both are again
  upper-triangular with $1$'s on their main diagonal.
  Finally, we have positive definiteness of $\bB$
  using the fact that the symmetric part of $\bB=\diffOp\otimes \bI_{d}$ is
  $\half(\bB+\bB^{\top})=\half(\diffOp\otimes \bI_d+\diffOp^{\top}\otimes \bI_{d})=\half(\diffOp+\diffOp^{\top})\otimes \bI_{d}$
  by the distributive property, hence $\bB$ is the Kronecker product
  of two symmetric positive definite matrices, so
  $\bB$ is positive definite \citep[Chapter 2]{steeb1997matrix}.
\end{proof}

We borrow the following eigenvalue bound from \cite{wolkowicz1980bounds}.
\begin{restatable}{theorem}{EigenBound}\label{thm:eigen-bound}
  (\citet[Theorem 2.1]{wolkowicz1980bounds}) Let $\bA$ be a symmetric $n\times n$
  matrix with eigenvalues $\lambda_{1}(\bA)\le\ldots\le\lambda_{n}(\bA)$. Then
  \begin{align*}
    \lambda_{\max}(\bA)\le \frac{\Tr(\bA)}{n}+\sqrt{(n-1)\sbrac{\frac{\Tr(\bA^{\top}\bA)}{n}-\brac{\frac{\Tr(\bA)}{n}}^{2}}}~.
  \end{align*}
\end{restatable}

\newpage
\section*{NeurIPS Paper Checklist}

\begin{enumerate}

\item {\bf Claims}
    \item[] Question: Do the main claims made in the abstract and introduction accurately reflect the paper's contributions and scope?
    \item[] Answer: \answerYes{} %
    \item[] Justification: Our abstract and introduction clear state the main claims of our paper.
    \item[] Guidelines:
    \begin{itemize}
        \item The answer NA means that the abstract and introduction do not include the claims made in the paper.
        \item The abstract and/or introduction should clearly state the claims made, including the contributions made in the paper and important assumptions and limitations. A No or NA answer to this question will not be perceived well by the reviewers. 
        \item The claims made should match theoretical and experimental results, and reflect how much the results can be expected to generalize to other settings. 
        \item It is fine to include aspirational goals as motivation as long as it is clear that these goals are not attained by the paper. 
    \end{itemize}

\item {\bf Limitations}
    \item[] Question: Does the paper discuss the limitations of the work performed by the authors?
    \item[] Answer: \answerYes{} %
    \item[] Justification: We clearly state the problem setting and the assumptions therein. We clearly state which results are optimal up to logarithmic factors. We discuss the computational complexity of our newly proposed algorithm at the end of \Cref{sec:haar}. Attaining the tightest matching logarithmic dependencies in our lower bound is still an open problem, as mentioned in the conclusion.
    \item[] Guidelines:
    \begin{itemize}
        \item The answer NA means that the paper has no limitation while the answer No means that the paper has limitations, but those are not discussed in the paper. 
        \item The authors are encouraged to create a separate "Limitations" section in their paper.
        \item The paper should point out any strong assumptions and how robust the results are to violations of these assumptions (e.g., independence assumptions, noiseless settings, model well-specification, asymptotic approximations only holding locally). The authors should reflect on how these assumptions might be violated in practice and what the implications would be.
        \item The authors should reflect on the scope of the claims made, e.g., if the approach was only tested on a few datasets or with a few runs. In general, empirical results often depend on implicit assumptions, which should be articulated.
        \item The authors should reflect on the factors that influence the performance of the approach. For example, a facial recognition algorithm may perform poorly when image resolution is low or images are taken in low lighting. Or a speech-to-text system might not be used reliably to provide closed captions for online lectures because it fails to handle technical jargon.
        \item The authors should discuss the computational efficiency of the proposed algorithms and how they scale with dataset size.
        \item If applicable, the authors should discuss possible limitations of their approach to address problems of privacy and fairness.
        \item While the authors might fear that complete honesty about limitations might be used by reviewers as grounds for rejection, a worse outcome might be that reviewers discover limitations that aren't acknowledged in the paper. The authors should use their best judgment and recognize that individual actions in favor of transparency play an important role in developing norms that preserve the integrity of the community. Reviewers will be specifically instructed to not penalize honesty concerning limitations.
    \end{itemize}

\item {\bf Theory Assumptions and Proofs}
    \item[] Question: For each theoretical result, does the paper provide the full set of assumptions and a complete (and correct) proof?
    \item[] Answer: \answerYes{} %
    \item[] Justification: All the theorems provide the full set of assumptions. All of our main results are proven explicitly either in the main text or in the appendix.
    \item[] Guidelines:
    \begin{itemize}
        \item The answer NA means that the paper does not include theoretical results. 
        \item All the theorems, formulas, and proofs in the paper should be numbered and cross-referenced.
        \item All assumptions should be clearly stated or referenced in the statement of any theorems.
        \item The proofs can either appear in the main paper or the supplemental material, but if they appear in the supplemental material, the authors are encouraged to provide a short proof sketch to provide intuition. 
        \item Inversely, any informal proof provided in the core of the paper should be complemented by formal proofs provided in appendix or supplemental material.
        \item Theorems and Lemmas that the proof relies upon should be properly referenced. 
    \end{itemize}

    \item {\bf Experimental Result Reproducibility}
    \item[] Question: Does the paper fully disclose all the information needed to reproduce the main experimental results of the paper to the extent that it affects the main claims and/or conclusions of the paper (regardless of whether the code and data are provided or not)?
    \item[] Answer: \answerNA{} %
    \item[] Justification: This is a theoretical paper that studies theoretical guarantees for online learning algorithms.
    \item[] Guidelines:
    \begin{itemize}
        \item The answer NA means that the paper does not include experiments.
        \item If the paper includes experiments, a No answer to this question will not be perceived well by the reviewers: Making the paper reproducible is important, regardless of whether the code and data are provided or not.
        \item If the contribution is a dataset and/or model, the authors should describe the steps taken to make their results reproducible or verifiable. 
        \item Depending on the contribution, reproducibility can be accomplished in various ways. For example, if the contribution is a novel architecture, describing the architecture fully might suffice, or if the contribution is a specific model and empirical evaluation, it may be necessary to either make it possible for others to replicate the model with the same dataset, or provide access to the model. In general. releasing code and data is often one good way to accomplish this, but reproducibility can also be provided via detailed instructions for how to replicate the results, access to a hosted model (e.g., in the case of a large language model), releasing of a model checkpoint, or other means that are appropriate to the research performed.
        \item While NeurIPS does not require releasing code, the conference does require all submissions to provide some reasonable avenue for reproducibility, which may depend on the nature of the contribution. For example
        \begin{enumerate}
            \item If the contribution is primarily a new algorithm, the paper should make it clear how to reproduce that algorithm.
            \item If the contribution is primarily a new model architecture, the paper should describe the architecture clearly and fully.
            \item If the contribution is a new model (e.g., a large language model), then there should either be a way to access this model for reproducing the results or a way to reproduce the model (e.g., with an open-source dataset or instructions for how to construct the dataset).
            \item We recognize that reproducibility may be tricky in some cases, in which case authors are welcome to describe the particular way they provide for reproducibility. In the case of closed-source models, it may be that access to the model is limited in some way (e.g., to registered users), but it should be possible for other researchers to have some path to reproducing or verifying the results.
        \end{enumerate}
    \end{itemize}

\item {\bf Open access to data and code}
    \item[] Question: Does the paper provide open access to the data and code, with sufficient instructions to faithfully reproduce the main experimental results, as described in supplemental material?
    \item[] Answer: \answerNA{} %
    \item[] Justification: This is a theoretical paper that studies theoretical guarantees for online learning algorithms.
    \item[] Guidelines:
    \begin{itemize}
        \item The answer NA means that paper does not include experiments requiring code.
        \item Please see the NeurIPS code and data submission guidelines (\url{https://nips.cc/public/guides/CodeSubmissionPolicy}) for more details.
        \item While we encourage the release of code and data, we understand that this might not be possible, so “No” is an acceptable answer. Papers cannot be rejected simply for not including code, unless this is central to the contribution (e.g., for a new open-source benchmark).
        \item The instructions should contain the exact command and environment needed to run to reproduce the results. See the NeurIPS code and data submission guidelines (\url{https://nips.cc/public/guides/CodeSubmissionPolicy}) for more details.
        \item The authors should provide instructions on data access and preparation, including how to access the raw data, preprocessed data, intermediate data, and generated data, etc.
        \item The authors should provide scripts to reproduce all experimental results for the new proposed method and baselines. If only a subset of experiments are reproducible, they should state which ones are omitted from the script and why.
        \item At submission time, to preserve anonymity, the authors should release anonymized versions (if applicable).
        \item Providing as much information as possible in supplemental material (appended to the paper) is recommended, but including URLs to data and code is permitted.
    \end{itemize}

\item {\bf Experimental Setting/Details}
    \item[] Question: Does the paper specify all the training and test details (e.g., data splits, hyperparameters, how they were chosen, type of optimizer, etc.) necessary to understand the results?
    \item[] Answer: \answerNA{} %
    \item[] Justification: This is a theoretical paper that studies theoretical guarantees for online learning algorithms.
    \item[] Guidelines:
    \begin{itemize}
        \item The answer NA means that the paper does not include experiments.
        \item The experimental setting should be presented in the core of the paper to a level of detail that is necessary to appreciate the results and make sense of them.
        \item The full details can be provided either with the code, in appendix, or as supplemental material.
    \end{itemize}

\item {\bf Experiment Statistical Significance}
    \item[] Question: Does the paper report error bars suitably and correctly defined or other appropriate information about the statistical significance of the experiments?
    \item[] Answer: \answerNA{} %
    \item[] Justification: This is a theoretical paper that studies theoretical guarantees for online learning algorithms.
    \item[] Guidelines:
    \begin{itemize}
        \item The answer NA means that the paper does not include experiments.
        \item The authors should answer "Yes" if the results are accompanied by error bars, confidence intervals, or statistical significance tests, at least for the experiments that support the main claims of the paper.
        \item The factors of variability that the error bars are capturing should be clearly stated (for example, train/test split, initialization, random drawing of some parameter, or overall run with given experimental conditions).
        \item The method for calculating the error bars should be explained (closed form formula, call to a library function, bootstrap, etc.)
        \item The assumptions made should be given (e.g., Normally distributed errors).
        \item It should be clear whether the error bar is the standard deviation or the standard error of the mean.
        \item It is OK to report 1-sigma error bars, but one should state it. The authors should preferably report a 2-sigma error bar than state that they have a 96\% CI, if the hypothesis of Normality of errors is not verified.
        \item For asymmetric distributions, the authors should be careful not to show in tables or figures symmetric error bars that would yield results that are out of range (e.g. negative error rates).
        \item If error bars are reported in tables or plots, The authors should explain in the text how they were calculated and reference the corresponding figures or tables in the text.
    \end{itemize}

\item {\bf Experiments Compute Resources}
    \item[] Question: For each experiment, does the paper provide sufficient information on the computer resources (type of compute workers, memory, time of execution) needed to reproduce the experiments?
    \item[] Answer: \answerNA{} %
    \item[] Justification: This is a theoretical paper that studies theoretical guarantees for online learning algorithms.
    \item[] Guidelines:
    \begin{itemize}
        \item The answer NA means that the paper does not include experiments.
        \item The paper should indicate the type of compute workers CPU or GPU, internal cluster, or cloud provider, including relevant memory and storage.
        \item The paper should provide the amount of compute required for each of the individual experimental runs as well as estimate the total compute. 
        \item The paper should disclose whether the full research project required more compute than the experiments reported in the paper (e.g., preliminary or failed experiments that didn't make it into the paper). 
    \end{itemize}
    
\item {\bf Code Of Ethics}
    \item[] Question: Does the research conducted in the paper conform, in every respect, with the NeurIPS Code of Ethics \url{https://neurips.cc/public/EthicsGuidelines}?
    \item[] Answer: \answerYes{} %
    \item[] Justification: Given the theoretical nature of this work, there are no ethical concerns to be addressed.
    \item[] Guidelines:
    \begin{itemize}
        \item The answer NA means that the authors have not reviewed the NeurIPS Code of Ethics.
        \item If the authors answer No, they should explain the special circumstances that require a deviation from the Code of Ethics.
        \item The authors should make sure to preserve anonymity (e.g., if there is a special consideration due to laws or regulations in their jurisdiction).
    \end{itemize}

\item {\bf Broader Impacts}
    \item[] Question: Does the paper discuss both potential positive societal impacts and negative societal impacts of the work performed?
    \item[] Answer: \answerNA{} %
    \item[] Justification: This is a theoretical paper that studies theoretical guarantees for online learning algorithms. As stated in the guidelines, this is foundational research and it is not tied to particular applications, let alone deployments.
    \item[] Guidelines:
    \begin{itemize}
        \item The answer NA means that there is no societal impact of the work performed.
        \item If the authors answer NA or No, they should explain why their work has no societal impact or why the paper does not address societal impact.
        \item Examples of negative societal impacts include potential malicious or unintended uses (e.g., disinformation, generating fake profiles, surveillance), fairness considerations (e.g., deployment of technologies that could make decisions that unfairly impact specific groups), privacy considerations, and security considerations.
        \item The conference expects that many papers will be foundational research and not tied to particular applications, let alone deployments. However, if there is a direct path to any negative applications, the authors should point it out. For example, it is legitimate to point out that an improvement in the quality of generative models could be used to generate deepfakes for disinformation. On the other hand, it is not needed to point out that a generic algorithm for optimizing neural networks could enable people to train models that generate Deepfakes faster.
        \item The authors should consider possible harms that could arise when the technology is being used as intended and functioning correctly, harms that could arise when the technology is being used as intended but gives incorrect results, and harms following from (intentional or unintentional) misuse of the technology.
        \item If there are negative societal impacts, the authors could also discuss possible mitigation strategies (e.g., gated release of models, providing defenses in addition to attacks, mechanisms for monitoring misuse, mechanisms to monitor how a system learns from feedback over time, improving the efficiency and accessibility of ML).
    \end{itemize}
    
\item {\bf Safeguards}
    \item[] Question: Does the paper describe safeguards that have been put in place for responsible release of data or models that have a high risk for misuse (e.g., pretrained language models, image generators, or scraped datasets)?
    \item[] Answer: \answerNA{} %
    \item[] Justification: No models nor data is associated to this paper.
    \item[] Guidelines:
    \begin{itemize}
        \item The answer NA means that the paper poses no such risks.
        \item Released models that have a high risk for misuse or dual-use should be released with necessary safeguards to allow for controlled use of the model, for example by requiring that users adhere to usage guidelines or restrictions to access the model or implementing safety filters. 
        \item Datasets that have been scraped from the Internet could pose safety risks. The authors should describe how they avoided releasing unsafe images.
        \item We recognize that providing effective safeguards is challenging, and many papers do not require this, but we encourage authors to take this into account and make a best faith effort.
    \end{itemize}

\item {\bf Licenses for existing assets}
    \item[] Question: Are the creators or original owners of assets (e.g., code, data, models), used in the paper, properly credited and are the license and terms of use explicitly mentioned and properly respected?
    \item[] Answer: \answerNA{} %
    \item[] Justification: No assets were used in this paper.
    \item[] Guidelines:
    \begin{itemize}
        \item The answer NA means that the paper does not use existing assets.
        \item The authors should cite the original paper that produced the code package or dataset.
        \item The authors should state which version of the asset is used and, if possible, include a URL.
        \item The name of the license (e.g., CC-BY 4.0) should be included for each asset.
        \item For scraped data from a particular source (e.g., website), the copyright and terms of service of that source should be provided.
        \item If assets are released, the license, copyright information, and terms of use in the package should be provided. For popular datasets, \url{paperswithcode.com/datasets} has curated licenses for some datasets. Their licensing guide can help determine the license of a dataset.
        \item For existing datasets that are re-packaged, both the original license and the license of the derived asset (if it has changed) should be provided.
        \item If this information is not available online, the authors are encouraged to reach out to the asset's creators.
    \end{itemize}

\item {\bf New Assets}
    \item[] Question: Are new assets introduced in the paper well documented and is the documentation provided alongside the assets?
    \item[] Answer: \answerNA{} %
    \item[] Justification: This paper does not release new assets.
    \item[] Guidelines:
    \begin{itemize}
        \item The answer NA means that the paper does not release new assets.
        \item Researchers should communicate the details of the dataset/code/model as part of their submissions via structured templates. This includes details about training, license, limitations, etc. 
        \item The paper should discuss whether and how consent was obtained from people whose asset is used.
        \item At submission time, remember to anonymize your assets (if applicable). You can either create an anonymized URL or include an anonymized zip file.
    \end{itemize}

\item {\bf Crowdsourcing and Research with Human Subjects}
    \item[] Question: For crowdsourcing experiments and research with human subjects, does the paper include the full text of instructions given to participants and screenshots, if applicable, as well as details about compensation (if any)? 
    \item[] Answer: \answerNA{} %
    \item[] Justification: This paper does not involve crowdsourcing nor research with human subjects.
    \item[] Guidelines:
    \begin{itemize}
        \item The answer NA means that the paper does not involve crowdsourcing nor research with human subjects.
        \item Including this information in the supplemental material is fine, but if the main contribution of the paper involves human subjects, then as much detail as possible should be included in the main paper. 
        \item According to the NeurIPS Code of Ethics, workers involved in data collection, curation, or other labor should be paid at least the minimum wage in the country of the data collector. 
    \end{itemize}

\item {\bf Institutional Review Board (IRB) Approvals or Equivalent for Research with Human Subjects}
    \item[] Question: Does the paper describe potential risks incurred by study participants, whether such risks were disclosed to the subjects, and whether Institutional Review Board (IRB) approvals (or an equivalent approval/review based on the requirements of your country or institution) were obtained?
    \item[] Answer: \answerNA{} %
    \item[] Justification: This paper does not involve crowdsourcing nor research with human subjects.
    \item[] Guidelines:
    \begin{itemize}
        \item The answer NA means that the paper does not involve crowdsourcing nor research with human subjects.
        \item Depending on the country in which research is conducted, IRB approval (or equivalent) may be required for any human subjects research. If you obtained IRB approval, you should clearly state this in the paper. 
        \item We recognize that the procedures for this may vary significantly between institutions and locations, and we expect authors to adhere to the NeurIPS Code of Ethics and the guidelines for their institution. 
        \item For initial submissions, do not include any information that would break anonymity (if applicable), such as the institution conducting the review.
    \end{itemize}

\end{enumerate}

\end{document}